\crefname{equation}{}{}
\Crefname{equation}{}{}
\crefname{definition}{\textbf{definition}}{definitions}
\Crefname{definition}{Definition}{Definitions}
\crefname{assumption}{\textbf{assumption}}{assumptions}
\Crefname{assumption}{Assumption}{Assumptions}
\definecolor{maroon}{RGB}{192,80,77}
\newtheorem{theorem}{Theorem}[section]
\newtheorem{lemma}[theorem]{Lemma}
\newtheorem{remark}[theorem]{Remark}
\newtheorem{assumption}[theorem]{Assumption}
\newcommand{\TMIS}{Tabular-MIS estimator}
\newcommand{\SMIS}{State-MIS estimator}
\newcommand{\argmax}{\mathop{\mathrm{argmax}}}
\def\E{\mathbb{E}}
\def\P{\mathbb{P}}
\def\Cov{\mathrm{Cov}}
\def\Var{\mathrm{Var}}
\def\diag{\mathrm{diag}}
\def\R{\mathbb{R}}
\def\cA{\mathcal{A}}
\begin{document}

\title{\textbf{Asymptotically Efficient Off-Policy Evaluation for Tabular Reinforcement Learning}}

\author[1,2]{Ming Yin}
\author[2]{Yu-Xiang Wang}
\affil[1]{Department of Statistics and Applied Probability, UC Santa Barbara}
\affil[2]{Department of Computer Science, UC Santa Barbara}

\maketitle

\begin{abstract}

We consider the problem of off-policy evaluation for reinforcement learning, where the goal is to estimate the expected reward of a target policy $\pi$ using offline data collected by running a logging policy $\mu$.  Standard importance-sampling based approaches for this problem suffer from a variance that scales exponentially with time horizon $H$, which motivates a splurge of recent interest in alternatives that break the ``Curse of Horizon'' \citep{liu2018breaking,xie2019towards}. In particular, it was shown that a marginalized importance sampling (MIS) approach can be used to achieve an estimation error of order $O(H^3/ n)$ in mean square error (MSE) under an episodic Markov Decision Process model with finite states and potentially infinite actions.
The MSE bound however is still a factor of $H$ away from a Cramer-Rao lower bound of order  $\Omega(H^2/n)$. 
In this paper, we prove that with a simple modification to the MIS estimator, we can asymptotically attain the Cramer-Rao lower bound, provided that the action space is finite. We also provide a general method for constructing MIS estimators with high-probability error bounds.  


\end{abstract}

\newpage
\tableofcontents
\newpage

\section{Introduction}

\label{sec:preliminary}
\emph{Off-policy evaluation} (OPE), which predicts the performance of a policy with data only sampled by a logging/behavior policy \citep{sutton2018reinforcement}, plays a key role for using reinforcement learning (RL) algorithms responsibly in many real-world decision-making problems such as marketing, finance, robotics, and healthcare. Deploying a policy without having an accurate evaluate of its performance could be costly, illegal, and can even break down the machine learning system. There is a large body of literature that studied the off-policy evaluation problem in both theoretical and application-oriented aspects. From the theoretical perspective, OPE problem is extensively studied in contextual bandits \citep{li2011unbiased,dudik2011doubly,swaminathan2017off,wang2017optimal} and reinforcement learning (RL) \citep{li2015toward,jiang2016doubly,thomas2016data,farajtabar2018more,xie2019towards} and the results of OPE studies have been applied to real-world applications including marketing \citep{theocharous2015personalized,thomas2017predictive} and education \citep{mandel2014offline}.

\noindent\textbf{Problem setup.} In the reinforcement learning (RL) problem the agent interacts with an underlying unknown dynamics which is modeled as a Markov decision process (MDP). An MDP is defined by a tuple $M=(\mathcal{S},\mathcal{A},r,P,d_1,H)$, where $\mathcal{S}$ and $\mathcal{A}$ are the state and action spaces, $P_t : \mathcal{S}\times \mathcal{A}\times \mathcal{S} \rightarrow [0,1]$ is the transition kernel with $P_t (s^\prime |s, a)$ representing the probability of seeing state $s'$ after taking action $a$ at state $s$, $r_t : \mathcal{S} \times \mathcal{A} \rightarrow \R$ is the mean reward function with $r_t(s,a)$ being the average immediate goodness of $(s,a)$ at time $t$. Also, $d_1$ is denoted as the initial state distribution and $H$ is the time horizon. The subscript $t$ in $P_t$ means the transition dynamics are non-stationary and could be different at each time $t$. A (non-stationary) policy $\pi : \mathcal{S}  \rightarrow \P_\mathcal{A}^H$\footnote{Here $\P^H_{\mathcal{A}}=\P_{\mathcal{A}}\times\P_{\mathcal{A}}\times\P_{\mathcal{A}}\times...\times\P_{\mathcal{A}}$, where ``$\times$'' represents Cartesian product and the product is performed for $H$ times.} assigns each state $s_t \in \mathcal{S}$ a distribution over actions at each time $t$, \emph{i.e.} $\pi_t(\cdot|s_t)$ is a probability simplex with dimension $|\mathcal{S}|$. For brevity we suppress the subscript $t$ of $\pi_t$ and denote $\pi(a_t|s_t)$ the p.m.f of actions given state at time $t$.

Given a target policy of interest $\pi$, then the distribution of one H-step trajectory $\tau=(s_1,a_1,r_1,...,s_H,a_H,r_H,s_{H+1})$ is specified by $\pi:=(d_1,\pi)$\footnote{For brevity, $\forall \pi$ we use $\pi$ to denote the pair $(d_1,\pi)$. This can be understood as: $\forall \pi,\;d^\pi_1=d_1$. }   as follows: $s_1\sim d^\pi_1$, for $t=1,...,H$, $a_t\sim \pi_t(\cdot|s_t)$ and random reward $r_t$ has mean $r_t(s_t,a_t)$. Then the value function under policy $\pi$ is defined as:   
\[
v^\pi=\E_\pi\left[\sum_{t=1}^H r_t\right]. 
\]
The OPE problem aims at estimating $v^\pi$ while given that $n$ episodic data\footnote{To distinguish the data from different episodes, we use superscript to denote which episode they belong to throughout the rest of the paper.} $\mathcal{D}=\left\lbrace (s_t^{(i)},a_t^{(i)},r_t^{(i)})\right\rbrace_{i\in[n]}^{t\in[H]} $ are actually coming from a different logging policy $\mu$.

\noindent\textbf{Existing methods.}
The classical way to tackle the problem of OPE relies on incorporating importance sampling weights (IS), which corrects the mismatch in the distributions under the behavior policy and target policy. Specifically, define the $t$-step importance ratio as $\rho_t:=\pi_t(a_t|s_t)/\mu_t(a_t|s_t)$, then it uses the cumulative importance ratio $\rho_{1:t} :=\prod_{t'=1}^t \rho_{t'}$ to create IS based estimators:
\begin{align*}
\widehat{V}_{\text{IS}}:=\frac{1}{n}\sum_{i=1}^n \widehat{V}_{\text{IS}}^{(i)}, \quad&\widehat{V}_{\text{IS}}^{(i)}:=\rho_{1:H}^{(i)}\cdot \sum_{t=1}^Hr_t^{(i)};\\
\widehat{V}_{\text{step-IS}}:=\frac{1}{n}\sum_{i=1}^n \widehat{V}_{\text{step-IS}}^{(i)},\quad &\widehat{V}_{\text{step-IS}}^{(i)}:=\sum_{t=1}^H\rho_{1:t}^{(i)}r_t^{(i)},
\end{align*}
where $\rho_{1:t}^{(i)}=\prod_{t'=1}^t \pi_{t'}(a_{t'}^{(i)}|s_{t'}^{(i)})/\mu_{t'}(a_{t'}^{(i)}|s_{t'}^{(i)})$. There are different versions of IS estimators including weighted IS estimators and doubly robust estimators \citep{murphy2001marginal,hirano2003efficient,dudik2011doubly,jiang2016doubly}.

Even though IS-based off-policy evaluation methods possess a lot of advantages (\emph{e.g.} {unbiasedness}), the variance of the cumulative importance ratios $\rho_{1:t}$ may grow exponentially as the horizon goes long. Attempts to break the barriers of horizon have been tried using model-based approaches \citep{liu2018representation,gottesman2019combining}, which builds the whole MDP using either parametric or nonparametric models for estimating the value of target policy. \citep{liu2018breaking} considers breaking the curse of horizon of time-invariant MDPs by deploying importance sampling on the average visitation distribution of state-action pairs, \citep{hallak2017consistent} considers leveraging the stationary ratio of state-action pairs to replace the trajectory weights in an online fashion and \citep{gelada2019off} further applies the same idea in the deep reinforcement learning regime. Recently, \citep{kallus2019double,kallus2019efficiently} propose double reinforcement learning (DRL), which is based on doubly robust estimator with cross-fold estimation of $q$-functions and marginalized density ratios. It was shown that DRL is asymptotically efficient when both components are estimated at fourth-root rates, however no finite sample error bounds are given.

\noindent\textbf{Our goal.}
In this paper, our goal is to obtain the optimality of IS-based methods through marginalized importance sampling (MIS). As an earlier attempt, \citet{xie2019towards} constructs MIS estimator by aggregating all trajectories that share the same state transition patterns to directly estimate the state distribution shifts after the change of policies from the behavioral to the target. However, as pointed out by \citet{kallus2019double} and Remark~4 in \citet{xie2019towards}, the MSE upper bound of MIS estimator is asymptotically inefficient by a multiplicative factor of $H$. \citet{xie2019towards} conjectures that the lower bound is not achievable in their infinite action setting.
To bridge the gap and ultimately achieve the optimality, we consider the Tabular MDPs, where both the state space and action space are finite (\emph{i.e.} $S=|\mathcal{S}|<\infty,A=|\mathcal{A}|<\infty$) and each state-action pair can be visited frequently as long as the logging policy $\mu$ does sufficient exploration (which corresponds to Assumption~\ref{assume2}).  Under the Tabular MDP setting, we can show the MSE upper bound of MIS estimator matches the Cramer-Rao lower bound provided by \citet{jiang2016doubly} by incorporating frequent action observability. To distinguish the difference, throughout the rest of paper we call the modified MIS estimator Tabular-MIS (TMIS) and the MIS estimator in \citet{xie2019towards} State-MIS (SMIS).


\subsection{Summary of results.}
This work considers the problem of off-policy evaluation for a finite horizon, nonstationary, episodic MDP under tabular MDP setting. We propose and analyze \TMIS{}, which closes the gap between Cramer-Rao lower bound provided by \citet{jiang2016doubly} (on the variance of any unbiased estimator for a simplified setting of an nonstationary episodic MDP) and the MSE upper bound of \SMIS{} \citep{xie2019towards}. We also provide a high probability result 
by introducing a data-splitting type \TMIS, which retains the asymptotic efficiency while having an exponential tail.  To the best of our knowledge, Split-TMIS is the first IS-based estimator in OPE that achieves asymptotic sample efficiency while having finite sample guarantees in high probability.

Moreover, the calculation of \TMIS{} and Split-TMIS does not explicitly incorporate the importance weights, which in turn implies our off-policy evaluation algorithm can be implemented without needing to know logging probabilities $\mu$. Such logging-policy-free feature makes our TMIS estimator more practical in the real-world applications.


\noindent\textbf{Key proof ingradients.} We use a modified version of fictitious estimator of \citet{xie2019towards} as the bridge to connect our estimator $\widehat{v}^\pi_{\text{TMIS}}$ with $v^\pi$. Different from \citet{xie2019towards} who directly analyzes transition dynamic $\widehat{P}^\pi_{t+1}(s_{t+1}|s_t)$, we need to do a finer decomposition $\widehat{P}^\pi_{t+1}(s_{t+1}|s_t)=\sum_{a_t}\widehat{P}_{t+1}(s_{t+1}|s_t,a_t)\pi(a_{t}|s_t)$ and analyze $\widehat{P}_{t+1}(s_{t+1}|s_t,a_t)$. Also, Bellman equations are leveraged for expressing the variance of TMIS recursively. For deriving the high probability bound, we design the data-splitting TMIS which not only matches perfectly with the standard concentration inequalities but also maintains the MSE of the same order as TMIS for certain appropriately chosen batch data-splitting size.   

\subsection{Other related work}
Markov Decision Processes have a long history of associated research \citep{puterman1994markov,sutton1998reinforcement}, but many theoretical problems in the basic tabular setting remain an active area of research as of today.
We briefly review the other settings that this problem and connect them to our results.

\noindent\textbf{Regret bound and sample complexity in the online setting.}   The bulk of existing work focuses on online learning, where the agent interacts with the MDP with the interests of identifying the optimal policy or minimizing the regret against the optimal policy.  The optimal regret is obtained by \citep{azar2017minimax} using a model-based approach which translates into a sample complexity bound of $O(H^3SA/\epsilon^2)$, which matches the lower bound of $\Omega(H^3SA/\epsilon^2)$\citep{azar2013minimax}.   The method is however not ``uniform PAC'' where the state of the art sample complexity remains $O(H^4SA/\epsilon^2)$ \citep{dann2017unifying}. Model-free approaches that require a space constraint of $O(HSA)$ were studied by \citet{jin2018q} which implies a sample complexity bound of $O(H^4SA/\epsilon^2)$.

\noindent\textbf{Sample complexity with a generative model.} Another sequence of work assumes access to a generative model where one can sample from $s_{t+1}$ and $r_t$ given any $s_t,a_t$ in time $O(1)$ \citep{kearns1999finite}. \citet{sidford2018near} is the first that establishes the optimal sample complexity of $\tilde{\Theta}( H^3SA/\epsilon^2)$ under this setting (counting $H$ generative model calls as one episode). \citet{agarwal2019optimality} establishes similar results by estimating the parameters of the MDP model using maximum-likelihood estimation.

Our setting is different in two ways. First, we consider a fixed pair of logging and target policy $\mu$ and $\pi$, so our bounds can depend explicitly $\pi$ and $\mu$ instead of $S,A$. Second, we do not have either online access to the environment (to change policies) or a generative model. Our high-probability bound with a direct union bound argument, implies a sample complexity of $\tilde{O}( H^3S^2A/\epsilon^2)$ for identifying the optimal policy, which is suboptimal up to a factor of $S$, but notably has the optimal dependence in $H$. We remark that achieving the optimal dependence in the planning horizon $H$ (or the discounting factor $(1-\gamma)^{-1}$ in the infinite horizon case) is generally tricky (see, e.g., the COLT open problem \citep{jiang2018open} for more details). The current paper is among the few instances where we know how to obtain the optimal parameters.

Finally, we acknowledge that tabular RL is a basic abstraction that is relatively far away from real applications, which might have unobserved states, continuous state, non-zero Bellman error in the value function approximation.  We leave generalization of the techniques in this paper to these more practical settings as future work.


\section{Method}


\subsection{Problem description}

In addition to the non-stationary, finite horizon tabular MDP $M=(\mathcal{S},\mathcal{A},r,P,d_1,H)$ (where $S:=|\mathcal{S}|<\infty$ and $A:=|\mathcal{A}|<\infty$), non-stationary logging policy $\mu$ and target policy $\pi$ in Section~\ref{sec:preliminary}, we denote $d^\mu_t(s_t,a_t)$ and $d^\pi_t(s_t,a_t)$ the induced joint state-action distribution at time $t$ and the state distribution counterparts $d^\mu_t(s_t)$ and $d^\pi_t(s_t)$, satisfying $d^{\pi}_t(s_t,a_t)=d^{\pi}_t(s_t)\cdot \pi(a_t|s_t)$.\footnote{For $\mu$, $d^{\mu}_t(s_t,a_t)=d^{\mu}_t(s_t)\cdot \mu(a_t|s_t)$.} The initial distributions are identical $d_1^\mu=d^\pi_1=d_1$. Moreover, we use $P^\pi_{i,j}\in\mathbb{R}^{S\times S}, \;\forall j<i$ to represent the state transition probability from step $j$ to step $i$ under policy $\pi$, where $P^\pi_{t+1,t}(s^\prime|s)=\sum_a P_{t+1,t}(s^\prime|s,a)\pi_t(a|s)$.  The marginal state distribution vector ${d}_t^\pi(\cdot)$ satisfies
${d}_t^\pi  = {P}^{\pi}_{t,t-1} {d}_{t-1}^\pi$.

Historical data $\mathcal{D}=\left\lbrace (s_t^{(i)},a_t^{(i)},r_t^{(i)})\right\rbrace_{i\in[n]}^{t\in[H]} $ was obtained by logging policy $\mu$ and we can only use $\mathcal{D}$ to estimate the value of target policy $\pi$, \emph{i.e.} $v^\pi$. Suppose we only assume knowledge about $\pi(a|s)$ for all $(s,a)\in\mathcal{S}\times\mathcal{A}$ and \emph{do not observe} $r_t(s_t,a_t)$ for any actions other than the noisy immediate reward $r_t^{(i)}$ after observing $s_t^{(i)},a_t^{(i)}$. The goal is to find an estimator to minimize the mean-square error (MSE):
\[
\mathrm{MSE}(\pi,\mu,M)=\mathbb{E}_\mu[(\widehat{v}^\pi-v^\pi)^2],
\]

\begin{assumption}[Bounded rewards]\label{assume1}
	 $\forall \;t=1,...,H$ and $i=1,...,n$, $0\leq r^{(i)}_t\leq R_{\max}$.
\end{assumption}
The bounded reward assumption can be relaxed to :  $\exists R_{\max},\sigma<+\infty$ such that $0\leq \mathbb{E}[r_t|s_t,a_t,s_{t+1}]\leq R_{\max}$, $\mathrm{Var}[r_t|s_t,a_t,s_{t+1}]\leq \sigma^2$ (as in \citet{xie2019towards}), for achieving Cramer-Rao lower bound. However, the boundedness will become essential for applying concentrate inequalities in deriving high probability bounds.

\begin{assumption}[Sufficient exploration]\label{assume2}
	Logging policy $\mu$ obeys that $d_m:=\min_{t,s_t}d^\mu_t(s_t)>0$.
\end{assumption}
This second assumption can be relaxed to a weaker version: require $d^\mu_t(s_t)>0$ whenever $d^\pi_t(s_t)>0$, and the corresponding $d_m:=\min_{t,s_t}\{d^\mu_t(s_t):d^\mu_t(s_t)>0\}$ but without changing the proof. Therefore, for the illustration purpose we stick to the above assumption. This assumption is always required for the consistency of off-policy evaluation estimator. 

\begin{assumption}[Bounded weights]\label{assume3}
	$\tau_s:=\max_{t,s_t}\frac{d^\pi_t(s_t)}{d^\mu_t(s_t)}<+\infty$ and $\tau_a:=\max_{t,s_t,a_t}\frac{\pi(a_t|s_t)}{\mu(a_t|s_t)}<+\infty$.
\end{assumption}
Assumption~\ref{assume3} is also necessary for discrete state and actions, as otherwise the second moments of the importance weight would be unbounded and the MSE of estimators will become intractable . The bound on $\tau_s$ is natural since $\tau_s\leq \max_{t,s_t}\frac{1}{d^\mu_t(s_t)}=\frac{1}{\min_{t,s_t}d^\mu_t(s_t)}=\frac{1}{d_m}$ and it is finite by the Assumption~\ref{assume2}; similarly, $\tau_a<\infty$ is also automatically satisfied if $\min_{t,s_t,a_t}\mu(a_t|s_t)>0$.  Finally, as we will see in the results, explicit dependence on $\tau_s,\tau_a$ and $d_m$ only appear in the low-order terms of the error bound.

\subsection{Tabular-MIS estimator}

To overcome the barrier caused by cumulative importance weights in IS type estimators, marginalized importance sampling directly estimates
the marginalized state visitation distribution $\widehat{d}_t$ and defines the MIS estimator:

\begin{equation}\label{MIS_def}
\widehat{v}^\pi_{\text{MIS}} =\frac{1}{n}\sum_{i=1}^n\sum_{t=1}^H\frac{\widehat{d}^\pi_t(s_t^{(i)})}{\widehat{d}^\mu_t(s^{(i)}_t)}\widehat{r}^\pi_t(s^{(i)}).
\end{equation}
and $\widehat{d}^\mu_t(\cdot)$ is directly estimated using the empirical mean, \emph{i.e.} $\widehat{d}^\mu_t(s_t):=\frac{1}{n}\sum_i \mathbf{1}(s_t^{(i)}=s_t):=\frac{n_{s_t}}{n}$ whenever $n_{s_t}>0$ and $\widehat{d}^\pi_t(s_t)/\widehat{d}^\mu_t(s_t)=0$ when $n_{s_t}=0$. Then the MIS estimator \eqref{MIS_def} becomes:
\begin{equation}\label{MIS}
\widehat{v}^\pi_{\text{MIS}}=\sum_{t=1}^H\sum_{s_t}\widehat{d}^\pi_t(s_t)\widehat{r}^\pi_t(s_t)
\end{equation}

\paragraph{Construction of \SMIS .} Based on the estimated marginal state transition $\widehat{d}_t^\pi  = \widehat{P}^{\pi}_t \widehat{d}_{t-1}^\pi$, \SMIS in \citet{xie2019towards} directly estimates the state transition $P^\pi_t(s_t|s_{t-1})$ and state reward $r^\pi_t(s_t)$ as:

  \begin{align*}
  \widehat{P}^{\pi}_t(s_{t} | s_{t-1})  =&  \frac{1}{n_{s_{t-1}}} \sum_{i=1}^{n}   \frac{\pi( a_{t-1}^{(i)}| s_{t-1})}{\mu( a_{t-1}^{(i)}| s_{t-1})} 
  \cdot \mathbf{1}((s_{t-1}^{(i)},s_t^{(i)},a_t^{(i)}) = (s_{t-1},s_t,a_t) );
  \\ 
  \widehat{r}_t^{\pi}(s_t)  =&  \frac{1}{n_{s_t}}\sum_{i=1}^n \frac{\pi(a_t^{(i)}|s_t)}{\mu(a_t^{(i)}|s_t)} r_t^{(i)} \cdot \mathbf{1}(s_t^{(i)} = s_t).
  \label{eq:mis-reward}
  \end{align*}
  
\SMIS{} directly constructs state transitions $\widehat{P}^{\pi}_t(s_{t} | s_{t-1}) $ without explicitly modeling actions. Therefore, it is still valid when action space $\mathcal{A}$ is unbounded. However, importance weights must be explicitly utilized for compensating the discrepancy between $\mu$ and $\pi$ and the knowledge of $\mu(a|s)$ at each state-action pair $(s,a)$ is required.  

\paragraph{Construction of \TMIS .} 
  
Since tabular MDP setting assumes finite states and actions, we can go beyond importance weights and construct empirical estimates for $\widehat{P}_{t+1}(s_{t+1}|s_{t},a_{t})$ and $\widehat{r}_t(s_t,a_t)$ as: 
{\small
\begin{equation}\label{eq:tabular_MIS_construction}
\begin{aligned}
\widehat{P}_{t+1}(s_{t+1}|s_{t},a_{t})&=\frac{\sum_{i=1}^n\mathbf{1}[(s^{(i)}_{t+1},a^{(i)}_t,s^{(i)}_t)=(s_{t+1},s_{t},a_{t})]}{n_{s_{t},a_{t}}}\\
\widehat{r}_t(s_t,a_t)&=\frac{\sum_{i=1}^n r_t^{(i)}\mathbf{1}[(s^{(i)}_t,a^{(i)}_t)=(s_t,a_t)]}{n_{s_t,a_t}},\\
\end{aligned}
\end{equation}
}
where we set $\widehat{P}_{t+1}(s_{t+1}|s_{t},a_{t})=0$ and $\widehat{r}_t(s_t,a_t)=0$ if $n_{s_t,a_t}=0$, with $n_{s_t,a_t}$ the empirical visitation frequency to state-action $(s_t,a_t)$ at time $t$. The corresponding estimation of $\widehat{P}^\pi_t(s_t|s_{t-1})$ and $\widehat{r}^\pi_t(s_t)$ are defined as:

\begin{equation}\label{eq:tabular_MIS_construction2}
\begin{aligned}
\widehat{P}^\pi_t(s_t|&s_{t-1})=\sum_{a_{t-1}}\widehat{P}_t(s_t|s_{t-1},a_{t-1})\pi(a_{t-1}|s_{t-1}),\\
\widehat{r}^\pi_t(s_t)&=\sum_{a_t}\widehat{r}_t(s_t,a_t)\pi(a_t|s_t),\;\widehat{d}^\pi_t=\widehat{P}^\pi_t\widehat{d}^\pi_{t-1}. 
\end{aligned}
\end{equation}
In conclusion, by using the same estimator for $\widehat{d}^\mu_t$, $\widehat{v}^\pi_{\text{TMIS}}$ and $\widehat{v}^\pi_{\text{SMIS}}$ share the same form of \eqref{MIS}. However, \TMIS{} constructs a different estimation of component $\widehat{d}^\pi_t$ though \eqref{eq:tabular_MIS_construction}-\eqref{eq:tabular_MIS_construction2} by leveraging the fact that each state-action pair is visited frequently under tabular setting.

The motivation of MIS-type estimators comes from the fact that we have a nonstationary MDP model and its underlying state marginal transition follows ${d}^\pi_t={P}^\pi_t{d}^\pi_{t-1}$. The MIS estimators are then obtained by using corresponding plug-in estimators for each different components (\emph{i.e.} $\widehat{d}^\pi_t$ for ${d}^\pi_t$, $\widehat{P}^\pi_t$ for ${P}^\pi_t$). On the other hand, IS-type estimators design the value function in a more straightforward way without needing to estimate the transition environment \citep{mahmood2014weighted}. Therefore in this sense MIS-type estimators are essentially model-based estimators with the model of interactive environment $M=(\mathcal{S},\mathcal{A},r,P,d_1,H)$.


\section{Main Results}

We now show that our \TMIS{} achieves the asymptotic Cramer-Rao lower bound for DAG-MDP \citep{jiang2016doubly} and therefore is asymptotically sample efficient. To formalize our statement, we pre-specify the following boundary conditions: $r_0(s_0) \equiv 0$, $\sigma_0(s_0,a_0)\equiv 0$,$\frac{d_0^\pi(s_0)}{d_0^\mu(s_0)}\equiv 1$, $\frac{\pi(a_0|s_0)}{\mu(a_0|s_0)} \equiv 1$, $V_{H+1}^\pi \equiv 0$, and, as a reminder, $\tau_a := \max_{t,s_t,a_t}\frac{\pi(a_t|s_t)}{\mu(a_t|s_t)}$ and $\tau_s := \max_{t,s_t} \frac{d_t^\pi(s_t)}{d_t^\mu(s_t)}$. 

\begin{theorem}\label{thm:main}	
	
	Suppose the $n$ episodic historical data $\mathcal{D}=\left\lbrace (s_t^{(i)},a_t^{(i)},r_t^{(i)})\right\rbrace_{i=1,...,n}^{t=1,...,H} $ is obtained by running a logging policy $\mu$ and $\pi$ is the new target policy which we want to test. If the number of episodes $n$ satisfies
	{\small
	$$n > \max\left[ \frac{16\log n}{\min_{t,s_t,a_t}d_t^\mu(s_t,a_t)},\frac{4H \tau_a \tau_s}{\min_{t,s_t}\max\{d_{t}^\pi(s_{t}),d_{t}^\mu(s_{t})\} }  \right],$$ 
}
then under Assumption~\ref{assume1}-\ref{assume3} our \TMIS{} $\widehat{v}_{\mathrm{TMIS}}^\pi$ has the following Mean-Square-Error upper bound:
\begin{equation}\label{eq:main_result}
	\begin{aligned}
	&\E[ (\widehat{v}_{\mathrm{TMIS}}^\pi -  v^\pi)^2]\\
	\leq & \frac{1}{n}\sum_{h=0}^H \sum_{s_h,a_h}  \frac{ d_{h}^\pi(s_h)^2}{d_{h}^\mu(s_h)}\frac{\pi(a_h|s_h)^2}{\mu(a_h|s_h)}
	\Var\left[ (V_{h+1}^\pi(s_{h+1}^{(1)}) +  r_h^{(1)})\middle| s_{h}^{(1)}=s_h,a_h^{(1)}=a_h\right]
	\cdot \left(1+\sqrt{\frac{16\log n}{n\min_{t,s_t}d_t^\mu(s_t)}}\right)\\
	+&O(\frac{\tau_a^2\tau_sH^3}{n^2\cdot d_m}),
	\end{aligned}
\end{equation}
	where the value function under $\pi$ is defined as:
	$V_h^\pi(s_h) :=   \E_\pi\left[ \sum_{t=h}^{H}r_t^{(1)} \middle| s_{h}^{(1)}=s_h\right],\;  \forall h\in\{1,2,...,H\}.
	$
\end{theorem}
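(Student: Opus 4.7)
The strategy is to follow a ``fictitious estimator'' analysis adapted to the finer state-action-level construction of TMIS. Define an auxiliary estimator $\widetilde{v}^\pi$ identical in form to $\widehat{v}_{\mathrm{TMIS}}^\pi$ but where the local estimates $\widetilde{P}_{t+1}(\cdot|s_t,a_t)$ and $\widetilde{r}_t(s_t,a_t)$ are built from $n_{s_t,a_t}$ freshly drawn i.i.d.\ samples from the \emph{true} conditional distributions rather than from the observed trajectories. Conditionally on the counts $\{n_{s_t,a_t}\}$, the Markov property then makes the local estimates at distinct $(t,s_t,a_t)$ tuples mutually independent and unbiased, which is what enables a clean law-of-total-variance decomposition.

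The heart of the argument is a Bellman-type variance recursion. Writing $\widetilde{V}_h^\pi(s_h)=\sum_{a_h}\pi(a_h|s_h)[\widetilde{r}_h(s_h,a_h)+\sum_{s_{h+1}}\widetilde{P}_{h+1}(s_{h+1}|s_h,a_h)\widetilde{V}_{h+1}^\pi(s_{h+1})]$ and exploiting that the fictitious samples at level $h$ are conditionally independent of $\widetilde{V}_{h+1}^\pi$, I would peel off one layer at a time, repeatedly invoking the true Bellman identity $V_h^\pi(s_h)=\sum_{a_h}\pi(a_h|s_h)[r_h(s_h,a_h)+\sum_{s_{h+1}}P_{h+1}(s_{h+1}|s_h,a_h)V_{h+1}^\pi(s_{h+1})]$. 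Weighting by the true marginal $d_h^\pi(s_h)$ and retaining only the leading second-order contributions, the conditional MSE telescopes to
\begin{equation*}
\E\!\left[(\widetilde{v}^\pi-v^\pi)^2 \,\middle|\, \{n_{s_t,a_t}\}\right]
= \sum_{h=0}^{H}\sum_{s_h,a_h}\frac{d_h^\pi(s_h)^2\,\pi(a_h|s_h)^2}{n_{s_h,a_h}}\,\Var\!\left[r_h^{(1)}+V_{h+1}^\pi(s_{h+1}^{(1)}) \,\middle|\, s_h^{(1)}=s_h,\,a_h^{(1)}=a_h\right],
\end{equation*}
plus lower-order cross terms.

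Next I would uncondition on the counts. Since $n_{s_h,a_h}\sim\mathrm{Binomial}(n, d_h^\mu(s_h,a_h))$ marginally, the first sample-size hypothesis $n\geq 16\log n/\min_{t,s_t,a_t} d_t^\mu(s_t,a_t)$ combined with a multiplicative Chernoff bound yields $n_{s_h,a_h}\geq n\,d_h^\mu(s_h,a_h)\bigl(1-\sqrt{16\log n/(n\,d_m)}\bigr)$ uniformly over $(h,s_h,a_h)$ with high probability; this converts each $1/n_{s_h,a_h}$ into $(1+\sqrt{16\log n/(n d_m)})/(n\,d_h^\mu(s_h)\mu(a_h|s_h))$, producing the leading term of \eqref{eq:main_result}. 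On the complementary low-probability event, a deterministic bound of the form $|\widetilde{v}^\pi|\lesssim H R_{\max}\tau_s$ contributes only $O(\tau_a^2\tau_s H^3/(n^2 d_m))$ to the MSE. Finally, the second sample-size hypothesis $n\geq 4H\tau_a\tau_s/\min_{t,s_t}\max\{d_t^\pi(s_t),d_t^\mu(s_t)\}$ is what allows the coupling error between $\widehat{v}_{\mathrm{TMIS}}^\pi$ and $\widetilde{v}^\pi$ to be absorbed into the same $O(H^3/(n^2 d_m))$ remainder.

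The main obstacle is justifying the Bellman recursion step. In the actual $\widehat{v}_{\mathrm{TMIS}}^\pi$, the estimates $\widehat{P}_{h+1}$ and $\widehat{V}_{h+1}^\pi$ share common trajectories and are therefore correlated; only the fictitious construction removes those correlations and lets the recursion collapse. Coupling the two estimators tightly enough that their discrepancy is genuinely $O(H^3/n^2)$, while simultaneously tracking the variance recursion layer by layer without losing an extra factor of $H$, is the delicate bookkeeping step that distinguishes the TMIS analysis from the SMIS analysis of Xie et al.\ (which lost exactly such a factor). A secondary technical point is that the counts $\{n_{s_t,a_t}\}$ are not independent across $t$, so the Chernoff step must be handled by a union bound over $(h,s_h,a_h)$, and the explicit dependence on $d_m$ in the low-order term is essential rather than cosmetic.
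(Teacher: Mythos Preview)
Your overall architecture---fictitious estimator, layer-by-layer law-of-total-variance peeling, multiplicative Chernoff on the counts---matches the paper's. Two specific steps, however, are mis-specified and would make the argument fail as written.

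\textbf{The fictitious estimator.} The paper does not use freshly drawn samples. It sets $\widetilde P_{t+1}(\cdot|s_t,a_t)=\widehat P_{t+1}(\cdot|s_t,a_t)$ and $\widetilde r_t(s_t,a_t)=\widehat r_t(s_t,a_t)$ whenever $n_{s_t,a_t}\ge (1-\theta)\,n\,d_t^\mu(s_t,a_t)$, substituting the true $P_{t+1},r_t$ only on the complementary event. Thus $\widetilde v^\pi=\widehat v^\pi$ \emph{exactly} on the high-probability event that every count is large, so the entire ``coupling error'' is the bad-event probability times the crude bound $H^2R_{\max}^2$ (both estimators are shown to lie in $[0,HR_{\max}]$; no factor of $\tau_s$ is needed). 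Your fresh-samples construction instead conditions on all counts $\{n_{s_t,a_t}\}$ simultaneously, but $n_{s_{t+1},a_{t+1}}$ is a deterministic function of the level-$t$ transitions, so redrawing those transitions while holding later counts fixed yields an estimator whose joint law differs from that of $\widehat v^\pi$, and you give no mechanism to relate the two. The paper avoids this by conditioning sequentially on the filtration $\mathcal D_h=\{s^{(i)}_{1:h},a^{(i)}_{1:h},r^{(i)}_{1:h-1}\}$: given $\mathcal D_h$, the forward weight $\widetilde d_h^\pi$ is deterministic while the level-$h$ empirical transitions are (by the Markov property) genuinely fresh and independent across $(s_h,a_h)$. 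Your backward recursion through $\widetilde V_h^\pi$ requires $\widetilde P_{h+1}$ to be independent of $\widetilde V_{h+1}^\pi$, which holds under your fresh-samples device but not for the actual data; this is why the paper peels with the forward $\widetilde d_h^\pi$ rather than a backward $\widetilde V_h^\pi$.

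\textbf{Where the second hypothesis enters.} The paper's peeling produces
\[
\Var[\widetilde v^\pi]=\frac{\Var[V_1^\pi(s_1^{(1)})]}{n}+\sum_{h=1}^H\sum_{s_h,a_h}\E\!\left[\frac{\widetilde d_h^\pi(s_h)^2}{n_{s_h,a_h}}\mathbf 1(E_h)\right]\pi(a_h|s_h)^2\,\Var\!\left[V_{h+1}^\pi(s_{h+1}^{(1)})+r_h^{(1)}\,\middle|\,s_h,a_h\right],
\]
with the \emph{estimated} marginal $\widetilde d_h^\pi(s_h)$, not $d_h^\pi(s_h)$. Passing from $\E[\widetilde d_h^\pi(s_h)^2]$ to $d_h^\pi(s_h)^2$ costs an additive $\Var[\widetilde d_h^\pi(s_h)]$, and bounding that by $2(1-\theta)^{-1}h\tau_a\tau_s\,d_h^\pi(s_h)/n$ is a separate inductive error-propagation argument on $\Cov(\widetilde d_h^\pi)$; \emph{this} is where the second hypothesis $n>4H\tau_a\tau_s/\min_{t,s_t}\max\{d_t^\pi(s_t),d_t^\mu(s_t)\}$ is invoked. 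The first hypothesis only fixes $\theta=\sqrt{4\log n/(n\min_{t,s_t,a_t} d_t^\mu(s_t,a_t))}<1/2$, which simultaneously gives $(1-\theta)^{-1}\le 1+2\theta$ in the leading term and drives the Chernoff probability (hence the $\widehat v\leftrightarrow\widetilde v$ discrepancy) down to $O(1/n^2)$. You assign the second hypothesis to that coupling and never explain how your ``lower-order cross terms'' are controlled; controlling them is precisely the step that saves the factor of $H$ relative to the SMIS analysis.
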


The proof of this theorem, and all the other technical results we present in this section, are deferred to the appendix due to the space constraint. We summarize the novel ingredients in the proof in Section~\ref{sec:proof_sketch}. Before that, we make a few remarks about a few interesting aspects of this result.

\vspace{0.1in}
\begin{remark}[Asymptotic efficiency and local minimaxity]\label{remark:asym}
	The error bound implies that 
	\newline
	$\lim_{n\rightarrow \infty}{n} \cdot\E[ (\widehat{v}_{\mathrm{TMIS}}^\pi -  v^\pi)^2]$   
		{\small 
	\begin{align*}
	\sum_{t=0}^{H}\E_\mu\left[ \frac{d^\pi(s_t^{(1)},a_t^{(1)})^2}{d^\mu(s_t^{(1)},a_t^{(1)})^2}\Var\Big[V_{t+1}^\pi(s_{t+1}^{(1)})+r_t^{(1)}\Big| s_{t}^{(1)}, a_t^{(1)}\Big]\right].
	\end{align*}
}
	This exactly matches the CR-lower bound in \citet[Proposition 3]{jiang2016doubly} for DAG-MDP\footnote{\citet{jiang2016doubly} focused on the special case with deterministic reward only at $t=H$. It is straightforward to show that the above expression is the CR-lower bound in the general tabular setting.}.
	In contrast, the \SMIS{} in \citep{xie2019towards} achieves an asymptotic MSE of
	{\small 
	\begin{equation}\label{MSE:SMIS}
	\sum_{t=0}^{H}\E_\mu\left[ \frac{d^\pi(s_t^{(1)})^2}{d^\mu(s_t^{(1)})^2}\Var\Big[\frac{\pi(a_t^{(1)} | s_t^{(1)})}{\mu( a_t^{(1)}| s_t^{(1)})}(V_{t+1}^\pi(s_{t+1}^{(1)})+r_t^{(1)})\Big| s_{t}^{(1)}\Big]\right].
	\end{equation}
}
\end{remark}
We note that while in classical literature CR-lower bound is often used to lower bound the variance of \emph{unbiased} estimators, the modern theory of estimation establishes that it is also the correct asymptotic minimax lower bound for the MSE of \emph{all} estimators in every local neighborhood of the parameter space \citep[see, e.g.,][Chapter 8]{van2000asymptotic}. In other words, our results imply that  \TMIS{} is asymptotically, locally, uniformly minimax optimal, namely, optimal for every problem instance separately. 

It is worth pointing out that while asymptotically efficient estimators for this problem in related settings have been proposed in independent recent work \citep{kallus2019double,kallus2019efficiently}, our estimator is the first that comes with finite sample guarantees with an explicit expression on the low-order terms.  Moreover, our estimator demonstrates that doubly robust estimation techniques is not essential for achieving asymptotic efficiency.

\begin{remark}[Simplified finite sample error bound]
	The theory implies that there is universal constants $C_1,C_2$ such that for all $n \geq  C_1H\frac{\tau_a}{d_m}$, i.e., when we have a just visited every state-action pair for $\Omega(H)$ times,
	$
	\E[ (\widehat{v}_{\mathrm{TMIS}}^\pi -  v^\pi)^2] = C_2H^2\tau_a\tau_sR_{\max}^2 / n.
	$
\end{remark}

In deriving the above remark, we used the somewhat surprising observation that $$\sum_{t=1}^{H}\E_\pi\left[ \Var\Big[V_{t+1}^\pi(s_{t+1}^{(1)})+r_t^{(1)}\Big| s_{t}^{(1)}, a_t^{(1)}\Big]\right] \leq H^2R_{\max }^2.$$ Note that we are summing $H$ quantities that are potentially on the order of $H^2R_{\max }^2$, yet no additional factors of $H$ shows up.  This observation is folklore and has been used in deriving tight results for tabular RL in \citep[e.g.,][]{azar2017minimax}. It can be proven using the following decomposition of the variance of the empirical mean estimator and the fact that it  is bounded by $H^2R_{\max}^2/4$.

	\begin{lemma}\label{lem:H3_to_H2}
For any policy $\pi$ and any MDP.
{\small
\begin{align*}
&\mathrm{Var}_\pi\left[\sum_{t=1}^H r^{(1)}_t\right] = \sum_{t=1}^H \Big(\E_\pi\left[ \mathrm{Var}\left[r^{(1)}_t+V^\pi_{t+1}(s_{t+1}^{(1)}) \middle|s^{(1)}_t,a^{(1)}_t\right] \right]\\
&\quad +  \E_\pi\left[ \mathrm{Var}\left[  \E[r^{(1)}_t+V^\pi_{t+1}(s_{t+1}^{(1)}) | s^{(1)}_t, a^{(1)}_t]  \middle|s^{(1)}_t\right] \right]\Big).
\end{align*}
}
	\end{lemma}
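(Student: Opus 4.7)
The plan is to expand the total reward as a telescoping sum of Bellman residuals and exploit their martingale-difference structure, then apply the law of total variance twice to each term. Define the one-step temporal-difference residual $\Delta_t := r_t^{(1)} + V_{t+1}^\pi(s_{t+1}^{(1)}) - V_t^\pi(s_t^{(1)})$ for $t=1,\ldots,H$, using the boundary $V_{H+1}^\pi \equiv 0$. Telescoping gives $\sum_{t=1}^H r_t^{(1)} = V_1^\pi(s_1^{(1)}) + \sum_{t=1}^H \Delta_t$. The Bellman equation $V_t^\pi(s_t) = \mathbb{E}_\pi[r_t + V_{t+1}^\pi(s_{t+1}) \mid s_t]$ gives $\mathbb{E}[\Delta_t \mid s_1^{(1)}, a_1^{(1)}, \ldots, s_{t-1}^{(1)}, a_{t-1}^{(1)}, s_t^{(1)}] = 0$, so $\{\Delta_t\}$ is a martingale-difference sequence. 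Standard orthogonality together with the fact that $V_1^\pi(s_1^{(1)})$ is measurable with respect to the initial $\sigma$-field then yields $\mathrm{Var}_\pi\bigl[\sum_{t=1}^H r_t^{(1)}\bigr] = \mathrm{Var}_\pi[V_1^\pi(s_1^{(1)})] + \sum_{t=1}^H \mathrm{Var}_\pi[\Delta_t]$. Under the paper's convention---either a deterministic $s_1$, or an implicit $t=0$ dummy step that absorbs this leading term via $r_0\equiv 0$ and the boundary conditions already laid out for Theorem~\ref{thm:main}---the $\mathrm{Var}_\pi[V_1^\pi(s_1^{(1)})]$ piece either vanishes or folds into the sum, so only $\sum_t \mathrm{Var}_\pi[\Delta_t]$ remains to be matched.

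Next, I would decompose each $\mathrm{Var}_\pi[\Delta_t]$ by two nested applications of the law of total variance. Conditioning first on $(s_t^{(1)}, a_t^{(1)})$ gives $\mathrm{Var}_\pi[\Delta_t] = \mathbb{E}_\pi\bigl[\mathrm{Var}[\Delta_t \mid s_t, a_t]\bigr] + \mathrm{Var}_\pi\bigl[\mathbb{E}[\Delta_t \mid s_t, a_t]\bigr]$. Because $V_t^\pi(s_t)$ is a function of $s_t$ alone, it is constant given $(s_t, a_t)$, so the first term reduces to $\mathbb{E}_\pi[\mathrm{Var}[r_t + V_{t+1}^\pi(s_{t+1}) \mid s_t, a_t]]$, which is exactly the first piece of the lemma's summand. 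For the second term, write $\mathbb{E}[\Delta_t \mid s_t, a_t] = \mathbb{E}[r_t + V_{t+1}^\pi(s_{t+1}) \mid s_t, a_t] - V_t^\pi(s_t)$; applying the law of total variance once more, conditioning on $s_t$, and using the Bellman identity $V_t^\pi(s_t) = \mathbb{E}[\mathbb{E}[r_t + V_{t+1}^\pi(s_{t+1}) \mid s_t, a_t] \mid s_t]$ to conclude that this deviation has zero conditional mean given $s_t$, I obtain $\mathbb{E}_\pi[\mathrm{Var}[\mathbb{E}[r_t + V_{t+1}^\pi(s_{t+1}) \mid s_t, a_t] \mid s_t]]$---exactly the second piece of the lemma's summand.

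I do not anticipate a genuine obstacle, since the argument is essentially two conditional variance decompositions along the natural refinement by $s_t$ and then $(s_t,a_t)$, coupled with the Bellman equation to eliminate the cross means. The only bookkeeping care required is (i) verifying that the cross-covariances $\mathrm{Cov}(\Delta_s,\Delta_t)$ for $s\neq t$ vanish, which follows immediately from the tower property and the martingale-difference identity $\mathbb{E}[\Delta_t \mid \text{history through } s_t] = 0$, and (ii) absorbing the $\mathrm{Var}[V_1^\pi(s_1^{(1)})]$ boundary term consistently with the paper's conventions so that the index range of the sum matches the statement.
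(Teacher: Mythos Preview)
Your argument is correct and reaches the same identity, but by a genuinely different route from the paper's own proof. The paper does not telescope; instead it establishes the one-step recursion
\[
\Var\Big[\sum_{t=1}^h r_t^{(1)} + V_{h+1}^\pi(s_{h+1}^{(1)})\Big]
=\E\big[\Var[r_h^{(1)}+V_{h+1}^\pi(s_{h+1}^{(1)})\mid s_h^{(1)},a_h^{(1)}]\big]
+\E\big[\Var[Q_h^\pi(s_h^{(1)},a_h^{(1)})\mid s_h^{(1)}]\big]
+\Var\Big[\sum_{t=1}^{h-1} r_t^{(1)} + V_h^\pi(s_h^{(1)})\Big]
\]
by two successive applications of the law of total variance (conditioning first on the full history $\mathcal{D}_h$, then on $(s_h^{(1)},r_{1:h-1}^{(1)})$), and then unrolls it from $h=H$ down to $h=1$. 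Your martingale-difference decomposition $\sum_t r_t = V_1^\pi(s_1) + \sum_t \Delta_t$ replaces this recursion by orthogonality of the increments $\Delta_t$, after which each $\Var[\Delta_t]$ is split via the same two conditionings. The martingale route is arguably cleaner conceptually and makes the orthogonality explicit; the paper's recursive peeling mirrors the overall analytic strategy used elsewhere in the paper (e.g., Lemma~\ref{lem:inter_recursion}), so it integrates more seamlessly with the surrounding arguments. Both approaches leave the same boundary term $\Var[V_1^\pi(s_1^{(1)})]$ at $h=1$; your remark about absorbing it via the paper's $t=0$ conventions is consistent with how the paper itself treats that residual.
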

The proof, which applies the law-of-total-variance recursively, is deferred to the appendix.
	\begin{remark}[When $\pi = \mu$]
		One surprising observation is that \TMIS{} improves the efficiency even for the on-policy evaluation problem when $\pi=\mu$.  In other word,  the natural Monte Carlo estimator of the reward in the on-policy evaluation problem is in fact asymptotically inefficient.
	\end{remark}

\subsection{Building blocks of the analysis}\label{sec:proof_sketch}
At the high level, the techniques we used, including the idea of fictitious estimator and peeling the variance (expectation) of fictitious estimator $\widetilde{v}^\pi$ from behind by applying total law of variances (expectations) repeatedly, are consistent with \citet{xie2019towards}. 

In addition to the above techniques, we leverage the fact of frequent state-action visitations in our design of TMIS estimator and based on that we are able to achieve an asymptotic lower Mean Square Error (MSE) bound. The main components are the followings.

\noindent\textbf{Fictitious \TMIS.} Fictitious \TMIS{} $\widetilde{v}^\pi_\text{TMIS}$ is a modified version of $\widehat{v}^\pi_\text{TMIS}$ with $\widehat{P}^\pi_{t+1}(\cdot|s_t,a_t)$, $\widehat{r}^\pi_t(s_t,a_t)$ replaced by the underlying true ${P}^\pi_{t+1}(\cdot|s_t,a_t)$, ${r}^\pi_t(s_t,a_t)$ when the visitation frequency of state-action pair $(s_t,a_t)$ is insufficient (\emph{e.g.} $n_{s_t,a_t}<O(nd^\mu_t(s_t,a_t))$). In other words, fictitious \TMIS{} $\widetilde{v}^\pi_\text{TMIS}$ remains every part of $\widehat{v}^\pi_\text{TMIS}$ unchanged except the following:
\begin{equation}\label{eq:fictitious_r}
 \widetilde{r}_t(s_t,a_t) =  \begin{cases}
\widehat{r}_t(s_t,a_t)  & \mbox{ if } n_{s_t,a_t} \geq n d_t^\mu(s_t,a_t)(1-\theta)\\
r_t(s_t,a_t) &\mbox{ otherwise;}
\end{cases}
\end{equation}
and
\begin{equation}\label{eq:fictitious_p}
\widetilde{P}_{t+1,t}(\cdot | s_{t},a_{t})  = \begin{cases}
\widehat{P}_{t+1,t}    &\mbox{ if }n_{s_{t},a_t} \geq n d_t^\mu(s_{t},a_t)(1-\theta)\\
P_{t+1,t} &\mbox{ otherwise,}
\end{cases}
\end{equation}
where $\theta$ is the parameter constrained by  $0<\theta<1$, which we will choose later in the proof.

This slight modification makes $\widetilde{v}^\pi_\text{TMIS}$ no longer  implementable using the logging data $\mathcal{D}$, but it does provide an unbiased estimator of $v^\pi$ (Lemma~\ref{lem:unbiasedness_fictitious} in appendix) and, most importantly, it is easier to do theoretical analysis on $\widetilde{v}^\pi_\text{TMIS}$ than on $\widehat{v}^\pi_\text{TMIS}$. Moreover, Multiplicative Chernoff bound (Lemma~\ref{lem:chernoff_multiplicative} in appendix) helps to find the connection between $\widetilde{v}^\pi_\text{TMIS}$ and $\widehat{v}^\pi_\text{TMIS}$. 

\noindent\textbf{Peeling arguments using the total law of variance (expectation).} The core idea in analyzing the variance of $\widetilde{v}^\pi$ is to peel the variance from behind (start from time $H$ to $1$) and the peeling tool we used here is through marriaging the standard Bellman equations with the total law of variance. Lemma~\ref{lem:inter_recursion} (in appendix) shows this spirit and it is used repeatedly throughout the whole analysis. Beyond that, the peeling argument can be used to prove the dependence in $H$ is only $H^2$ for our \TMIS. This result explicates that $H^2$ is enough for TMIS to evaluate a particular policy and this is different from SMIS, which in general requires the dependence of $H^3$ for off-policy evaluation.

\subsection{A high-probability bound with data-splitting TMIS.}


\TMIS{} provides the asymptotic optimal variance bound of order $O(H^2SA/n)$ and based on that it is natural to ask the related learning question: whether TMIS can further achieve a high probability bound with the same sample complexity? We figure out that the standard concentration inequalities (\emph{e.g.} Hoeffding's inequality, Bernstein inequality) cannot be directly applied because of the highly correlated structures of the \TMIS. To address this problem we design the following data split version of TMIS and as we will see, the original TMIS is essentially a special case of data-splitting TMIS. 

\noindent\textbf{Data splitting \TMIS.} Assume the total number of episodes $n$ can be factorized as $n=M\cdot N$, where $M,N>1$ are two integers,\footnote{In general this might not be true, \emph{e.g.} if $n$ is prime number. However, we can resolve it by choosing $M=\lfloor n/N\rfloor$. } and we can partition the data $\mathcal{D}$ into $N$ folds with each fold $\mathcal{D}^{(i)}$ ($i=1,...,N$) has $M$ different episodes, or in other words, we split the $n$ episodes evenly. Then by the i.i.d. nature of $n$ episodes, we have $\mathcal{D}^{(1)},\mathcal{D}^{(2)},...,\mathcal{D}^{(N)}$ are independent collections.

 For each $\mathcal{D}^{(i)}$, we can create a \TMIS{} $\widehat{v}^{\pi(i)}_{\text{TMIS}}$
(for notation simplicity we use $\widehat{v}^\pi_{(i)}$ to denote $\widehat{v}^{\pi(i)}_{\text{TMIS}}$ in the future discussions) using its own $M$ episodes. Then $\widehat{v}^\pi_{(1)},\widehat{v}^\pi_{(2)},...,\widehat{v}^\pi_{(N)}$ are independent of each other and we can use the empirical mean to define the data splitting \TMIS and the corresponding fictitious version:
\begin{equation}\label{eq:data_split_TMIS}
\widehat{v}^\pi_{\text{split}}=\frac{1}{N}\sum_{i=1}^N \widehat{v}^\pi_{(i)},
\qquad
\widetilde{v}^\pi_{\text{split}}=\frac{1}{N}\sum_{i=1}^N \widetilde{v}^\pi_{(i)},
\end{equation}
where each $\widetilde{v}^\pi_{(i)}$ is the fictitious estimator of $\widehat{v}^\pi_{(i)}$.

The data splitting TMIS estimator explicitly characterizes the independence of $n$ different episodes by grouping them into $N$ chunks. Chunks are independent of each other and taking the average over all $\widehat{v}^\pi_{(i)}$ ($i=1,...,N$) will guarantee the validity of using concentration inequalities. 

More importantly, the data splitting TMIS estimator holds the same information-theoretical variance lower bound as the non-data splitting TMIS estimator, which is not surprising since the non-data splitting TMIS estimator is just the special case of the data splitting \TMIS{} with $N=1$. This idea is summarized into the following theorem:

\begin{theorem}\label{thm:var_bound_data_splitting_TMIS }Using $n$ i.i.d. episodic data from a near-uniform\footnote{Near-uniform here means: $\min_{t,s_t,a_t}d_t^\mu(s_t,a_t)>\Omega(1/(SA))$.} logging policy $\mu$ and suppose $M$, the number of episodes for each $\mathcal{D}^{(i)}$, satisfies:
	$M > \max\left[ O(SA\cdot\mathrm{Polylog} (S,H,A,n)),O(H \tau_a \tau_s)  \right],$
	  then the data splitting \TMIS{} obeys: 
\[
\E[ (\widehat{v}_{\mathrm{split}}^\pi -  v^\pi)^2]\leq O(\frac{H^2SA}{n}).
\]
\end{theorem}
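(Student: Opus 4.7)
The plan is to exploit the independence across the $N$ data folds. Since $\mathcal{D}^{(1)},\dots,\mathcal{D}^{(N)}$ are i.i.d.\ $M$-episode samples, the per-fold estimators $\widehat{v}^\pi_{(1)},\dots,\widehat{v}^\pi_{(N)}$ are i.i.d.\ as well, each distributed as a single TMIS built from $M$ episodes. The standard bias--variance decomposition for an i.i.d.\ average then gives
\begin{equation*}
\E\bigl[(\widehat{v}^\pi_{\mathrm{split}}-v^\pi)^2\bigr] \;=\; \frac{1}{N}\,\Var\bigl(\widehat{v}^\pi_{(1)}\bigr) \;+\; \bigl(\E[\widehat{v}^\pi_{(1)}]-v^\pi\bigr)^2,
\end{equation*}
so it suffices to bound the per-fold variance and the per-fold bias separately and then invoke $MN=n$.

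For the variance, I would instantiate \Cref{thm:main} with sample size $M$ in place of $n$ to upper bound $\mathrm{MSE}(\widehat{v}^\pi_{(1)})$, then use the trivial inequality $\Var\leq\mathrm{MSE}$. The near-uniform assumption $\min_{t,s,a}d_t^\mu(s,a)\geq \Omega(1/(SA))$ implies $d_m\geq\Omega(1/S)$, $\tau_s\leq O(S)$, and, crucially, $\max_{s,a}d_t^\pi(s,a)/d_t^\mu(s,a)\leq O(SA)$. Combined with \Cref{lem:H3_to_H2}, this caps the leading Cram\'er--Rao-type sum in \Cref{thm:main} by $O(H^2 R_{\max}^2\,SA)$, producing a leading MSE contribution of order $H^2SA/M$. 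The hypotheses $M\geq O(H\tau_a\tau_s)$ and $M\geq O(SA\,\mathrm{polylog}(S,A,H,n))$ then control both the correction factor $1+\sqrt{16\log M/(Md_m)}$ and the lower-order $O(H^3\tau_a^2\tau_s/(M^2 d_m))$ term, absorbing each into $O(H^2SA/M)$. Dividing by $N$ and using $MN=n$ delivers $\Var(\widehat{v}^\pi_{(1)})/N\leq O(H^2SA/n)$.

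For the bias, the idea is to route through the fictitious estimator $\widetilde{v}^\pi_{(1)}$, which is unbiased (Lemma~\ref{lem:unbiasedness_fictitious} in the appendix), so $\E[\widehat{v}^\pi_{(1)}]-v^\pi = \E[\widehat{v}^\pi_{(1)}-\widetilde{v}^\pi_{(1)}]$. On the good event $\mathcal{E}:=\bigcap_{t,s,a}\{n_{s,a}\geq M d_t^\mu(s,a)(1-\theta)\}$ the construction in \eqref{eq:fictitious_r}--\eqref{eq:fictitious_p} makes the two estimators coincide, while on $\mathcal{E}^c$ both are deterministically bounded by $HR_{\max}$. A union bound over the at most $SAH$ triples together with the multiplicative Chernoff inequality (Lemma~\ref{lem:chernoff_multiplicative} in the appendix) gives $\P(\mathcal{E}^c)\leq SAH\exp(-\theta^2 M\min_{t,s,a}d_t^\mu(s,a)/2)$, and the near-uniform lower bound on $d^\mu$ combined with the polylog lower bound on $M$ drives this probability below $n^{-c}$ for any desired constant $c$. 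Hence the per-fold bias is $O(HR_{\max}/n^c)$, and squaring it produces a contribution negligible compared to $H^2SA/n$.

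The main obstacle will be the careful bookkeeping needed to choose the polylogarithmic slack in the condition $M\geq O(SA\,\mathrm{polylog}(S,A,H,n))$ large enough to accomplish three things simultaneously: (i) make the squared bias polynomially small in $n$, (ii) absorb the $(1+o(1))$ correction factor inherited from \Cref{thm:main} into a universal constant, and (iii) dominate the lower-order $O(H^3\tau_a^2\tau_s/(M^2 d_m))$ term by the leading $O(H^2SA/M)$ even after the (loose) bounds $\tau_a\leq O(SA)$ and $\tau_s\leq O(S)$ are plugged in. A secondary subtlety is reconciling $\min_{t,s,a}d_t^\mu(s,a)$ appearing in the Chernoff exponent with $d_m=\min_{t,s}d_t^\mu(s)$ appearing in \Cref{thm:main}; near-uniformity guarantees these are $\Omega(1/(SA))$ and $\Omega(1/S)$ respectively, so the argument goes through after fixing appropriate universal constants.
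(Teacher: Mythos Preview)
Your proposal is correct and uses essentially the same ingredients as the paper: i.i.d.\ averaging over folds, the per-fold MSE bound from \Cref{thm:main} specialized via near-uniformity and \Cref{lem:H3_to_H2} to $O(H^2SA/M)$, and the multiplicative Chernoff/union bound to kill the discrepancy with the fictitious estimator. The only organizational difference is that the paper routes through the split-level fictitious estimator first, writing $\E[(\widehat{v}^\pi_{\mathrm{split}}-v^\pi)^2]\leq 3\P[E']H^2R_{\max}^2+\E[(\widetilde{v}^\pi_{\mathrm{split}}-v^\pi)^2]$ and then using unbiasedness of $\widetilde{v}^\pi_{\mathrm{split}}$ to get $\E[(\widetilde{v}^\pi_{\mathrm{split}}-v^\pi)^2]=\tfrac{1}{N}\E[(\widetilde{v}^\pi_{(1)}-v^\pi)^2]$, whereas you do a bias--variance split on $\widehat{v}^\pi_{(1)}$ directly; the two are equivalent once you observe that your bias term is exactly controlled by the same Chernoff event.
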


\begin{remark}
	The condition in Theorem~\ref{thm:var_bound_data_splitting_TMIS } is achieveable. For example, choose $M\approx \sqrt{n}$, then the condition holds when $n$ is sufficiently large. 
\end{remark}

 \noindent\textbf{High probability bound. } By coupling the data splitting techniques with the boundedness of \TMIS{} (\emph{i.e.} $\widehat{v}^\pi\leq HR_{\max}, \widetilde{v}^\pi\leq HR_{\max}$, see Lemma~\ref{lem:bounded_estimator} in appendix), we can apply concentration inequalities to show the difference between $\widehat{v}^\pi_{\mathrm{split}}$ and $v^\pi$ is bounded by order $\widetilde{O}(\sqrt{H^2SA/n})$, which is summarized into the following theorem.
 
 \begin{theorem}\label{thm:high_probability}
 	Suppose $n$ i.i.d. episodic historical data comes from a near-uniform logging policy $\mu$ and suppose $M$, the number of episodes in each $\mathcal{D}^{(i)}$, satisfies:
 	$\widetilde{O}(n\cdot SA)\geq M$ and $M> \max\left[ O(SA\cdot \mathrm{Polylog} (S,H,A,n,1/\delta)),O(H \tau_a \tau_s)  \right]$.
 	Then we have with probability $1-\delta$, the data splitting \TMIS{} obeys: 
 
 	\[
 |\widehat{v}^\pi_{\mathrm{split}}-v^\pi|\leq \widetilde{O}(\sqrt{\frac{H^2SA}{n}}).
 \]

 \end{theorem}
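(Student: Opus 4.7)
I would start with the triangle inequality
\begin{equation*}
|\widehat{v}^\pi_{\mathrm{split}} - v^\pi| \;\leq\; |\widehat{v}^\pi_{\mathrm{split}} - \widetilde{v}^\pi_{\mathrm{split}}| \;+\; |\widetilde{v}^\pi_{\mathrm{split}} - v^\pi|,
\end{equation*}
and bound the two terms separately under two high-probability events whose failure probabilities sum to $\delta$. The data-splitting structure is used crucially in the second term, while the first term is handled fold-by-fold via the fictitious coupling.

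For the first (``matching'') term, I would argue that within each fold $i$ the estimators $\widehat{v}^\pi_{(i)}$ and $\widetilde{v}^\pi_{(i)}$ agree exactly once every triple $(t,s_t,a_t)$ satisfies $n^{(i)}_{s_t,a_t}\geq M d_t^\mu(s_t,a_t)(1-\theta)$, by the very definitions \eqref{eq:fictitious_r}--\eqref{eq:fictitious_p}. A multiplicative Chernoff bound, combined with the near-uniform assumption $d_t^\mu(s_t,a_t)=\Omega(1/(SA))$, makes each triple fail with probability at most $\exp(-cM/(SA))$. A union bound over the $HSA$ triples in each of the $N$ folds then shows that for $M\geq O\bigl(SA\cdot\mathrm{polylog}(S,A,H,n,1/\delta)\bigr)$ we have simultaneously $\widehat{v}^\pi_{(i)}=\widetilde{v}^\pi_{(i)}$ for all $i\in[N]$ with probability $\geq 1-\delta/2$; on this event the first term vanishes identically.

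For the second (``stochastic'') term, I would apply Bernstein's inequality to the i.i.d.\ average $\widetilde{v}^\pi_{\mathrm{split}}=N^{-1}\sum_{i=1}^N\widetilde{v}^\pi_{(i)}$, using three ingredients: (i) unbiasedness $\E[\widetilde{v}^\pi_{(i)}]=v^\pi$ via Lemma~\ref{lem:unbiasedness_fictitious}; (ii) the a.s.\ bound $|\widetilde{v}^\pi_{(i)}|\leq HR_{\max}$ via Lemma~\ref{lem:bounded_estimator}; and (iii) the per-fold variance bound $\Var(\widetilde{v}^\pi_{(i)})\leq O(H^2SA/M)$, which is exactly the $N=1$ specialization of Theorem~\ref{thm:var_bound_data_splitting_TMIS } applied to a single chunk. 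Bernstein then yields, with probability $\geq 1-\delta/2$,
\begin{equation*}
|\widetilde{v}^\pi_{\mathrm{split}} - v^\pi| \;\leq\; O\!\left(\sqrt{\tfrac{H^2 SA\,\log(1/\delta)}{MN}}\right) \;+\; O\!\left(\tfrac{HR_{\max}\log(1/\delta)}{N}\right).
\end{equation*}
The first term is already $\widetilde{O}(\sqrt{H^2SA/n})$ because $MN=n$. To absorb the Bernstein additive term into the Gaussian term, I would require $N\geq \widetilde{\Omega}(\sqrt{n/(SA)})$, equivalently an upper bound on $M$ — precisely the constraint of the form $\widetilde{O}(\sqrt{n\cdot SA})\geq M$ in the theorem hypothesis. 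Combining this with the lower bound $M\geq\widetilde{\Omega}(SA)\vee \Omega(H\tau_a\tau_s)$ from the matching step pins down the admissible range of $M$, and a final union bound closes the argument.

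The hard part is ingredient (iii): naively invoking Hoeffding on the $\widehat{v}^\pi_{(i)}$'s and using only the crude range $|\widehat{v}^\pi_{(i)}|\leq HR_{\max}$ would yield a deviation of order $H/\sqrt{N}$, which after any choice of $N$ cannot reach the target $\sqrt{H^2 SA/n}$. Bernstein is therefore essential, and it demands the sharp per-fold variance bound from Theorem~\ref{thm:var_bound_data_splitting_TMIS }, which in turn relies on the peeling-by-law-of-total-variance machinery of Section~\ref{sec:proof_sketch} together with Lemma~\ref{lem:H3_to_H2} to save the extra factor of $H$. A secondary subtlety is that Bernstein gives concentration of $\widetilde{v}^\pi_{\mathrm{split}}$, not of $\widehat{v}^\pi_{\mathrm{split}}$; the fictitious-to-actual coupling via Chernoff is exactly what transfers the bound to the implementable estimator, and this is why the two high-probability budgets must be allocated jointly.
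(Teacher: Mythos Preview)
Your proposal is correct and follows essentially the same approach as the paper: split via the fictitious estimator, use multiplicative Chernoff plus a union bound over folds and $(t,s_t,a_t)$ triples to make $\widehat{v}^\pi_{(i)}=\widetilde{v}^\pi_{(i)}$ simultaneously (this is the content of Lemma~\ref{lem:uniform_bound}), then apply Bernstein to the i.i.d.\ average using the per-fold variance bound $\Var(\widetilde{v}^\pi_{(1)})\leq O(H^2SA/M)$ from Theorem~\ref{thm:main}. You also correctly derive the upper constraint $M\leq \widetilde{O}(\sqrt{nSA})$ needed to make the Bernstein additive term $HR_{\max}\log(1/\delta)/N$ lower order---which is exactly what the paper's proof obtains in \eqref{eqn:condition_for_M}, even though the theorem \emph{statement} displays the looser $\widetilde{O}(n\cdot SA)$.
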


The proof Theorem~\ref{thm:high_probability} relies on bounding the difference between $\widehat{v}_{\mathrm{split}}^\pi$ and $\widetilde{v}_{\mathrm{split}}^\pi$ using Multiplicative Chernoff bound and bounding the difference between $\widetilde{v}_{\mathrm{split}}^\pi$ and ${v}^\pi$ using Bernstein inequality. During the process of bounding $|\widehat{v}_{\mathrm{split}}^\pi-\widetilde{v}_{\mathrm{split}}^\pi|$ we observe that a stronger uniform bound can be derived. In fact, this bound is $0$. We formalize it into the following lemma. 

\begin{lemma}\label{lem:uniform_bound}
	Suppose $n$ i.i.d. episodic historical data comes from a near-uniform logging policy $\mu$ and suppose $M$, the number of episodes in each $\mathcal{D}^{(i)}$, satisfies:
	$M > \max\left[ O(SA\cdot \mathrm{Polylog} (S,H,A,N,1/\delta)),O(H \tau_a \tau_s)  \right]$.Then we have with probability $1-\delta$,
	\[
	\sup_{\pi\in\prod}|\widehat{v}_{\mathrm{split}}^\pi-\widetilde{v}_{\mathrm{split}}^\pi|=0
	\]
	Since $n=N\cdot M$, therefore let $N=1$, $M=n$, we have: if $M > \max\left[ O(SA\cdot \mathrm{Polylog} (S,H,A,1/\delta)),O(H \tau_a \tau_s)  \right]$, then we have with probability $1-\delta$,
	\[
	\sup_{\pi\in\prod}|\widehat{v}_{\mathrm{TMIS}}^\pi-\widetilde{v}_{\mathrm{TMIS}}^\pi|=0
	\]
	where $\prod$ consists of all the $H$-step nonstationary policies.
\end{lemma}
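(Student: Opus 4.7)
\textbf{Proof proposal for Lemma~\ref{lem:uniform_bound}.} The key structural observation is that the fictitious estimator $\widetilde{v}^\pi_{(i)}$ differs from $\widehat{v}^\pi_{(i)}$ only through the substitutions \eqref{eq:fictitious_r} and \eqref{eq:fictitious_p}, and the \emph{trigger} for each substitution at $(t,s_t,a_t)$ is the chunk-level visitation count $n^{(i)}_{s_t,a_t}$ (not the policy $\pi$). Therefore, on the event
\[
\mathcal{E} := \bigcap_{i=1}^{N}\bigcap_{t=1}^{H}\bigcap_{s_t,a_t}\Big\{\, n^{(i)}_{s_t,a_t}\ \geq\ M\,d_t^\mu(s_t,a_t)(1-\theta)\,\Big\},
\]
we have $\widetilde{P}_{t+1,t}(\cdot|s_t,a_t)=\widehat{P}_{t+1,t}(\cdot|s_t,a_t)$ and $\widetilde{r}_t(s_t,a_t)=\widehat{r}_t(s_t,a_t)$ identically for every $(t,s_t,a_t)$ in every chunk. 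Since $\widehat{v}^\pi_{(i)}$ (respectively $\widetilde{v}^\pi_{(i)}$) is built from these quantities through the deterministic recursion \eqref{eq:tabular_MIS_construction2}, the identity $\widehat{v}^\pi_{(i)}=\widetilde{v}^\pi_{(i)}$ then holds \emph{simultaneously} for every $\pi\in\prod$, and averaging over chunks gives $\widehat{v}^\pi_{\mathrm{split}}=\widetilde{v}^\pi_{\mathrm{split}}$ for all $\pi$. Thus the uniform identity we want is implied by the single event $\mathcal{E}$, which crucially is a statement only about the logging data.

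The plan is then to lower bound $\Pr[\mathcal{E}]$ via Chernoff and a union bound. For any fixed chunk $i$ and any fixed $(t,s_t,a_t)$, the count $n^{(i)}_{s_t,a_t}$ is a sum of $M$ i.i.d.\ Bernoulli indicators with mean $d_t^\mu(s_t,a_t)$. Applying the multiplicative Chernoff bound (Lemma~\ref{lem:chernoff_multiplicative}),
\[
\Pr\!\left[n^{(i)}_{s_t,a_t}<(1-\theta)\,M\,d_t^\mu(s_t,a_t)\right]\ \leq\ \exp\!\left(-\tfrac{\theta^2}{2}\,M\,d_t^\mu(s_t,a_t)\right).
\]
Using the near-uniform assumption $\min_{t,s_t,a_t}d_t^\mu(s_t,a_t)\geq \Omega(1/(SA))$, this probability is at most $\exp(-\Omega(\theta^2 M/(SA)))$. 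A union bound over the $N\cdot H\cdot S\cdot A$ triggers in $\mathcal{E}$ yields
\[
\Pr[\mathcal{E}^{c}]\ \leq\ NHSA\cdot\exp\!\left(-\Omega(\theta^2 M/(SA))\right),
\]
which is at most $\delta$ provided $M \geq C\cdot SA\cdot \log(NHSA/\delta)/\theta^2$ for some absolute constant $C$; fixing $\theta$ as a constant, this is exactly the quantitative form of the hypothesis $M > \max[O(SA\cdot\mathrm{Polylog}(S,H,A,N,1/\delta)),\,O(H\tau_a\tau_s)]$ (the second term in the max is needed for the separate control of fictitious quantities used elsewhere and is inherited from Theorem~\ref{thm:var_bound_data_splitting_TMIS }, not from this step). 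Hence with probability at least $1-\delta$ the event $\mathcal{E}$ holds and the supremum in question is \emph{exactly} zero.

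For the second statement, simply specialize to $N=1$, so the chunk is the whole dataset $\mathcal D$ with $M=n$ and $\widehat v^\pi_{\mathrm{split}}=\widehat v^\pi_{\mathrm{TMIS}}$, $\widetilde v^\pi_{\mathrm{split}}=\widetilde v^\pi_{\mathrm{TMIS}}$; the same union bound applies with $N$ absorbed into the polylog factor. The only subtle point—and the main thing to get right—is the order of quantifiers: the supremum over $\pi$ is \emph{inside} the probability, and it is tempting to try to control $\widehat{v}^\pi-\widetilde{v}^\pi$ separately for each $\pi$ and then union bound, which would fail because $\prod$ is uncountable. The remark above sidesteps this entirely because $\mathcal{E}$ forces the two estimators to coincide as \emph{functions of $\pi$}, not merely pointwise. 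Once that observation is in hand the rest is a routine Chernoff-plus-union-bound calculation.
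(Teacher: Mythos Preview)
Your proposal is correct and follows essentially the same route as the paper: both arguments rest on the observation that the triggers in \eqref{eq:fictitious_r}--\eqref{eq:fictitious_p} depend only on the chunk-level counts $n^{(i)}_{s_t,a_t}$ and not on $\pi$, so a single Chernoff-plus-union-bound over the $NHSA$ triggers controls the whole supremum at once. The paper's write-up is terser (it goes directly from $\{\exists \pi:\widetilde v^\pi_{(1)}\neq \widehat v^\pi_{(1)}\}$ to $\{\exists t,s_t,a_t: n^{(1)}_{s_t,a_t}<Md_t^\mu(s_t,a_t)(1-\theta_M)\}$), but your more explicit discussion of the quantifier order and the policy-independence of $\mathcal E$ is the same idea spelled out.
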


\begin{remark}
The uniform difference bound between $\widehat{v}_{\mathrm{TMIS}}^\pi$ and $\widetilde{v}_{\mathrm{TMIS}}^\pi$ is obtained by observing the construction of fictitious estimator \eqref{eq:fictitious_r} and \eqref{eq:fictitious_p} are independent of the specific target policy $\pi$. This result tells the $	\sup_{\pi\in\prod}|\widehat{v}_{\mathrm{TMIS}}^\pi-\widetilde{v}_{\mathrm{TMIS}}^\pi|$ can be arbitrarily small with high probability and therefore does not depend on $H$ factor. This fact will help us to derive the correct dependence in $H$ for uniform convergence problem, see Section~\ref{sec:discussion}.
\end{remark}

\begin{algorithm}[H]
	\caption{Tabular MIS Off-Policy Evaluation}
	\label{alg:mainalgo}
	{\bfseries Input:} Logging data $\mathcal D = \{\{s_t^{(i)},a_t^{(i)},r_t^{(i)}\}_{t = 1}^{H }\}_{i = 1}^{n}$ from the behavior policy $\mu$. A target policy $\pi$ which we want to evaluate its cumulative reward.
	\begin{algorithmic}[1]
		\STATE Calculate the on-policy estimation of initial distribution $d_1(\cdot)$ by
		$
		\widehat d_1(s) := \frac{1}{n}\sum_{i = 1}^{n} \mathbf{1}(s_1^{(i)} = s),
		$
		and set $\widehat d_1^{\mu}(\cdot):=\widehat d_1(\cdot)$, $\widehat d_1^\pi(s):=\widehat d_1(\cdot)$.
		\FOR{$t = 2,3,\dotsc,H$}
		\STATE Choose all transition data at time step $t$, $\{s_t^{(i)},a_t^{(i)},r_t^{(i)}\}_{i = 1}^{n}$.
		\STATE Calculate the on-policy estimation of $d_{t}^{\mu}(\cdot)$ by
		$
		\widehat d_{t}^{\mu}(s) := \frac{1}{n}\sum_{i = 1}^{n} \mathbf{1}(s_{t }^{(i)} = s).
		$
		\STATE Set the off-policy estimation of $\widehat{P}_{t}(s_{t}|s_{t-1},a_{t-1})$:
		\begin{equation*}
		\hspace{-6.5mm}
		\begin{aligned}
		&\widehat{P}_{t}(s_{t}|s_{t-1},a_{t-1})\\
		:=&\frac{\sum_{i=1}^n\mathbf{1}[(s^{(i)}_{t},a^{(i)}_{t-1},s^{(i)}_{t-1})=(s_{t},s_{t-1},a_{t-1})]}{n_{s_{t-1},a_{t-1}}}
		\end{aligned}
		\end{equation*}
		when $n_{s_{t-1},a_{t-1}}>0$. Otherwise set it to be zero.
		\STATE Estimate the reward function
		\begin{align*}
		\widehat{r}_t(s_t,a_t) := \frac{\sum_{i=1}^n r_t^{(i)}\mathbf{1}(s_t^{(i)}=s_t, a_t^{(i)} =a_t)}{\sum_{i=1}^n \mathbf{1}(s_t^{(i)}=s_t, a_t^{(i)} =a_t) }.
		\end{align*}
		when $n_{s_t,a_t}>0$. Otherwise set it to be zero.
		\STATE Set $\widehat{d}^\pi_t(\cdot)$ according to $\widehat{d}^\pi_t=\widehat{P}^\pi_t\widehat{d}^\pi_{t-1}$, with $\widehat{P}^\pi_t$ defined according to \eqref{eq:tabular_MIS_construction2}. Also, set $\widehat{r}^\pi_t(\cdot)$ according to \eqref{eq:tabular_MIS_construction2}.
		\ENDFOR
		\STATE Substitute the all estimated values above into \eqref{MIS_def} to obtain $\widehat v^\pi$, the estimated value of $\pi$.
	\end{algorithmic}
\end{algorithm}

\subsection{Some interpretations.}\label{subsec:implication}
\noindent\textbf{Logging policy free algorithm.} 
We point out the implementation of \TMIS{} does not require the knowledge of logging policy $\mu$, as shown in Algorithm~\ref{alg:mainalgo},\ref{alg:secondalgo}.
This is critical in the sense that in the real-world sequential decision making problems, it is very likely the complete information about logging policy is not provided. This may happen due to mis-records or the lack of maintainance. By only using the historical data, tabular MIS off-policy evaluation is able to achieve the asymptotic efficiency. In contrast, the state MIS estimator always requires the full information about the logging policy.

\noindent\textbf{Connection to approximate MDP estimation.} 
Our TMIS is essentially an approximate MDP estimator (with the non-stationary dynamic transitions $P_t$ estimated by \textit{maximum likelihood estimator} (MLE))  except that we marginalize out the action in both $\widehat{r}^\pi_t(s)$ and $\widehat{d}_t^\pi(s)$ and provide an importance sampling interpretation. To the best of our knowledge, existing analysis of the fully model-based approach does not provide tight bounds. We give two examples. The seminal “simulation lemma” in \citet{kearns2002near} together with a naive concentration-type analysis gives only an $\widetilde{O}(\sqrt{H^4S^3A/n})$ bound in our setting. In a very recent compilation of improvements over this bound \citep{jiang2018notes}, this bound can be improved to either $\widetilde{O}(\sqrt{H^4S^2A/n})$ or $\widetilde{O}(\sqrt{H^6SA/n})$. Our result is the first that achieves the optimal $\widetilde{O}(\sqrt{H^2SA/n})$ rate regardless of whether it is the model-based or model-free approach.



\section{Experiments}\label{sec:experiment} 

In this section, we present some empirical studies to demonstrate that our main theoretical results about \TMIS{} in Theorem~\ref{thm:main} are empirically verified. 

\noindent\textbf{Time-varying, non-mixing Tabular MDP}. We test our approach in simulated MDP environment where both the states and the actions are binary. Concretely, there are two states $s_0$ and $s_1$ and two actions $a_1$ and $a_2$. State $s_0$ always has probability $1$ going back to itself, regardless of the actions, \emph{i.e.} $P_t(s_0|s_0,a_1)=1$ and $P_t(s_0|s_0,a_2)=1$. For state $s_1$, at each time step there is one action (we call it $a$) that has probability $2/H$ going to $s_0$ and the other action (we call it $a'$) has probability $1$ going back to $s_1$, \emph{i.e.} $P_t(s_0|s_1,a)=2/H=1-P_t(s_1|s_1,a)$ and $P_t(s_1|s_1,a')=1$. Moreover, which action will make state $s_1$ go to state $s_0$ with probability $2/H$ is decided by a random parameter $p_t\in[0,1]$. If $p_t<0.5$, $a=a_1$ and if $p_t\geq 0.5$, $a=a_2$. One can receive reward $1$ at each time step if $t>H/2$ and is in state $s_0$, and will receive reward $0$ otherwise. Lastly, for state $s_0$, we set $\mu(\cdot|s_0)=\pi(\cdot|s_0)$; for state $s_1$, we set $\mu(a_1|s_1)=\mu(a_2|s_1)=1/2$ and $\pi(a_1|s_1)=1/4=1-\pi(a_2|s_1)$. 

\begin{figure}
	\centering     
	\subfigure[Different Episode $n$]{\label{fig:different_n}\includegraphics[width=70mm]{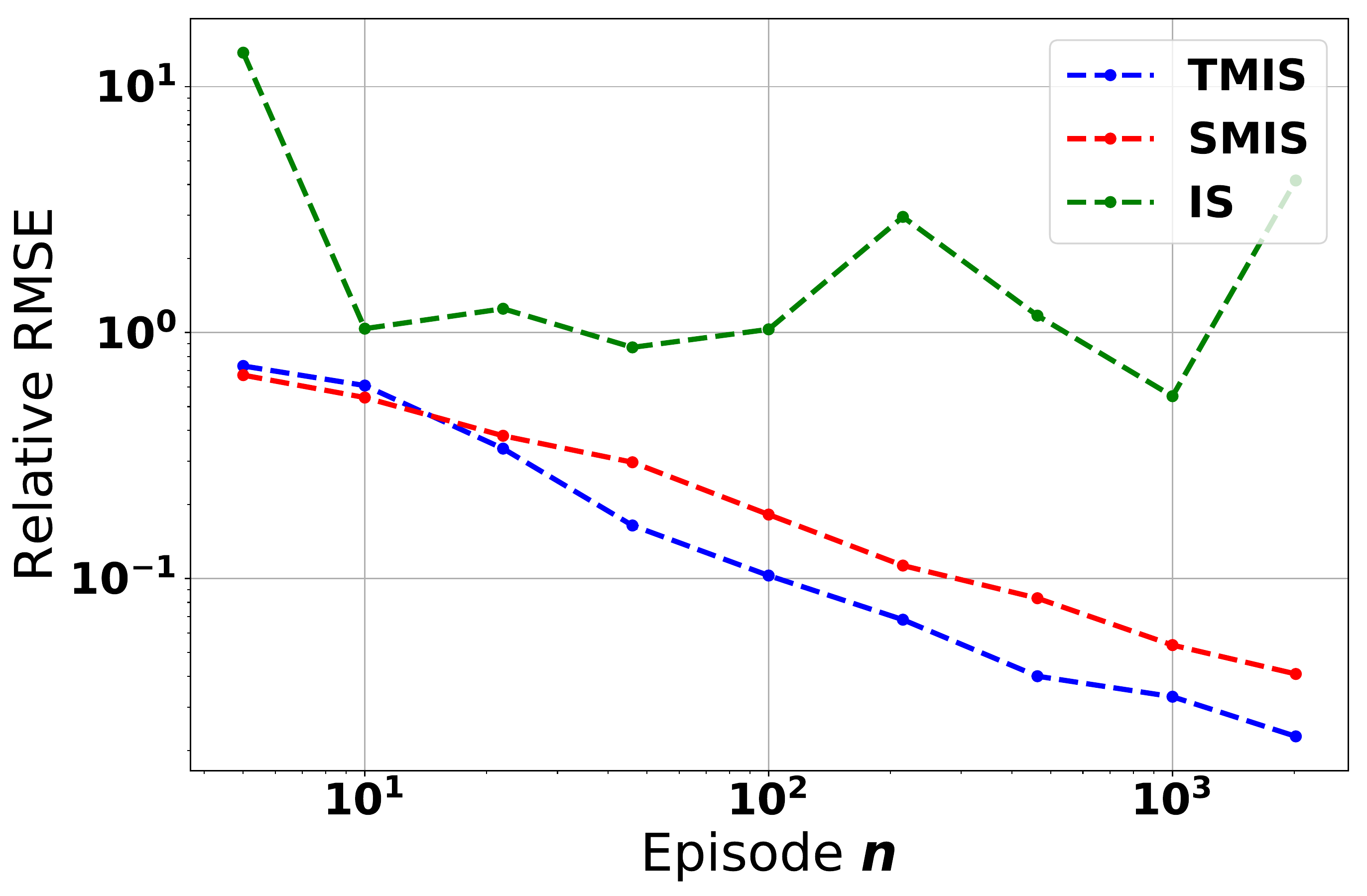}}
	\subfigure[Different Horizon $H$]{\label{fig:different_H}\includegraphics[width=70mm]{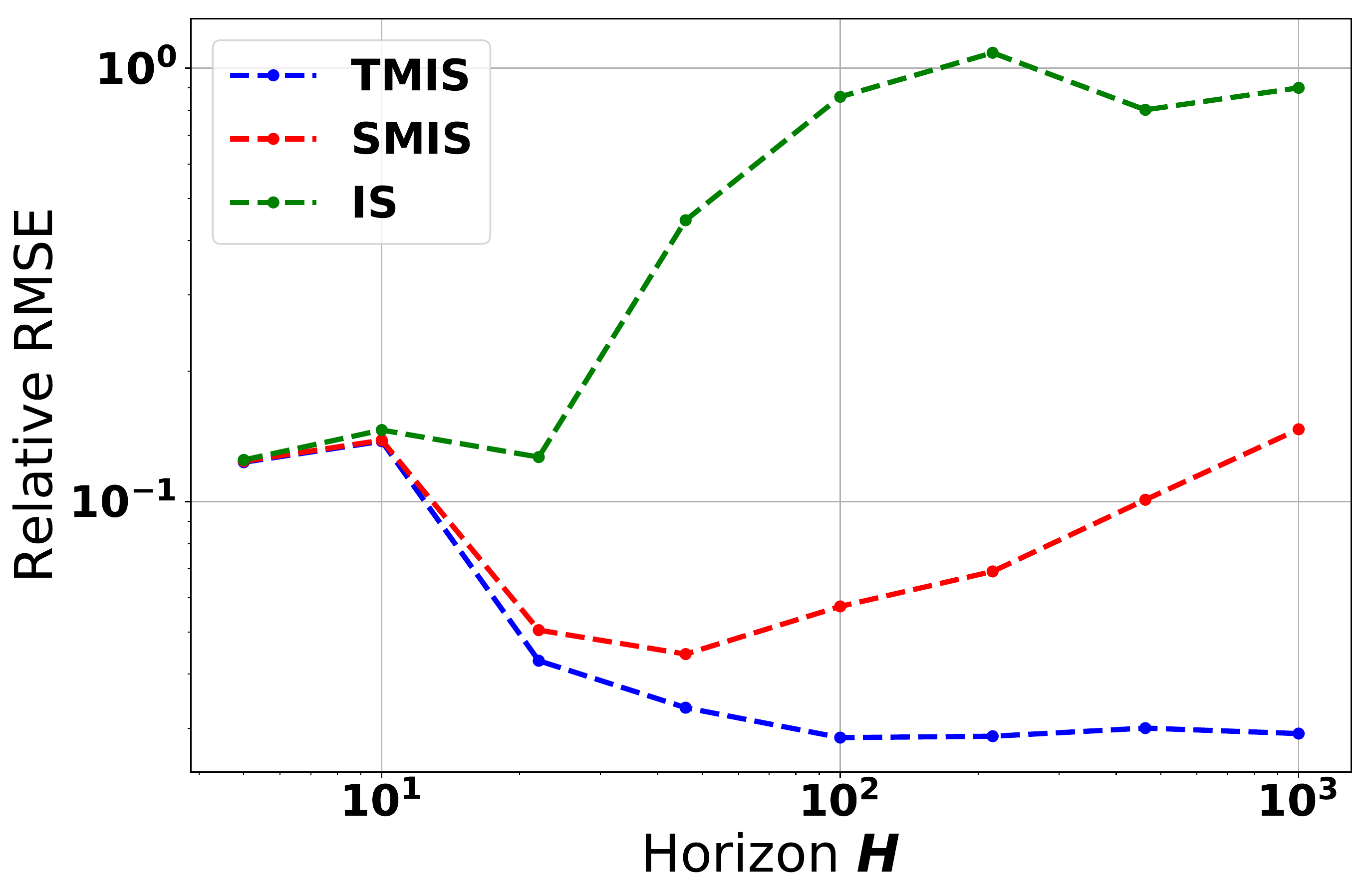}}
	\caption{Relative RMSE ($\sqrt{\text{MSE}}/v^\pi$) on Non-stationary Non-mixing MDP}
	\label{fig:main}
\end{figure}

Figure~\ref{fig:different_n} shows the asymptotic convergence
rates of relative RMSE with respect to the number of
episodes, given fixed horizon $H = 100$. Both SMIS and TMIS has a $O(1/\sqrt{n})$ convergence rate. The saving of $\sqrt{H}$ of TMIS over SMIS in this log-log plot is reflected in the intercept. Figure~\ref{fig:different_H} has fixed $n=1024$ with varying horizon $H$. Note since $v^\pi\approx O(H)$, therefore for TMIS our theoretical result implies $\sqrt{\text{MSE}}/v^\pi=O(\sqrt{H^2}/H)=O(1)$, which is consistent with the horizontal line when $H$ is large. Moreover, for SMIS $\sqrt{\text{MSE}}/v^\pi=O(\sqrt{H^3}/H)=O(\sqrt{H})$, so after taking the $\log(\cdot)$ we should have asymptotic linear trend with coefficient $1/2$.  The red line in Figure~\ref{fig:different_H} empirically verifies this result. More empirical study discussions are deferred to Appendix~\ref{sec:add_empirical}.




\section{Discussion}\label{sec:discussion} 

\noindent\textbf{From off-policy evaluation to offline learning.} A real offline reinforcement learning system is equipped with both offline learning algorithms and off-policy evaluation algorithms. The decision maker should first run the offline learning algorithm to find a near optimal policy and then use off-policy evaluation methods to check if the obtained policy is good enough. Under our tabular MDP setting, we point out it is possible to find a $\epsilon$-optimal policy in near optimal time and sample complexity $O(H^3SA/\epsilon^2)$using the $Q$-value iteration (QVI) based algorithm designed by \citet{sidford2018near}. Their QVI algorithm assumes a generative model which can provide independent sample of the next state $s^\prime$ given any current state-action $(s,a)$. At a first glance, this assumption seems too strong for offline learning since we cannot force the agent to stay in any arbitrary location. In fact, the Assumption~\ref{assume2} on $\mu$ actually reveals that the underlying logging policy can be considered as the surrogate of the generative model. As $n$ goes large, the visitation frequency of any $(s_t,a_t)$ will be large enough with high probability, as guaranteed by Multiplicative Chernoff bound.
      
\noindent\textbf{From off-policy evaluation to uniform off-policy evaluation.} The high probability result achieves $\widetilde{O}(\sqrt{H^2SA/n})$ complexity. Following this discovery line, then it is natural to ask whether uniform convergence over a class of policies (\emph{e.g.} all deterministic policies) can be achieved with optimal sample complexity. This problem is interesting since it will guarantee the strong performance of off-policy evaluation methods over all policies in certain policy class $\prod$. By a direct application of union bound, we can obtain the following result:
\begin{theorem}\label{thm:unifrom_convergence}
	Let $\prod$ contains all the deterministic $H$-step policies. Then under the same condition as Theorem~\ref{thm:high_probability}, the data splitting \TMIS{} satisfies:
	\[
	\sup_{\pi\in\prod}	|\widehat{v}_{\mathrm{split}}^\pi -  v^\pi|\leq
	\widetilde{O}\big(\sqrt{\frac{H^3S^2A}{n}  }\big),
	\]
	with probability $1-\delta$. 
\end{theorem}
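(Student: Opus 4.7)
The plan is to reduce the uniform bound to the non-uniform high-probability bound of Theorem~\ref{thm:high_probability} via a union bound, exploiting the fact that the fictitious construction in \eqref{eq:fictitious_r}-\eqref{eq:fictitious_p} is policy-independent. First, I would enumerate $\prod$: a deterministic, non-stationary $H$-step policy is specified by choosing one action per $(t,s)$ pair, hence $|\prod| = A^{SH}$ and $\log|\prod| = SH\log A$. This count is what injects the extra $\sqrt{SH}$ factor into the final rate.

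Second, I would peel off the $\widehat{v}^\pi_{\mathrm{split}} \to \widetilde{v}^\pi_{\mathrm{split}}$ gap using Lemma~\ref{lem:uniform_bound}: with probability at least $1-\delta/2$, $\sup_{\pi\in\prod}|\widehat{v}^\pi_{\mathrm{split}}-\widetilde{v}^\pi_{\mathrm{split}}|=0$. This step is essentially free because the random event $\{n_{s_t,a_t}\ge n d_t^\mu(s_t,a_t)(1-\theta)\ \forall (t,s_t,a_t)\}$ depends only on visitation counts under $\mu$ and not on $\pi$, so a single failure event controls all policies simultaneously.

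Third, I would control $\sup_\pi |\widetilde{v}^\pi_{\mathrm{split}} - v^\pi|$ by fixing a policy and applying Bernstein's inequality to the i.i.d.\ average $\widetilde{v}^\pi_{\mathrm{split}}=\frac{1}{N}\sum_{i=1}^N \widetilde{v}^\pi_{(i)}$. Each $\widetilde{v}^\pi_{(i)}$ is unbiased for $v^\pi$ (by the unbiasedness lemma for the fictitious estimator), bounded by $HR_{\max}$ (Lemma~\ref{lem:bounded_estimator}), and has variance of order $O(H^2 SA/M)$ by the per-fold version of Theorem~\ref{thm:var_bound_data_splitting_TMIS }. Bernstein with failure probability $\delta' = \delta/(2A^{SH})$ yields, with probability $1-\delta'$,
\[
|\widetilde{v}^\pi_{\mathrm{split}}-v^\pi|
\;\leq\; O\!\left(\sqrt{\tfrac{H^2 SA\,\log(1/\delta')}{n}}\right) + O\!\left(\tfrac{HR_{\max}\log(1/\delta')}{N}\right).
\]
After union-bounding over $\prod$, $\log(1/\delta') = O(SH\log A + \log(1/\delta))$, so the leading term becomes $\widetilde{O}(\sqrt{H^3 S^2 A/n})$ and the Bernstein ``bias'' term is $\widetilde{O}(SH^2/N)$. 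A final triangle inequality combines the two pieces, with total failure probability at most $\delta$.

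The main obstacle is bookkeeping to ensure the lower-order Bernstein term $\widetilde{O}(SH^2/N)$ does not dominate: one needs $N = \Omega(\sqrt{nH/(S A)})$ (up to log factors), which is compatible with the hypothesis $M=n/N$ satisfying $M \geq O(SA \cdot \mathrm{Polylog}(S,H,A,n,1/\delta)) \vee O(H\tau_a\tau_s)$ whenever $n$ is large. A secondary subtlety is that Bernstein requires the single-fold variance, not the full-sample MSE; but Theorem~\ref{thm:var_bound_data_splitting_TMIS } applied with $N=1,M'=M$ episodes gives exactly the per-fold bound $O(H^2SA/M)$ needed, so plugging this into the $\frac{1}{N}\sum$ of independent variables recovers the $n=NM$ denominator in the square-root term. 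Beyond these two items the argument is mechanical.
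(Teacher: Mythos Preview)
Your proposal is correct and follows essentially the same route as the paper: peel off the policy-independent gap $\widehat{v}^\pi_{\mathrm{split}}-\widetilde{v}^\pi_{\mathrm{split}}$ via Lemma~\ref{lem:uniform_bound}, then union-bound the Bernstein deviation of $\widetilde{v}^\pi_{\mathrm{split}}$ over the $A^{SH}$ deterministic policies so that the $\log|\prod|=SH\log A$ factor multiplies the $H^2SA/n$ variance term to yield $\widetilde{O}(\sqrt{H^3S^2A/n})$. The paper's proof is terser and does not spell out the lower-order $N$-term bookkeeping you flag, but the argument is the same.
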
  
The uniform convergence bound implies that the empirical best policy $\hat{\pi} = \argmax_{\pi} \widehat{v}_{\mathrm{split}}^\pi $ is within $\epsilon = \sqrt{\frac{H^3S^2A}{n}}$ of the optimal policy. This matches the sample complexity lower bound for learning the optimal policy  \citep{azar2013minimax} in all parameters except a factor of $S$. 

\noindent\textbf{Open problem: $H^3$ vs $H^2$ in the infinite $\cA$ setting.} 
Finally, we note that the conjecture posed by \citet{xie2019towards} remains unsolved. The result we presented in this paper leverage the fact that we can estimate the parameters of the MDP model. In the infinite $\cA$ case, we can never observe any $(s,a)$ pairs more than once, hence not able to estimate the transition dynamics or the expected reward. The minimax lower bound in \citep{wang2017optimal} (for the contextual bandit setting) already establishes that the Cramer-Rao lower bound is not achievable in this setting even if $H=1$ and $S=1$.  It remains open whether $H^3$ is required.

\section{Conclusion}
In this paper, we propose and analyze a new marginalized importance sampling estimator for the off-policy evaluation (OPE) problem under the episodic tabular Markov decision process model. We show that the estimator is has a finite sample error bound that matches the exact Cramer-Rao lower bound up to low-order factors. We also provide an extension with high probability error bound.  To the best of our knowledge, these results are the first of their kinds. Future work includes resolving the open problems mentioned before and generalizing the results to more practical settings.



\bibliographystyle{apa-good.bst}
\bibliography{stat_rl.bib}

\begin{thebibliography}{38}
\expandafter\ifx\csname natexlab\endcsname\relax\def\natexlab#1{#1}\fi
\expandafter\ifx\csname url\endcsname\relax
  \def\url#1{{\tt #1}}\fi
\expandafter\ifx\csname urlprefix\endcsname\relax\def\urlprefix{URL }\fi

\bibitem[{Agarwal et~al.(2019)Agarwal, Kakade, \& Yang}]{agarwal2019optimality}
Agarwal, A., Kakade, S., \& Yang, L.~F. (2019).
\newblock On the optimality of sparse model-based planning for markov decision
  processes.
\newblock {\em arXiv preprint arXiv:1906.03804\/}.

\bibitem[{Azar et~al.(2013)Azar, Munos, \& Kappen}]{azar2013minimax}
Azar, M.~G., Munos, R., \& Kappen, H.~J. (2013).
\newblock Minimax pac bounds on the sample complexity of reinforcement learning
  with a generative model.
\newblock {\em Machine learning\/}, {\em 91\/}(3), 325--349.

\bibitem[{Azar et~al.(2017)Azar, Osband, \& Munos}]{azar2017minimax}
Azar, M.~G., Osband, I., \& Munos, R. (2017).
\newblock Minimax regret bounds for reinforcement learning.
\newblock In {\em Proceedings of the 34th International Conference on Machine
  Learning-Volume 70\/}, (pp. 263--272). JMLR. org.

\bibitem[{Chernoff et~al.(1952)}]{chernoff1952measure}
Chernoff, H., et~al. (1952).
\newblock A measure of asymptotic efficiency for tests of a hypothesis based on
  the sum of observations.
\newblock {\em The Annals of Mathematical Statistics\/}, {\em 23\/}(4),
  493--507.

\bibitem[{Dann et~al.(2017)Dann, Lattimore, \& Brunskill}]{dann2017unifying}
Dann, C., Lattimore, T., \& Brunskill, E. (2017).
\newblock Unifying pac and regret: Uniform pac bounds for episodic
  reinforcement learning.
\newblock In {\em Advances in Neural Information Processing Systems\/}, (pp.
  5713--5723).

\bibitem[{Dud{\'\i}k et~al.(2011)Dud{\'\i}k, Langford, \& Li}]{dudik2011doubly}
Dud{\'\i}k, M., Langford, J., \& Li, L. (2011).
\newblock Doubly robust policy evaluation and learning.
\newblock {\em arXiv preprint arXiv:1103.4601\/}.

\bibitem[{Farajtabar et~al.(2018)Farajtabar, Chow, \&
  Ghavamzadeh}]{farajtabar2018more}
Farajtabar, M., Chow, Y., \& Ghavamzadeh, M. (2018).
\newblock More robust doubly robust off-policy evaluation.
\newblock {\em arXiv preprint arXiv:1802.03493\/}.

\bibitem[{Gelada \& Bellemare(2019)}]{gelada2019off}
Gelada, C., \& Bellemare, M.~G. (2019).
\newblock Off-policy deep reinforcement learning by bootstrapping the covariate
  shift.
\newblock In {\em Proceedings of the AAAI Conference on Artificial
  Intelligence\/}, vol.~33, (pp. 3647--3655).

\bibitem[{Gottesman et~al.(2019)Gottesman, Liu, Sussex, Brunskill, \&
  Doshi-Velez}]{gottesman2019combining}
Gottesman, O., Liu, Y., Sussex, S., Brunskill, E., \& Doshi-Velez, F. (2019).
\newblock Combining parametric and nonparametric models for off-policy
  evaluation.
\newblock {\em arXiv preprint arXiv:1905.05787\/}.

\bibitem[{Hallak \& Mannor(2017)}]{hallak2017consistent}
Hallak, A., \& Mannor, S. (2017).
\newblock Consistent on-line off-policy evaluation.
\newblock In {\em Proceedings of the 34th International Conference on Machine
  Learning-Volume 70\/}, (pp. 1372--1383). JMLR. org.

\bibitem[{Hirano et~al.(2003)Hirano, Imbens, \& Ridder}]{hirano2003efficient}
Hirano, K., Imbens, G.~W., \& Ridder, G. (2003).
\newblock Efficient estimation of average treatment effects using the estimated
  propensity score.
\newblock {\em Econometrica\/}, {\em 71\/}(4), 1161--1189.

\bibitem[{Jiang(2018)}]{jiang2018notes}
Jiang, N. (2018).
\newblock Notes on tabular methods.

\bibitem[{Jiang \& Agarwal(2018)}]{jiang2018open}
Jiang, N., \& Agarwal, A. (2018).
\newblock Open problem: The dependence of sample complexity lower bounds on
  planning horizon.
\newblock In {\em Conference On Learning Theory\/}, (pp. 3395--3398).

\bibitem[{Jiang \& Li(2016)}]{jiang2016doubly}
Jiang, N., \& Li, L. (2016).
\newblock Doubly robust off-policy value evaluation for reinforcement learning.
\newblock In {\em Proceedings of the 33rd International Conference on
  International Conference on Machine Learning-Volume 48\/}, (pp. 652--661).
  JMLR. org.

\bibitem[{Jin et~al.(2018)Jin, Allen-Zhu, Bubeck, \& Jordan}]{jin2018q}
Jin, C., Allen-Zhu, Z., Bubeck, S., \& Jordan, M.~I. (2018).
\newblock Is q-learning provably efficient?
\newblock In {\em Advances in Neural Information Processing Systems\/}, (pp.
  4863--4873).

\bibitem[{Kallus \& Uehara(2019{\natexlab{a}})}]{kallus2019double}
Kallus, N., \& Uehara, M. (2019{\natexlab{a}}).
\newblock Double reinforcement learning for efficient off-policy evaluation in
  markov decision processes.
\newblock {\em arXiv preprint arXiv:1908.08526\/}.

\bibitem[{Kallus \& Uehara(2019{\natexlab{b}})}]{kallus2019efficiently}
Kallus, N., \& Uehara, M. (2019{\natexlab{b}}).
\newblock Efficiently breaking the curse of horizon: Double reinforcement
  learning in infinite-horizon processes.
\newblock {\em arXiv preprint arXiv:1909.05850\/}.

\bibitem[{Kearns \& Singh(2002)}]{kearns2002near}
Kearns, M., \& Singh, S. (2002).
\newblock Near-optimal reinforcement learning in polynomial time.
\newblock {\em Machine learning\/}, {\em 49\/}(2-3), 209--232.

\bibitem[{Kearns \& Singh(1999)}]{kearns1999finite}
Kearns, M.~J., \& Singh, S.~P. (1999).
\newblock Finite-sample convergence rates for q-learning and indirect
  algorithms.
\newblock In {\em Advances in neural information processing systems\/}, (pp.
  996--1002).

\bibitem[{Li et~al.(2011)Li, Chu, Langford, \& Wang}]{li2011unbiased}
Li, L., Chu, W., Langford, J., \& Wang, X. (2011).
\newblock Unbiased offline evaluation of contextual-bandit-based news article
  recommendation algorithms.
\newblock In {\em Proceedings of the fourth ACM international conference on Web
  search and data mining\/}, (pp. 297--306). ACM.

\bibitem[{Li et~al.(2015)Li, Munos, \& Szepesv{\'a}ri}]{li2015toward}
Li, L., Munos, R., \& Szepesv{\'a}ri, C. (2015).
\newblock Toward minimax off-policy value estimation.

\bibitem[{Liu et~al.(2018{\natexlab{a}})Liu, Li, Tang, \&
  Zhou}]{liu2018breaking}
Liu, Q., Li, L., Tang, Z., \& Zhou, D. (2018{\natexlab{a}}).
\newblock Breaking the curse of horizon: Infinite-horizon off-policy
  estimation.
\newblock In {\em Advances in Neural Information Processing Systems\/}, (pp.
  5361--5371).

\bibitem[{Liu et~al.(2018{\natexlab{b}})Liu, Gottesman, Raghu, Komorowski,
  Faisal, Doshi-Velez, \& Brunskill}]{liu2018representation}
Liu, Y., Gottesman, O., Raghu, A., Komorowski, M., Faisal, A.~A., Doshi-Velez,
  F., \& Brunskill, E. (2018{\natexlab{b}}).
\newblock Representation balancing mdps for off-policy policy evaluation.
\newblock In {\em Advances in Neural Information Processing Systems\/}, (pp.
  2644--2653).

\bibitem[{Mahmood et~al.(2014)Mahmood, van Hasselt, \&
  Sutton}]{mahmood2014weighted}
Mahmood, A.~R., van Hasselt, H.~P., \& Sutton, R.~S. (2014).
\newblock Weighted importance sampling for off-policy learning with linear
  function approximation.
\newblock In {\em Advances in Neural Information Processing Systems\/}, (pp.
  3014--3022).

\bibitem[{Mandel et~al.(2014)Mandel, Liu, Levine, Brunskill, \&
  Popovic}]{mandel2014offline}
Mandel, T., Liu, Y.-E., Levine, S., Brunskill, E., \& Popovic, Z. (2014).
\newblock Offline policy evaluation across representations with applications to
  educational games.
\newblock In {\em Proceedings of the 2014 international conference on
  Autonomous agents and multi-agent systems\/}, (pp. 1077--1084). International
  Foundation for Autonomous Agents and Multiagent Systems.

\bibitem[{Murphy et~al.(2001)Murphy, van~der Laan, Robins, \&
  Group}]{murphy2001marginal}
Murphy, S.~A., van~der Laan, M.~J., Robins, J.~M., \& Group, C. P. P.~R.
  (2001).
\newblock Marginal mean models for dynamic regimes.
\newblock {\em Journal of the American Statistical Association\/}, {\em
  96\/}(456), 1410--1423.

\bibitem[{Puterman(1994)}]{puterman1994markov}
Puterman, M.~L. (1994).
\newblock Markov decision processes: Discrete stochastic dynamic programming.

\bibitem[{Sidford et~al.(2018)Sidford, Wang, Wu, Yang, \& Ye}]{sidford2018near}
Sidford, A., Wang, M., Wu, X., Yang, L., \& Ye, Y. (2018).
\newblock Near-optimal time and sample complexities for solving markov decision
  processes with a generative model.
\newblock In {\em Advances in Neural Information Processing Systems\/}, (pp.
  5186--5196).

\bibitem[{Sridharan(2002)}]{sridharan2002gentle}
Sridharan, K. (2002).
\newblock A gentle introduction to concentration inequalities.
\newblock {\em Dept. Comput. Sci., Cornell Univ., Tech. Rep\/}.

\bibitem[{Sutton \& Barto(1998)}]{sutton1998reinforcement}
Sutton, R.~S., \& Barto, A.~G. (1998).
\newblock {\em Reinforcement learning: An introduction\/}, vol.~1.
\newblock MIT press Cambridge.

\bibitem[{Sutton \& Barto(2018)}]{sutton2018reinforcement}
Sutton, R.~S., \& Barto, A.~G. (2018).
\newblock {\em Reinforcement learning: An introduction\/}.
\newblock MIT press.

\bibitem[{Swaminathan et~al.(2017)Swaminathan, Krishnamurthy, Agarwal, Dudik,
  Langford, Jose, \& Zitouni}]{swaminathan2017off}
Swaminathan, A., Krishnamurthy, A., Agarwal, A., Dudik, M., Langford, J., Jose,
  D., \& Zitouni, I. (2017).
\newblock Off-policy evaluation for slate recommendation.
\newblock In {\em Advances in Neural Information Processing Systems\/}, (pp.
  3632--3642).

\bibitem[{Theocharous et~al.(2015)Theocharous, Thomas, \&
  Ghavamzadeh}]{theocharous2015personalized}
Theocharous, G., Thomas, P.~S., \& Ghavamzadeh, M. (2015).
\newblock Personalized ad recommendation systems for life-time value
  optimization with guarantees.
\newblock In {\em Twenty-Fourth International Joint Conference on Artificial
  Intelligence\/}.

\bibitem[{Thomas \& Brunskill(2016)}]{thomas2016data}
Thomas, P., \& Brunskill, E. (2016).
\newblock Data-efficient off-policy policy evaluation for reinforcement
  learning.
\newblock In {\em International Conference on Machine Learning\/}, (pp.
  2139--2148).

\bibitem[{Thomas et~al.(2017)Thomas, Theocharous, Ghavamzadeh, Durugkar, \&
  Brunskill}]{thomas2017predictive}
Thomas, P.~S., Theocharous, G., Ghavamzadeh, M., Durugkar, I., \& Brunskill, E.
  (2017).
\newblock Predictive off-policy policy evaluation for nonstationary decision
  problems, with applications to digital marketing.
\newblock In {\em Twenty-Ninth IAAI Conference\/}.

\bibitem[{Van~der Vaart(2000)}]{van2000asymptotic}
Van~der Vaart, A.~W. (2000).
\newblock {\em Asymptotic statistics\/}, vol.~3.
\newblock Cambridge university press.

\bibitem[{Wang et~al.(2017)Wang, Agarwal, \& Dudik}]{wang2017optimal}
Wang, Y.-X., Agarwal, A., \& Dudik, M. (2017).
\newblock Optimal and adaptive off-policy evaluation in contextual bandits.
\newblock In {\em Proceedings of the 34th International Conference on Machine
  Learning-Volume 70\/}, (pp. 3589--3597). JMLR. org.

\bibitem[{Xie et~al.(2019)Xie, Ma, \& Wang}]{xie2019towards}
Xie, T., Ma, Y., \& Wang, Y.-X. (2019).
\newblock Towards optimal off-policy evaluation for reinforcement learning with
  marginalized importance sampling.
\newblock In {\em Advances in Neural Information Processing Systems\/}, (pp.
  9665--9675).

\end{thebibliography}

\appendix

\clearpage
\begin{center}
	{\LARGE Appendix}
\end{center}


\section{Concentration inequalities and other technical lemmas}
\begin{lemma}[Bernstein’s Inequality \citep{sridharan2002gentle} ]\label{lem:bernstein_ineq}
	Let $x_1,...,x_n$ be independent bounded random variables such that $\E[x_i]=0$ and $|x_i|\leq \xi$ with probability $1$. Let $\sigma^2 = \frac{1}{n}\sum_{i=1}^n \mathrm{Var}[x_i]$, then for any $\epsilon >0$ we have 
	$$
	\P\left( \frac{1}{n}\sum_{i=1}^nx_i\geq \epsilon\right) \leq e^{-\frac{n\epsilon^2}{2\sigma^2+2\xi\epsilon/3}}.
	$$
\end{lemma}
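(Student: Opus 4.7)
The plan is to run the standard Chernoff-style argument, where the only nontrivial step is producing a good moment generating function (MGF) bound that exploits both the variance and the almost-sure bound $\xi$. First I would fix $\lambda > 0$ and apply Markov's inequality to the random variable $\exp(\lambda \sum_{i=1}^n x_i)$, giving
\[
\P\!\left(\tfrac{1}{n}\sum_{i=1}^n x_i \geq \epsilon\right) \leq e^{-\lambda n \epsilon} \prod_{i=1}^n \E[e^{\lambda x_i}],
\]
using independence to factor the MGF. Then I would aim to show $\E[e^{\lambda x_i}] \leq \exp\!\bigl(\tfrac{\lambda^2 \Var[x_i]/2}{1 - \lambda \xi / 3}\bigr)$ for every $\lambda \in (0, 3/\xi)$, after which the product bound, combined with $\tfrac{1}{n}\sum_i \Var[x_i] = \sigma^2$, turns the Chernoff exponent into $-\lambda n \epsilon + \tfrac{\lambda^2 n \sigma^2 / 2}{1 - \lambda \xi / 3}$.

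The MGF bound is the core step. I would expand $e^{\lambda x_i} = 1 + \lambda x_i + \sum_{k \geq 2} \tfrac{(\lambda x_i)^k}{k!}$, take expectations so that the $k = 1$ term vanishes (since $\E[x_i] = 0$), and then control the higher moments via $|x_i|^k \leq \xi^{k-2} x_i^2$ for $k \geq 2$, which gives $\E[x_i^k] \leq \xi^{k-2} \Var[x_i]$. A convenient crude bound such as $k! \geq 2 \cdot 3^{k-2}$ for $k \geq 2$ then turns the tail of the series into a geometric sum:
\[
\sum_{k=2}^{\infty} \frac{(\lambda \xi)^{k-2}}{k!} \leq \frac{1/2}{1 - \lambda\xi/3}.
\]
Multiplying by $\lambda^2 \Var[x_i]$ and using $1 + u \leq e^u$ delivers the claimed MGF bound.

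Finally, I would optimize the Chernoff exponent in $\lambda$. The natural and near-optimal choice $\lambda = \epsilon / (\sigma^2 + \xi \epsilon / 3)$, which lies in $(0, 3/\xi)$, substituted back yields
\[
-\lambda n \epsilon + \frac{\lambda^2 n \sigma^2 / 2}{1 - \lambda \xi / 3} \;\leq\; -\,\frac{n \epsilon^2}{2\sigma^2 + 2\xi\epsilon/3},
\]
after a short algebraic simplification, which is exactly the stated bound.

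The main obstacle is purely bookkeeping at the MGF step: getting the geometric-series constants right so that the resulting exponent simplifies cleanly with the standard $\lambda$ choice. Nothing deeper than Taylor expansion, independence, and Markov's inequality is needed; there is no interaction with the MDP structure of the rest of the paper.
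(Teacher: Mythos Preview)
Your proof is correct and is the standard Chernoff--moment argument for Bernstein's inequality; the MGF bound via $k!\ge 2\cdot 3^{k-2}$ and the choice $\lambda=\epsilon/(\sigma^2+\xi\epsilon/3)$ both check out and give exactly the stated exponent. The paper itself does not prove this lemma at all: it is stated as a classical result with a citation to \citep{sridharan2002gentle} and used as a black box, so there is no ``paper's own proof'' to compare against. Your write-up is a faithful reconstruction of the textbook argument and goes beyond what the paper supplies.
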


\begin{lemma}[Multiplicative Chernoff bound \citep{chernoff1952measure} ]\label{lem:chernoff_multiplicative}
	Let $X$ be a Binomial random variable with parameter $p,n$. For any $\delta>0$, we have that 
	$$
	\P[X <(1-\delta)pn] < \left( \frac{e^{-\delta}}{(1-\delta)^{1-\delta}} \right)^{np}.  
	$$
\end{lemma}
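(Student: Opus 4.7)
The plan is to invoke the standard Cram\'er--Chernoff (exponential Markov) technique, tailored to the lower-tail event. First I would reduce to the interesting case $0<\delta<1$, since for $\delta\geq 1$ the event $\{X<(1-\delta)pn\}$ forces $X\leq 0$, which (for $pn>0$) has probability zero and the claimed bound holds trivially with the convention $0^0=1$. Writing $X=\sum_{i=1}^{n}X_i$ as a sum of i.i.d.\ Bernoulli$(p)$ variables sets up the moment generating function argument.

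For any $t>0$, the event $\{X<(1-\delta)pn\}$ coincides with $\{e^{-tX}>e^{-t(1-\delta)pn}\}$, so Markov's inequality applied to the non-negative random variable $e^{-tX}$ gives
\[
\P\bigl[X<(1-\delta)pn\bigr]\;\leq\;e^{t(1-\delta)pn}\,\E[e^{-tX}].
\]
Independence yields $\E[e^{-tX}]=(1-p+pe^{-t})^{n}=(1-p(1-e^{-t}))^{n}$, and the elementary inequality $1-x\leq e^{-x}$ (valid for all real $x$) upgrades this to the cleaner envelope $\E[e^{-tX}]\leq \exp\!\bigl(-np(1-e^{-t})\bigr)$. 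Substituting back, the tail probability is dominated by $\exp\!\bigl(t(1-\delta)pn-np(1-e^{-t})\bigr)$ for every $t>0$.

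The remaining step is to optimize over $t$. Differentiating the exponent in $t$ and setting to zero gives $(1-\delta)pn-npe^{-t}=0$, i.e.\ $e^{-t^{\star}}=1-\delta$, which is a legitimate positive choice precisely because $0<\delta<1$. Plugging $t^{\star}=\log\!\bigl(1/(1-\delta)\bigr)$ into the exponent produces
\[
t^{\star}(1-\delta)pn - np\bigl(1-(1-\delta)\bigr)
= (1-\delta)np\log\!\tfrac{1}{1-\delta}-np\delta,
\]
and exponentiating recovers exactly $\bigl(e^{-\delta}/(1-\delta)^{1-\delta}\bigr)^{np}$, as required. The inequality is strict whenever $np>0$ because the step $1-x\leq e^{-x}$ is strict for $x\neq 0$, which is the case here with $x=p(1-e^{-t^{\star}})=p\delta>0$.

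I do not anticipate a genuine obstacle: the only thing to be careful about is the edge case $\delta\geq 1$ (handled trivially above) and verifying that the Chernoff optimization indeed lies in the interior $t>0$ rather than at the boundary, which follows from $0<\delta<1$. Everything else is a textbook computation: moment generating function of a Bernoulli sum, Markov's inequality on the exponentiated variable, and a single-variable optimization. No additional machinery from the body of the paper is needed.
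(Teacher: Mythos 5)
Your proposal is correct. The paper does not prove this lemma at all---it is stated as a known result with a citation to Chernoff (1952)---so there is no in-paper argument to compare against; your proof is the canonical Cram\'er--Chernoff derivation that establishes the cited bound. The details all check out: Markov's inequality applied to $e^{-tX}$ for $t>0$, the envelope $\E[e^{-tX}]=(1-p(1-e^{-t}))^{n}\leq e^{-np(1-e^{-t})}$, and the optimizer $e^{-t^{\star}}=1-\delta$ yielding exactly $\bigl(e^{-\delta}/(1-\delta)^{1-\delta}\bigr)^{np}$; your observation that strictness of $1-x<e^{-x}$ for $x=p\delta>0$ gives the strict inequality in the statement, and your dispatch of the degenerate case $\delta\geq 1$ (where the event is empty since $X\geq 0$), are both sound. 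One cosmetic remark: for $\delta>1$ the right-hand side $(1-\delta)^{1-\delta}$ is not even real-valued, so the lemma is really only meaningful for $0<\delta\leq 1$, which is the regime your proof covers completely and the only regime the paper ever uses (it is applied with $\delta=\theta\in(0,1)$ in Lemma~\ref{lem:fictitious_approximation} and its variants).
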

A slightly weaker bound that suffices for our propose is the following:
$$
\P[X < (1-\delta)pn] <  e^{-\frac{\delta^2 pn}{2}}.
$$

\section{Proof of the main theorem}
\label{sec:full_theo}

To analyze the MSE upper bound $\mathbb{E}_\mu[(\widehat{v}^\pi_{\text{TMIS}}-v^\pi)^2]$, we create a fictitious surrogate $\widetilde{v}^\pi_{\text{TMIS}}$, which is an unbiased version of $\widehat{v}^\pi_{\text{TMIS}}$. A few auxiliary lemmas are first presented and Bellman equations are used for deriving variance decomposition in a recursive way. Second order moment of marginalized state distribution $\widetilde{d}^\pi_t$ can then be bounded by analyzing its variance. 

\subsection{Fictitious tabular MIS estimator.}\label{sec:Fictitious}

The fictitious estimator\footnote{We replcace the notation of  $\widetilde{v}^\pi_\text{TMIS}$ with just $\widetilde{v}^\pi$ throughout the proof. $\widetilde{v}^\pi$ always denotes fictitious tabular MIS estimator.} $\widetilde{v}^\pi$ fills in the gap of state-action location $(s_t,a_t)$ of the true estimator $\widehat{v}^\pi$ where $n_{s_t,a_t}=0$. Specifically, it replaces every component in $\widehat{v}^\pi$ with a fictitious counterpart, \emph{i.e.} $\widetilde{v}^\pi :=  \sum_{t=1}^H \langle \widetilde{d}_t^\pi,  \widetilde{r}_t^\pi\rangle$, with $\widetilde{d}_t^\pi =  \widetilde{P}_{t}^\pi   \widetilde{d}_{t-1}^\pi$ and $\widetilde{P}^\pi_t(s_t|s_{t-1})=\sum_{a_{t-1}}\widetilde{P}_t(s_t|s_{t-1},a_{t-1})\pi(a_{t-1}|s_{t-1}),$
$\widetilde{r}^\pi_t(s_t)=\sum_{a_t}\widetilde{r}_t(s_t,a_t)\pi(a_t|s_t).$ In particular, let $E_t$ denotes the event $\{n_{s_t,a_t}\geq nd^\mu_t(s_t,a_t)(1-\theta)\}$\footnote{More rigorously, $E_t$ depends on the specific pair $s_t,a_t$ and should be written as $E_t(s_t,a_t)$. However, for brevity we just use $E_t$ and this notation should be clear in each context.}, then 
\begin{align*}
\widetilde{r}_t(s_t,a_t)&=\widehat{r}_t(s_t,a_t)\mathbf{1}(E_t)+{r}_t(s_t,a_t)\mathbf{1}(E_t^c)\\
\widetilde{P}_{t+1}(\cdot|s_t,a_t)&=\widehat{P}_{t+1}(\cdot|s_t,a_t)\mathbf{1}(E_t)+{P}_{t+1}(\cdot|s_t,a_t)\mathbf{1}(E_t^c).
\end{align*}
where $0<\theta<1$ is a parameter that we will choose later.

The name "fictitious" comes from the fact that $\widetilde{v}^\pi$ is \emph{not implementable} using the data\footnote{It depends on unknown information such as $d_t^\mu$, $P_{t,t-1}^\pi$, exact conditional expectation of the reward $r_t^\pi$ and so on.}, but it creates a bridge between $\widehat{v}^\pi$ and $v^\pi$ because of its unbiasedness, see Lemma~\ref{lem:unbiasedness_fictitious}. Also, for simplicity of the proof, throughout the rest of the paper we denote:
$
\mathcal{D}_t := \left\{s_{1:{t}}^{(i)}, a_{1:{t}}^{(i)},r_{1:{t-1}}^{(i)} \right\}_{i=1}^n.
$
Also, in the base case, we denote $\mathcal{D}_1 := \left\{s_{1}^{(i)},a_{1}^{(i)}\right\}_{i=1}^n$ and that 
$
r_{t}^\pi(s_t) := \E_\pi[ r_{t}^{(1)} |  s_t^{(1)} = s_t ]=\sum_{a_t}\E[ r_{t}^{(1)} |  s_t^{(1)} = s_t,a_t^{(1)}=a_t ]\pi(a_t|s_t):=\sum_{a_t}r_t(s_t,a_t)\pi(a_t|s_t)
$.
Then we have the following preliminary auxiliary lemmas. 

\begin{lemma}\label{lem:unbiasedness_given_data}
	$\widetilde{d}^\pi_t$ and $\widetilde{r}^\pi_{t-1}$ are deterministic given $\mathcal{D}_t$. Moreover, given $\mathcal{D}_t$, $\widetilde{P}_{t+1,t}^\pi$ is unbiased of $P_{t+1,t}^\pi$ and $\widetilde{r}^\pi_t$ is unbiased of $r^\pi_t$.
\end{lemma}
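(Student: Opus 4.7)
The plan is to exploit two structural facts: first, the ``fictitious'' event $E_t = \{n_{s_t,a_t} \geq n d_t^\mu(s_t,a_t)(1-\theta)\}$ and all objects indexed by times $\leq t$ are measurable with respect to $\mathcal{D}_t$; second, the Markov property collapses conditional expectations of per-episode indicators of $s_{t+1}^{(i)}$ and of the reward $r_t^{(i)}$ into the true transition/reward functions. The proof is mostly bookkeeping organized around these two observations.

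For the first assertion, I would enumerate the ingredients that go into $\widetilde{d}_t^\pi$ and $\widetilde{r}_{t-1}^\pi$ and check that each one is a function of the quantities in $\mathcal{D}_t = \{s_{1:t}^{(i)}, a_{1:t}^{(i)}, r_{1:t-1}^{(i)}\}_{i=1}^n$. The initial distribution estimator $\widehat{d}_1(\cdot) = \frac{1}{n}\sum_i \mathbf{1}(s_1^{(i)} = \cdot)$ only uses $s_1^{(i)}$'s. For any $j \leq t-1$, the construction \eqref{eq:tabular_MIS_construction} shows that $\widehat{P}_{j+1}(s_{j+1}|s_j,a_j)$ depends on $(s_j^{(i)}, a_j^{(i)}, s_{j+1}^{(i)})$ and the count $n_{s_j,a_j}$, while $\widehat{r}_{t-1}(s_{t-1},a_{t-1})$ depends on $(s_{t-1}^{(i)}, a_{t-1}^{(i)}, r_{t-1}^{(i)})$ and $n_{s_{t-1},a_{t-1}}$; all of these are in $\mathcal{D}_t$. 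Moreover each event $E_j$ for $j \leq t-1$ is $\mathcal{D}_t$-measurable because $n_{s_j,a_j}$ is. Hence $\widetilde{P}_{j+1}$ and $\widetilde{r}_{t-1}$ are $\mathcal{D}_t$-measurable, so the iterated product $\widetilde{d}_t^\pi = \widetilde{P}_t^\pi \widetilde{P}_{t-1}^\pi \cdots \widetilde{P}_2^\pi \widehat{d}_1$ and the policy-weighted reward $\widetilde{r}_{t-1}^\pi(s) = \sum_{a}\widetilde{r}_{t-1}(s,a)\pi(a|s)$ are as well.

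For the unbiasedness assertion, the key point is that $E_t$ itself is $\mathcal{D}_t$-measurable (it depends only on the time-$t$ counts, which are part of $\mathcal{D}_t$), so conditionally on $\mathcal{D}_t$ the indicator $\mathbf{1}(E_t)$ is a constant and can be pulled out of the conditional expectation. On $E_t^c$ the fictitious objects are set to the true $P_{t+1}(\cdot|s_t,a_t)$ and $r_t(s_t,a_t)$, contributing exactly the target quantity. On $E_t$ we must compute, for each fixed $(s_t,a_t)$,
\[
\E\!\left[\widehat{P}_{t+1}(s_{t+1}|s_t,a_t)\,\middle|\,\mathcal{D}_t\right]
= \frac{1}{n_{s_t,a_t}} \sum_{i:(s_t^{(i)}, a_t^{(i)})=(s_t,a_t)} \E\!\left[\mathbf{1}[s_{t+1}^{(i)} = s_{t+1}]\,\middle|\,\mathcal{D}_t\right],
\]
and invoke the Markov property: conditional on $\mathcal{D}_t$, the random variable $s_{t+1}^{(i)}$ for each such index $i$ is drawn from $P_{t+1}(\cdot|s_t,a_t)$ independently of everything else in $\mathcal{D}_t$. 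Each summand equals $P_{t+1}(s_{t+1}|s_t,a_t)$, so the average does too, and the $E_t$ branch matches the $E_t^c$ branch. The reward estimator is handled identically using $\E[r_t^{(i)}|\mathcal{D}_t, s_t^{(i)} = s_t, a_t^{(i)} = a_t] = r_t(s_t,a_t)$.

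Finally, since $\widetilde{P}_{t+1,t}^\pi(s_{t+1}|s_t) = \sum_{a_t}\widetilde{P}_{t+1}(s_{t+1}|s_t,a_t)\pi(a_t|s_t)$ and $\widetilde{r}_t^\pi(s_t) = \sum_{a_t}\widetilde{r}_t(s_t,a_t)\pi(a_t|s_t)$, and since $\pi$ is a known deterministic object, linearity of conditional expectation combined with the previous display yields $\E[\widetilde{P}_{t+1,t}^\pi \mid \mathcal{D}_t] = P_{t+1,t}^\pi$ and $\E[\widetilde{r}_t^\pi \mid \mathcal{D}_t] = r_t^\pi$. The only delicate step is making sure the $n_{s_t,a_t}=0$ pathology is harmless: when it occurs we are necessarily on $E_t^c$ (provided $(1-\theta)nd_t^\mu(s_t,a_t) > 0$, which is ensured by Assumption~\ref{assume2}), so the fictitious substitution puts in the true transition/reward and the division-by-zero never arises. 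I expect this measurability/case-split bookkeeping to be the only real obstacle, with the rest being immediate from the Markov property.
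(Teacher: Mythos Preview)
Your proposal is correct and follows essentially the same approach as the paper's proof: both argue measurability of $\widetilde{d}_t^\pi$ and $\widetilde{r}_{t-1}^\pi$ from the construction, then split on $E_t$ versus $E_t^c$ and use that the empirical average $\widehat{P}_{t+1}(\cdot|s_t,a_t)$ is conditionally unbiased on $E_t$, finally passing to $\widetilde{P}_{t+1,t}^\pi$ and $\widetilde{r}_t^\pi$ by linearity over the $\pi$-weighting. Your write-up is in fact more explicit than the paper's (which just says ``the empirical mean is unbiased'' and ``similar fashion'' for $\widetilde{r}_t^\pi$), and your observation about the $n_{s_t,a_t}=0$ case being absorbed into $E_t^c$ is a useful clarification the paper omits.
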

\begin{proof}[Proof of Lemma~\ref{lem:unbiasedness_given_data}]
	By construction of the estimator, $\widetilde{d}^\pi_t$ and $\widetilde{r}^\pi_{t-1}$ only depend on $\mathcal{D}_t$, therefore $\widetilde{d}^\pi_t$ and $\widetilde{r}^\pi_{t-1}$ given $\mathcal{D}_t$ are constants. For the second argument, we have $\forall s_t,s_{t+1}$,
	\begin{align*}
	&\mathbb{E}[\widetilde{P}^\pi_{t+1,t}(s_{t+1}|s_t)|\mathcal{D}_{t}]=\sum_{a_t}\mathbb{E}[\widetilde{P}_{t+1,t}(s_{t+1}|s_t,a_t)|\mathcal{D}_t]\pi(a_t|s_t)\\
	&=\sum_{a_t}\bigg(\mathbf{1}(E_t)\mathbb{E}[\widehat{P}_{t+1,t}(s_{t+1}|s_t,a_t)|\mathcal{D}_t]+\mathbf{1}(E^c_t){P}_{t+1,t}(s_{t+1}|s_t,a_t)\bigg)\pi(a_t|s_t)\\
	&=\sum_{a_t}\bigg(\mathbf{1}(E_t){P}_{t+1,t}(s_{t+1}|s_t,a_t)+\mathbf{1}(E^c_t){P}_{t+1,t}(s_{t+1}|s_t,a_t)\bigg)\pi(a_t|s_t)\\
	&=\sum_{a_t}{P}_{t+1,t}(s_{t+1}|s_t,a_t)\pi(a_t|s_t)=P^\pi_{t+1,t}(s_{t+1}|s_t),\\
	\end{align*}
	where the third equal sign comes from the fact that conditional on $E_t$, $\hat{P}(s_{t+1}|s_t,a_t)$ --- the empirical mean --- is unbiased. The result about $\tilde{r}_t^\pi$ can be derived using a similar fashion.
\end{proof}

Using Lemma~\ref{lem:unbiasedness_given_data}, we can derive the following  recursions for expectation and variance:

\begin{lemma}\label{lem:inter_recursion} 
	For $h=1,...,H$, we have 
	\begin{align}
	\label{eq:exp_recursive}
\E\left[\langle\widetilde{d}_{h}^{\pi},  V_h^{\pi}\rangle +  \sum_{t=1}^{h-1}  \langle\widetilde{d}_{t}^{\pi},  \widetilde{r}_t^{\pi}\rangle  \middle|  \mathcal{D}_{h-1} \right] &=\langle \widetilde{d}_{h-1}^\pi, V_{h-1}^\pi \rangle + \sum_{t=1}^{h-2}  \langle\widetilde{d}_{t}^{\pi},  \widetilde{r}_t^{\pi}\rangle,\\
\label{eq:var_recursive}
\Var\left[  \langle \widetilde{d}_{h+1}^\pi, V_{h+1}^\pi\rangle +  \sum_{t=1}^h \langle\widetilde{d}_{t}^{\pi},  \widetilde{r}_t^{\pi} \rangle\right] &=  \E\left[ \Var\left[  \langle \widetilde{d}_{h+1}^\pi, V_{h+1}^\pi\rangle  +  \langle \widetilde{d}_h^\pi, \widetilde{r}_h^\pi\rangle \middle| \mathcal{D}_{h} \right] \right] 
+ \Var\left[  \langle \widetilde{d}_{h}^\pi, V_{h}^\pi\rangle +  \sum_{t=1}^{h-1} \langle\widetilde{d}_{t}^{\pi},  \widetilde{r}_t^{\pi} \rangle \right]
\end{align}
\end{lemma}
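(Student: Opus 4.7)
The plan is to establish the expectation recursion \eqref{eq:exp_recursive} first by a direct conditional-expectation computation, and then derive the variance recursion \eqref{eq:var_recursive} as an immediate corollary via the law of total variance. Both rely fundamentally on Lemma~\ref{lem:unbiasedness_given_data}: given the filtration $\mathcal{D}_{h-1}$, the quantities $\widetilde{d}^\pi_1,\ldots,\widetilde{d}^\pi_{h-1}$ and $\widetilde{r}^\pi_1,\ldots,\widetilde{r}^\pi_{h-2}$ are deterministic, while $\widetilde{P}^\pi_{h,h-1}$ and $\widetilde{r}^\pi_{h-1}$ are (conditionally) unbiased for $P^\pi_{h,h-1}$ and $r^\pi_{h-1}$. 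The Bellman equation will then stitch the pieces together.

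For \eqref{eq:exp_recursive}, first I would expand $\langle \widetilde{d}^\pi_h, V^\pi_h\rangle = \sum_{s_{h-1},s_h} \widetilde{P}^\pi_{h,h-1}(s_h\mid s_{h-1})\,\widetilde{d}^\pi_{h-1}(s_{h-1})\,V^\pi_h(s_h)$ using $\widetilde{d}^\pi_h = \widetilde{P}^\pi_h \widetilde{d}^\pi_{h-1}$. Taking $\E[\cdot\mid \mathcal{D}_{h-1}]$, the factor $\widetilde{d}^\pi_{h-1}(s_{h-1})$ pulls out and $\E[\widetilde{P}^\pi_{h,h-1}(s_h\mid s_{h-1})\mid \mathcal{D}_{h-1}]$ becomes $P^\pi_{h,h-1}(s_h\mid s_{h-1})$ by Lemma~\ref{lem:unbiasedness_given_data}. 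The inner sum $\sum_{s_h} P^\pi_{h,h-1}(s_h\mid s_{h-1})V^\pi_h(s_h)$ is exactly $V^\pi_{h-1}(s_{h-1}) - r^\pi_{h-1}(s_{h-1})$ by the Bellman equation. This yields $\E[\langle\widetilde{d}^\pi_h,V^\pi_h\rangle\mid \mathcal{D}_{h-1}] = \langle \widetilde{d}^\pi_{h-1},V^\pi_{h-1}\rangle - \langle \widetilde{d}^\pi_{h-1}, r^\pi_{h-1}\rangle$. Next, since $\widetilde{d}^\pi_{h-1}$ is $\mathcal{D}_{h-1}$-measurable and $\widetilde{r}^\pi_{h-1}$ is conditionally unbiased, $\E[\langle \widetilde{d}^\pi_{h-1},\widetilde{r}^\pi_{h-1}\rangle\mid \mathcal{D}_{h-1}] = \langle \widetilde{d}^\pi_{h-1}, r^\pi_{h-1}\rangle$, which cancels the stray reward term. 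The remaining summands $\sum_{t=1}^{h-2}\langle\widetilde{d}^\pi_t,\widetilde{r}^\pi_t\rangle$ are $\mathcal{D}_{h-1}$-measurable and pass through the conditional expectation unchanged, giving \eqref{eq:exp_recursive}.

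For \eqref{eq:var_recursive}, I apply the law of total variance to $X := \langle \widetilde{d}^\pi_{h+1},V^\pi_{h+1}\rangle + \sum_{t=1}^h \langle \widetilde{d}^\pi_t,\widetilde{r}^\pi_t\rangle$ with the conditioning sigma-algebra generated by $\mathcal{D}_h$: $\Var[X] = \E[\Var[X\mid \mathcal{D}_h]] + \Var[\E[X\mid \mathcal{D}_h]]$. Inside $\Var[X\mid \mathcal{D}_h]$, the summands $\sum_{t=1}^{h-1}\langle \widetilde{d}^\pi_t,\widetilde{r}^\pi_t\rangle$ are $\mathcal{D}_h$-measurable constants and therefore drop out, leaving the first term $\E[\Var[\langle \widetilde{d}^\pi_{h+1},V^\pi_{h+1}\rangle + \langle \widetilde{d}^\pi_h,\widetilde{r}^\pi_h\rangle \mid \mathcal{D}_h]]$. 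For the second term, I invoke \eqref{eq:exp_recursive} at index $h+1$ to identify $\E[X\mid \mathcal{D}_h] = \langle \widetilde{d}^\pi_h, V^\pi_h\rangle + \sum_{t=1}^{h-1}\langle\widetilde{d}^\pi_t,\widetilde{r}^\pi_t\rangle$, whose variance is exactly the second summand on the right-hand side of \eqref{eq:var_recursive}.

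The only subtlety is organizational: making sure the book-keeping of which terms are $\mathcal{D}_{h-1}$- vs.\ $\mathcal{D}_h$-measurable is clean, and that the cancellation between the Bellman-induced $-\langle \widetilde{d}^\pi_{h-1},r^\pi_{h-1}\rangle$ and $\E[\langle \widetilde{d}^\pi_{h-1},\widetilde{r}^\pi_{h-1}\rangle\mid \mathcal{D}_{h-1}]$ is transparent. There is no substantive obstacle beyond this; once \eqref{eq:exp_recursive} is in hand, \eqref{eq:var_recursive} is a direct application of the conditional variance decomposition.
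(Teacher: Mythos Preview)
Your proposal is correct and follows essentially the same route the paper indicates: the paper defers the details to \citet{xie2019towards} but states that the proof proceeds by coupling the Bellman equation $V_{h-1}^\pi = r_{h-1}^\pi + [P^\pi_{h,h-1}]^T V_h^\pi$ with the law of total expectation (for \eqref{eq:exp_recursive}) and the law of total variance (for \eqref{eq:var_recursive}), which is exactly what you do. Your explicit use of Lemma~\ref{lem:unbiasedness_given_data} to justify pulling out $\widetilde{d}^\pi_{h-1}$ and replacing $\widetilde{P}^\pi_{h,h-1},\widetilde{r}^\pi_{h-1}$ by their true counterparts under $\E[\cdot\mid\mathcal{D}_{h-1}]$, together with invoking \eqref{eq:exp_recursive} at index $h+1$ to evaluate $\E[X\mid\mathcal{D}_h]$ in the variance step, spells out precisely the mechanism the paper leaves implicit.
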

\begin{proof}
	The proof of Lemma~\ref{lem:inter_recursion} can be found in Lemma~B.2 and Lemma~4.1 in \citet{xie2019towards} by coupling the standard Bellman equation:
	\begin{equation}\label{eq:bellman_eq}
		V_h^\pi =   r_{h}^\pi  + [P_{h+1,h}^\pi]^T V_{h+1}^\pi
	\end{equation}
	with the total law of expectations and the total law of variances.
\end{proof}

\begin{lemma}[Boundedness of Tabular MIS estimators]\label{lem:bounded_estimator} 
$0\leq \widehat{v}^\pi\leq HR_{\max}$, $0\leq \widetilde{v}^\pi\leq HR_{\max}$.
\end{lemma}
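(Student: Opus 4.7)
The plan is to establish both inequalities by inducting on $t$ through the definitions in \eqref{eq:tabular_MIS_construction} and \eqref{eq:tabular_MIS_construction2}, using that the empirical transition kernel is sub-stochastic and the empirical reward is a convex combination of values bounded in $[0, R_{\max}]$.

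First I would prove non-negativity. Recall $\widehat{v}^\pi = \sum_{t=1}^H \langle \widehat{d}_t^\pi, \widehat{r}_t^\pi\rangle$. By definition, $\widehat{P}_{t+1}(s_{t+1}|s_t,a_t)$ is a ratio of non-negative counts, hence non-negative; then $\widehat{P}_{t+1}^\pi(s_{t+1}|s_t) = \sum_{a_t} \widehat{P}_{t+1}(s_{t+1}|s_t,a_t)\pi(a_t|s_t) \geq 0$. Together with $\widehat{d}_1(s_1) = n_{s_1}/n \geq 0$, an easy induction gives $\widehat{d}_t^\pi(s_t) \geq 0$ for every $t$. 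Similarly, by Assumption~\ref{assume1}, $\widehat{r}_t(s_t,a_t) \in [0,R_{\max}]$ whenever it is defined, and zero otherwise, so $\widehat{r}_t^\pi(s_t) \geq 0$. This yields $\widehat{v}^\pi \geq 0$.

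The substantive step is the upper bound. I would first show that $\widehat{r}_t^\pi(s_t) \leq R_{\max}$ pointwise: since $\widehat{r}_t(s_t,a_t) \leq R_{\max}$ by Assumption~\ref{assume1}, and $\pi(\cdot|s_t)$ sums to one, the convex combination is also bounded by $R_{\max}$. Next, I would show by induction on $t$ that $\sum_{s_t} \widehat{d}_t^\pi(s_t) \leq 1$. The base case $t=1$ is immediate since $\sum_{s_1}\widehat{d}_1(s_1)=1$. For the inductive step, observe that for any $(s_{t-1},a_{t-1})$, $\sum_{s_t}\widehat{P}_t(s_t|s_{t-1},a_{t-1})$ equals $1$ when $n_{s_{t-1},a_{t-1}}>0$ and $0$ otherwise, so $\sum_{s_t}\widehat{P}_t^\pi(s_t|s_{t-1}) \leq \sum_{a_{t-1}}\pi(a_{t-1}|s_{t-1}) = 1$. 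Exchanging the order of summation,
\[
\sum_{s_t}\widehat{d}_t^\pi(s_t) = \sum_{s_{t-1}} \widehat{d}_{t-1}^\pi(s_{t-1}) \sum_{s_t}\widehat{P}_t^\pi(s_t|s_{t-1}) \leq \sum_{s_{t-1}}\widehat{d}_{t-1}^\pi(s_{t-1}) \leq 1.
\]
Combining these two facts, $\langle \widehat{d}_t^\pi,\widehat{r}_t^\pi\rangle \leq R_{\max}\sum_{s_t}\widehat{d}_t^\pi(s_t) \leq R_{\max}$, and summing over $t=1,\dots,H$ delivers $\widehat{v}^\pi \leq HR_{\max}$.

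For $\widetilde v^\pi$ the same argument applies verbatim, because whenever $n_{s_t,a_t} < nd_t^\mu(s_t,a_t)(1-\theta)$ the construction in \eqref{eq:fictitious_r}--\eqref{eq:fictitious_p} substitutes the true ${P}_{t+1}(\cdot|s_t,a_t)$ (which is exactly stochastic, summing to $1$) and the true $r_t(s_t,a_t)\in[0,R_{\max}]$ (by Assumption~\ref{assume1}). Thus $\widetilde{P}_{t+1}^\pi$ remains sub-stochastic, $\widetilde{r}_t^\pi \in [0,R_{\max}]$, and the identical induction yields $0\leq\widetilde v^\pi\leq HR_{\max}$. The only mild subtlety to flag carefully is that when $n_{s_{t-1},a_{t-1}}=0$ the empirical $\widehat{P}_t(\cdot|s_{t-1},a_{t-1})$ is set to zero rather than normalized, so $\widehat{P}_t^\pi$ is sub-stochastic (not stochastic); this is precisely why the argument gives $\sum_{s_t}\widehat{d}_t^\pi(s_t)\leq 1$ instead of equality, but this suffices for the bound.
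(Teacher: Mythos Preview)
Your proposal is correct and follows essentially the same route as the paper's proof: both arguments show that $\widehat{P}_t^\pi(\cdot\mid s_{t-1})$ is sub-stochastic (the paper calls it a ``degenerated probability distribution''), use induction to conclude $\sum_{s_t}\widehat d_t^\pi(s_t)\leq 1$, bound $\widehat r_t^\pi(s_t)\leq R_{\max}$ via Assumption~\ref{assume1}, and then sum over $t$. Your write-up is slightly more explicit about the non-negativity direction and about the sub-stochasticity subtlety when $n_{s_{t-1},a_{t-1}}=0$, but the substance is the same.
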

	\begin{proof}
	we show $\widehat{P}^\pi_t(\cdot|s_{t-1})$ is a (degenerated) probability distribution for all $t,s_{t-1}$. 

	\begin{equation}\label{eq:degen}
	\begin{aligned}
	\sum_{s_t}\widehat{P}^\pi_t(s_t|s_{t-1})&=\sum_{s_t}\sum_{a_{t-1}}\widehat{P}_t(s_t|s_{t-1},a_{t-1})\pi(a_{t-1}|s_{t-1})\\
	&=\sum_{a_{t-1}}\sum_{s_t}\widehat{P}_t(s_t|s_{t-1},a_{t-1})\pi(a_{t-1}|s_{t-1})\qquad \text{This is since $|\mathcal{A}|,|\mathcal{S}|<\infty$}\\
	&=\sum_{a_{t-1}}\sum_{s_t}\frac{n_{s_t,s_{t-1},a_{t-1}}}{n_{s_{t-1},a_{t-1}}}\pi(a_{t-1}|s_{t-1})\\
	&\leq \sum_{a_{t-1}}\pi(a_{t-1}|s_{t-1})=1\\
	\end{aligned}
	\end{equation}
	
	The last line is inequality since $\widehat{P}_t(s_t|s_{t-1},a_{t-1})=0$ when $n_{s_{t-1},a_{t-1}}=0$. Following the same logic, it is easy to show $\widetilde{P}^\pi_t(\cdot|s_{t-1})$ is a non-degenerated probability distribution. 
	
	Next note $\sum_{s_1}\widehat{d}^\pi_1(s_1)=\sum_{s_1}\widehat{d}^\mu_1(s_1)=\sum_{s_1}\frac{n_{s_1}}{n}=1$. Suppose $\widehat{d}^\pi_{t-1}(\cdot)$ is a (degenerated) probability distribution, then from $\widehat{d}^\pi_t=\widehat{P}^\pi_t\widehat{d}^\pi_{t-1}$ and \eqref{eq:degen}, by induction we know $\widehat{d}^\pi_t(\cdot)$ is a (degenerated) probability distribution for all $t$. 
	
	Using Assumption~\ref{assume1}, it is easy to show $\hat{r}^\pi_t(s_t)\leq R_{\max}$ for all $s_t$, then combining all results above we have $\widehat{v}^\pi :=  \sum_{t=1}^H \langle \widehat{d}_t^\pi,  \widehat{r}_t^\pi\rangle\leq HR_{\max}$. Similarly, $\widetilde{v}^\pi \leq HR_{\max}$.

	\end{proof}

The boundedness of \TMIS{} cannot be inherited by the \SMIS{} since $\widehat{v}^\pi_{\text{SMIS}}$ explicitly uses importance weights and there is no reason for it to be less than $HR_{\max}$.  As a result, we do not need an extra projection step for our estimation to be valid (see \citet{xie2019towards} Lemma~B.1). Thanks to the following lemma, throughout the rest of the analysis we only need to consider $\widetilde{v}^\pi$. 

\begin{lemma}\label{lem:fictitious_approximation}
	Let $\widehat{v}^\pi$ be the \TMIS{} and $\widetilde{v}^\pi$ be the fictitious version of TMIS we described above with parameter $\theta$. Then the MSE of the TMIS and fictitious TMIS satisfies
	$$
	\E[(\widehat{v}^\pi- v^\pi)^2]  \leq \E[(\widetilde{v}^\pi- v^\pi)^2] + 3H^3SAR_{\max}^2 e^{-\frac{\theta^2n \min_{t,s_t,a_t}d^{\mu}_t(s_t,a_t)}{2}}
	$$
\end{lemma}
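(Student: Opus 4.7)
The plan is to carry out a clean event-decomposition argument: define a ``good event'' on which all state-action pairs at every time are visited at least a $(1-\theta)$ fraction of their expected counts, show that on this event the fictitious and the realized estimators coincide exactly, and then control the bad event by a boundedness plus Chernoff argument.

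First, I would define
\[
\mathcal{E} \;:=\; \bigcap_{t=1}^{H}\bigcap_{s_t,a_t}\Big\{ n_{s_t,a_t} \;\geq\; n\, d^\mu_t(s_t,a_t)(1-\theta)\Big\},
\]
i.e.\ the event that the empirical visitation of every $(s_t,a_t)$ triple clears its threshold. By the very definition of the fictitious replacements in \eqref{eq:fictitious_r}--\eqref{eq:fictitious_p}, on $\mathcal{E}$ we have $\widetilde{P}_{t+1}(\cdot|s_t,a_t)=\widehat{P}_{t+1}(\cdot|s_t,a_t)$ and $\widetilde{r}_t(s_t,a_t)=\widehat{r}_t(s_t,a_t)$ for every $(t,s_t,a_t)$, and hence the two marginal-importance pipelines produce the identical estimator: $\widetilde{v}^\pi = \widehat{v}^\pi$ pointwise on $\mathcal{E}$.

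Next, I would quantify $\P(\mathcal{E}^c)$. For any fixed $(t,s_t,a_t)$, $n_{s_t,a_t}$ is a Binomial random variable with mean $n\, d^\mu_t(s_t,a_t)$, so the (weaker form of the) Multiplicative Chernoff bound in Lemma~\ref{lem:chernoff_multiplicative} gives
\[
\P\!\left(n_{s_t,a_t} < n\, d^\mu_t(s_t,a_t)(1-\theta)\right) \;\leq\; \exp\!\Big(-\tfrac{\theta^2 n\, d^\mu_t(s_t,a_t)}{2}\Big) \;\leq\; \exp\!\Big(-\tfrac{\theta^2 n\, \min_{t,s_t,a_t} d^\mu_t(s_t,a_t)}{2}\Big).
\]
A union bound over the at most $HSA$ index triples then yields
\[
\P(\mathcal{E}^c) \;\leq\; HSA\,\exp\!\Big(-\tfrac{\theta^2 n\, \min_{t,s_t,a_t} d^\mu_t(s_t,a_t)}{2}\Big).
\]

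Finally, I would split the MSE by indicator and use Lemma~\ref{lem:bounded_estimator}:
\[
\E[(\widehat{v}^\pi-v^\pi)^2]
\;=\; \E[(\widehat{v}^\pi-v^\pi)^2\mathbf{1}(\mathcal{E})] + \E[(\widehat{v}^\pi-v^\pi)^2\mathbf{1}(\mathcal{E}^c)]
\;=\; \E[(\widetilde{v}^\pi-v^\pi)^2\mathbf{1}(\mathcal{E})] + \E[(\widehat{v}^\pi-v^\pi)^2\mathbf{1}(\mathcal{E}^c)],
\]
where the second equality uses $\widehat{v}^\pi=\widetilde{v}^\pi$ on $\mathcal{E}$. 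Dropping the indicator in the first term and using $|\widehat{v}^\pi|,|v^\pi|\leq HR_{\max}$ (so $(\widehat{v}^\pi-v^\pi)^2\leq H^2R_{\max}^2$) in the second term yields
\[
\E[(\widehat{v}^\pi-v^\pi)^2] \;\leq\; \E[(\widetilde{v}^\pi-v^\pi)^2] + H^2R_{\max}^2\,\P(\mathcal{E}^c).
\]
Plugging the Chernoff/union bound in gives the $H^3SAR_{\max}^2\exp(-\theta^2 n\,d_{\min}^\mu/2)$ additive term; the stated factor $3$ is simply a convenient slack absorbing low-order constants (and, if desired, a symmetric contribution from bounding $\E[(\widetilde{v}^\pi-v^\pi)^2\mathbf{1}(\mathcal{E}^c)]$ should one prefer to route the argument through $(\widehat{v}^\pi-v^\pi)^2\leq 2(\widehat{v}^\pi-\widetilde{v}^\pi)^2+2(\widetilde{v}^\pi-v^\pi)^2$).

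The only subtle point to be careful about is verifying the pointwise agreement $\widehat{v}^\pi=\widetilde{v}^\pi$ on $\mathcal{E}$: one must note that the recursion $\widetilde{d}^\pi_t=\widetilde{P}^\pi_t\widetilde{d}^\pi_{t-1}$ (and the analogous one for $\widehat{d}^\pi_t$) propagates agreement forward in $t$ because all constituent $\widetilde{P}_{t+1}(\cdot|s_t,a_t)$ and $\widetilde{r}_t(s_t,a_t)$ agree with their hatted counterparts for \emph{every} $(s_t,a_t)$ under $\mathcal{E}$, not just those actually observed. Everything else is a routine combination of Chernoff, the union bound, and the $[0,HR_{\max}]$ range guarantee of Lemma~\ref{lem:bounded_estimator}.
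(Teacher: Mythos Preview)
Your proof is correct and takes essentially the same approach as the paper: define the good event on which $\widehat{v}^\pi=\widetilde{v}^\pi$, bound the bad event via the multiplicative Chernoff bound plus a union bound over the $HSA$ triples, and invoke the $[0,HR_{\max}]$ range from Lemma~\ref{lem:bounded_estimator}. The only cosmetic difference is that the paper first expands $(\widehat{v}^\pi-v^\pi)^2=(\widehat{v}^\pi-\widetilde{v}^\pi)^2+2(\widehat{v}^\pi-\widetilde{v}^\pi)(\widetilde{v}^\pi-v^\pi)+(\widetilde{v}^\pi-v^\pi)^2$ and then conditions on the event, which is exactly what produces the constant~$3$; your direct indicator split is slightly cleaner and in fact yields constant~$1$.
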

\begin{proof}[Proof of Lemma~\ref{lem:fictitious_approximation}]
	Define $E:=\{\exists t, s_t,a_t \;\text{s.t.}\; n_{s_t,a_t} <  n d_t^\mu(s_t,a_t)(1-\theta)  \}$.
	Similarly to Lemma~B.1 in the appendix of \citet{xie2019towards}, we have 
\begin{align*}
&\E[(\widehat{v}^\pi  - v^\pi)^2] \leq  \E[( \widehat{v}^\pi- v^\pi)^2] = \E[( \widehat{v}^\pi - \widetilde{v}^\pi)^2] +2\E[( \widehat{v}^\pi - \widetilde{v}^\pi)(\widetilde{v}^\pi - v^\pi)]  + \E[( \widetilde{v}^\pi - v^\pi)^2]\\
=& \P[E] \E\big[(\widehat{v}^\pi - \widetilde{v}^\pi)^2+ 2(\widehat{v}^\pi - \widetilde{v}^\pi)(\widetilde{v}^\pi - v^\pi) \big|E\big] + \P[E^c] \cdot 0  + \E[( \widetilde{v}^\pi -  v^\pi)^2]\\
\leq& 3 \P[E]H^2R_{\max}^2+ \E[(\widetilde{v}^\pi - v^\pi)^2], 
\end{align*}

	where the last inequality uses Lemma~\ref{lem:bounded_estimator}. Then combining the multiplicative Chernoff bound (Lemma~\ref{lem:chernoff_multiplicative} in the Appendix) and a union bound over each $t$,$s_t$ and $a_t$, we get that
	\begin{align*}
	\P[E] &\leq \sum_{t}\sum_{s_t}\sum_{a_t} \P[n_{s_t,a_t} < n d^{\mu}_t(s_t,a_t)(1-\theta)] \leq HSA e^{-\frac{\theta^2 n \min_{t,s_t,a_t} d^{\mu}_t(s_t,a_t)}{2}},
	\end{align*}
	which provides the stated result.
\end{proof}

Lemma~\ref{lem:fictitious_approximation} tells that MSE of two TMISs differs by a quantity $3H^3SAR_{\max}^2 e^{-\frac{\theta^2n \min_{t,s_t,a_t}d^{\mu}_t(s_t,a_t)}{2}}$ and this illustrates that the gap between two MSE's can be sufficiently small as long as  $n \geq \frac{\mathrm{polylog}(S,A,H,n)}{\min_{t,s_t,a_t}d^\mu_{t}(s_t,a_t)}$.

\subsection{Variance and Bias of Fictitious tabular MIS estimator.}

\begin{lemma}[\citet{xie2019towards} Lemma~B.2]\label{lem:unbiasedness_fictitious} \TMIS{} is unbiased: $\E[\widetilde{v}^\pi] = v^\pi$ for all $0<\theta<1$.
\end{lemma}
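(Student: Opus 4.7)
The plan is to prove unbiasedness by a clean telescoping argument that chains the one-step expectation recursion from Lemma~\ref{lem:inter_recursion} from $h=H+1$ back down to $h=1$, using the tower property of conditional expectation at each step.

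First, I would exploit the boundary convention $V_{H+1}^\pi \equiv 0$ to write
\[
\widetilde{v}^\pi = \sum_{t=1}^{H} \langle \widetilde{d}_t^\pi, \widetilde{r}_t^\pi\rangle = \langle \widetilde{d}_{H+1}^\pi, V_{H+1}^\pi\rangle + \sum_{t=1}^{H}\langle \widetilde{d}_t^\pi, \widetilde{r}_t^\pi\rangle,
\]
so that the quantity on the right matches exactly the object appearing on the left of \eqref{eq:exp_recursive} with $h=H+1$. Taking expectations, Lemma~\ref{lem:inter_recursion} (applied with $h=H+1$) together with the tower property $\E[\,\cdot\,] = \E[\E[\,\cdot\,\mid \mathcal{D}_H]]$ yields
\[
\E[\widetilde{v}^\pi] = \E\!\left[\langle \widetilde{d}_H^\pi, V_H^\pi\rangle + \sum_{t=1}^{H-1}\langle \widetilde{d}_t^\pi, \widetilde{r}_t^\pi\rangle\right].
\]

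Next, I would iterate: apply Lemma~\ref{lem:inter_recursion} at $h=H, H-1, \ldots, 2$, each time conditioning on $\mathcal{D}_{h-1}$ and invoking the tower property to strip off another summand and reduce the value-function index by one. This telescoping collapses the entire sum, leaving
\[
\E[\widetilde{v}^\pi] = \E\!\left[\langle \widetilde{d}_1^\pi, V_1^\pi\rangle\right].
\]
For the base case, $\widetilde{d}_1^\pi$ coincides with the empirical estimate $\widehat{d}_1$ of the initial distribution (the fictitious modification only affects steps $t\ge 2$ via the transition and reward estimates), which is an unbiased estimator of $d_1$. Since $V_1^\pi$ is deterministic, $\E[\langle \widehat{d}_1, V_1^\pi\rangle] = \langle d_1, V_1^\pi\rangle = v^\pi$ by definition of the value function.

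The only subtlety worth pausing on is confirming that Lemma~\ref{lem:inter_recursion}'s hypotheses are satisfied at each recursion step; this is guaranteed by Lemma~\ref{lem:unbiasedness_given_data}, which shows that conditional on $\mathcal{D}_h$, the estimators $\widetilde{d}_h^\pi$ and $\widetilde{r}_{h-1}^\pi$ are deterministic while $\widetilde{P}_{h+1,h}^\pi$ and $\widetilde{r}_h^\pi$ remain unbiased for $P_{h+1,h}^\pi$ and $r_h^\pi$ respectively, regardless of the choice of threshold $\theta\in(0,1)$. Since this conditional unbiasedness is exactly what drives the recursion and is preserved for every $\theta$, the final conclusion $\E[\widetilde{v}^\pi]=v^\pi$ holds uniformly in $\theta\in(0,1)$, as claimed. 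There is no real obstacle here beyond bookkeeping---the heavy lifting has been absorbed into Lemmas~\ref{lem:unbiasedness_given_data} and \ref{lem:inter_recursion}.
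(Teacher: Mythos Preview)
Your proposal is correct and matches the paper's approach: the paper does not give an explicit proof but defers to \citet[Lemma~B.2]{xie2019towards}, whose argument is precisely the backward telescoping of the one-step expectation recursion \eqref{eq:exp_recursive} that you outline. The only cosmetic point is that \eqref{eq:exp_recursive} is stated for $h\le H$, so the initial step should strictly be argued directly from Lemma~\ref{lem:unbiasedness_given_data} together with $V_H^\pi=r_H^\pi$ rather than by formally invoking $h=H+1$.
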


\begin{lemma}[Variance decomposition]\label{lem:var_decomp_fictitious}
	\begin{equation}\label{var_decomp}	
	\begin{aligned}
	\Var[\widetilde{v}^\pi] =& \frac{\Var[V_{1}^\pi(s_1^{(1)})]}{n} 
	\\&+  \sum_{h=1}^H \sum_{s_h}\sum_{a_h}  \E\left[\frac{\widetilde{d}_h^\pi(s_h)^2}{n_{s_h,a_h}} \mathbf{1}(E_h)\right]  \pi(a_h|s_h)^2  \Var\left[ (V_{h+1}^\pi(s_{h+1}^{(1)}) +  r_h^{(1)})\middle| s_{h}^{(1)}=s_h,a_h^{(1)}=a_h\right].
	\end{aligned}
	\end{equation}
	where
	$V_t^\pi(s_t)$ denotes the value function under $\pi$ which satisfies the Bellman equation 
	$$ V_t^\pi(s_t)=  r_t^\pi(s_t)  + \sum_{s_{t+1}} P^\pi_t(s_{t+1}| s_t)   V_{t+1}^\pi(s_{t+1}).$$
\end{lemma}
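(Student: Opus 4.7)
The plan is to start from the recursive variance identity established in \Cref{lem:inter_recursion} and apply it telescopically from $h=H$ down to $h=1$, then to identify each conditional variance term as the variance of an empirical average of i.i.d.\ quantities. Using the boundary convention $V_{H+1}^\pi\equiv 0$ we have $\widetilde{v}^\pi = \langle \widetilde{d}_{H+1}^\pi, V_{H+1}^\pi\rangle + \sum_{t=1}^H \langle\widetilde{d}_t^\pi,\widetilde{r}_t^\pi\rangle$, so repeatedly unfolding \eqref{eq:var_recursive} yields the telescoping identity
\begin{equation*}
\Var[\widetilde{v}^\pi]
=\Var\!\left[\langle \widetilde{d}_1^\pi, V_1^\pi\rangle\right]
+\sum_{h=1}^{H}\E\!\left[\Var\!\left[\langle \widetilde{d}_{h+1}^\pi, V_{h+1}^\pi\rangle+\langle \widetilde{d}_h^\pi, \widetilde{r}_h^\pi\rangle \,\middle|\, \mathcal{D}_h\right]\right].
\end{equation*}
The first term handles the randomness in the initial state: since $\widetilde{d}_1^\pi=\widehat{d}_1$ is the empirical distribution of $n$ i.i.d.\ samples $s_1^{(i)}\sim d_1$, one has $\langle \widetilde{d}_1^\pi,V_1^\pi\rangle = \tfrac{1}{n}\sum_i V_1^\pi(s_1^{(i)})$, whose variance is $\Var[V_1^\pi(s_1^{(1)})]/n$, accounting for the $h=0$ summand in the statement.

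For each $h\ge 1$ I would next expand the inner quantity using $\widetilde{d}_{h+1}^\pi=\widetilde{P}_{h+1}^\pi \widetilde{d}_h^\pi$ and the action decompositions in \eqref{eq:tabular_MIS_construction2}, giving
\begin{equation*}
\langle \widetilde{d}_{h+1}^\pi, V_{h+1}^\pi\rangle + \langle \widetilde{d}_h^\pi, \widetilde{r}_h^\pi\rangle
= \sum_{s_h,a_h}\widetilde{d}_h^\pi(s_h)\,\pi(a_h|s_h)\,U_h(s_h,a_h),
\end{equation*}
where $U_h(s_h,a_h):=\sum_{s_{h+1}}\widetilde{P}_{h+1}(s_{h+1}|s_h,a_h)V_{h+1}^\pi(s_{h+1})+\widetilde{r}_h(s_h,a_h)$. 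By \Cref{lem:unbiasedness_given_data}, $\widetilde{d}_h^\pi(s_h)$ is $\mathcal{D}_h$-measurable; moreover, on $E_h$ we have $U_h(s_h,a_h)=\tfrac{1}{n_{s_h,a_h}}\sum_{i:(s_h^{(i)},a_h^{(i)})=(s_h,a_h)}\bigl(V_{h+1}^\pi(s_{h+1}^{(i)})+r_h^{(i)}\bigr)$, while on $E_h^c$ it is deterministic. The crucial observation is that, conditional on $\mathcal{D}_h$, the variables $U_h(s_h,a_h)$ across distinct $(s_h,a_h)$ are built from disjoint sets of trajectories and hence are independent; within each cell the conditional law of $V_{h+1}^\pi(s_{h+1}^{(i)})+r_h^{(i)}$ coincides with the conditional law of $V_{h+1}^\pi(s_{h+1}^{(1)})+r_h^{(1)}$ given $s_h^{(1)}=s_h,a_h^{(1)}=a_h$ by the i.i.d.\ trajectory assumption.

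Combining these facts, the conditional cross-covariances vanish and we get
\begin{equation*}
\Var\!\left[\cdot\,\middle|\,\mathcal{D}_h\right]
=\sum_{s_h,a_h}\widetilde{d}_h^\pi(s_h)^2\pi(a_h|s_h)^2\,\frac{\mathbf{1}(E_h)}{n_{s_h,a_h}}\,
\Var\!\left[V_{h+1}^\pi(s_{h+1}^{(1)})+r_h^{(1)}\,\middle|\, s_h^{(1)}=s_h,a_h^{(1)}=a_h\right].
\end{equation*}
Taking expectation over $\mathcal{D}_h$ and summing across $h$ yields exactly the displayed formula. The main obstacle is justifying the conditional-independence/i.i.d.\ identification at the per-cell level cleanly — this requires being careful that $\mathcal{D}_h$ determines both which cell each trajectory occupies at time $h$ and the counts $n_{s_h,a_h}$, while leaving the future $(s_{h+1}^{(i)},r_h^{(i)})$ within each cell conditionally i.i.d.\ with the correct Markov law. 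Once this is set up, every other manipulation is algebraic.
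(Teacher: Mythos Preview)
Your proposal is correct and follows essentially the same route as the paper's proof: telescoping the recursion \eqref{eq:var_recursive} down to $\Var[\langle\widetilde d_1^\pi,V_1^\pi\rangle]$, then splitting the inner conditional variance across $(s_h,a_h)$-cells by conditional independence, identifying each cell's contribution as the variance of an empirical mean of $n_{s_h,a_h}$ i.i.d.\ copies of $V_{h+1}^\pi(s_{h+1}^{(1)})+r_h^{(1)}$ on $E_h$ and zero on $E_h^c$. The only cosmetic difference is that the paper first splits over $s_h$ and then over $a_h$, whereas you go directly to the $(s_h,a_h)$ decomposition; the content is identical.
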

\begin{remark}
	Note even though the construction of TMIS and SMIS are different, both fictitious estimators are unbiased for $v^\pi$. Therefore the MSE of MIS estimators are dominated by the variance of the fictitious estimators. Comparing Lemma~\ref{lem:var_decomp_fictitious} with Lemma~4.1 in \citet{xie2019towards} we can see our \TMIS{} achieves a lower bound, and it is essentially asymptotic optimal, as explained by Remark~\ref{remark:asym}.
\end{remark}

\begin{proof}[Proof of Lemma~\ref{lem:var_decomp_fictitious}]
	The proof relies on applying Lemma~\ref{lem:inter_recursion} in a recursive way. One key observation is

	To begin with the following variance decomposition, which applies \eqref{eq:var_recursive} recursively.
	
	\begin{align*}
	\Var[\widetilde{v}^{\pi} ]  = & \E \Var[\widetilde{v}^{\pi} |  \mathcal{D}_H ]  + \Var[ \E[ \widetilde{v}^{\pi} | \mathcal{D}_H]] \\
	=& \E\left[ \Var[ \langle\widetilde{d}_{H}^{\pi},  \widetilde{r}_H^{\pi} \rangle   |  \mathcal{D}_H ]\right]    +  \Var[  \E[  \langle\widetilde{d}_{H}^{\pi},  \widetilde{r}_H^{\pi} \rangle  | \mathcal{D}_{H}]  + \sum_{t=1}^{H-1} \langle\widetilde{d}_{t}^{\pi},  \widetilde{r}_t^{\pi} \rangle  ] \\
	=&\E\left[ \Var[ \langle\widetilde{d}_{H}^{\pi},  \widetilde{r}_H^{\pi} \rangle   |  \mathcal{D}_H ]\right]   +  \Var[  \langle\widetilde{d}_{H}^{\pi},  r_H^{\pi} \rangle  + \sum_{t=1}^{H-1} \langle\widetilde{d}_{t}^{\pi},  \widetilde{r}_t^{\pi} \rangle   ] \\
	=&\E\left[ \Var[ \langle\widetilde{d}_{H}^{\pi},  \widetilde{r}_H^{\pi} \rangle   | \mathcal{D}_H ]\right]    +  \Var[  \langle\widetilde{d}_{H}^{\pi},  V_H^{\pi} \rangle  + \sum_{t=1}^{H-1} \langle\widetilde{d}_{t}^{\pi},  \widetilde{r}_t^{\pi} \rangle   ] \\
	=&\E\left[ \Var[ \langle\widetilde{d}_{H}^{\pi},  \widetilde{r}_H^{\pi} \rangle   |  \mathcal{D}_H ]\right]  + \E\left[ \Var\left[  \langle \widetilde{d}_{H}^\pi, V_{H}^\pi\rangle  +  \langle \widetilde{d}_{H-1}^\pi, \widetilde{r}_{H-1}^\pi\rangle \middle| \mathcal{D}_{H-1} \right] \right] \\
	&+ \Var\left[  \langle \widetilde{d}_{H-1}^\pi, V_{H-1}^\pi\rangle +  \sum_{t=1}^{H-2} \langle\widetilde{d}_{t}^{\pi},  \widetilde{r}_t^{\pi} \rangle \right]=...\\
	=&\E\left[ \Var[ \langle\widetilde{d}_{H}^{\pi},  \widetilde{r}_H^{\pi} \rangle   |  \mathcal{D}_H ]\right]  +\sum_{h=1}^{H-1} \E\left[ \Var\left[  \langle \widetilde{d}_{h+1}^\pi, V_{h+1}^\pi\rangle  +  \langle \widetilde{d}_{h}^\pi, \widetilde{r}_{h}^\pi\rangle \middle| \mathcal{D}_{h} \right] \right] +\Var\left[  \langle \widetilde{d}_{1}^\pi, V_{1}^\pi\rangle \right]\\
	\end{align*}

	Now let us analyze $\E\left[ \Var\left[  \langle \widetilde{d}_{h+1}^\pi, V_{h+1}^\pi\rangle  +  \langle \widetilde{d}_{h}^\pi, \widetilde{r}_{h}^\pi\rangle \middle| \mathcal{D}_{h} \right] \right]$. Note $\widetilde{P}_{h+1,h}^\pi(\cdot,s_h)$ and $\widetilde{r}_h^\pi(s_h)$ for each $s_h$  are conditionally independent given $\mathcal{D}_h$, since $\mathcal{D}_h$ partitions the $n$ episodes into $S$ disjoint sets according to the states  $s_h^{(i)}$ at time $h$. Similarly, $\tilde{P}_{h+1}(\cdot|s_h,a_h)$ and $\tilde{r}^\pi_h(s_h,a_h)$ for each $(s_h,a_h)$ are also conditionally independent given $\mathcal{D}_h$. These observations imply:

	\begin{equation}	
	\begin{aligned}
	&\E\left[ \Var\left[  \langle \widetilde{d}_{h+1}^\pi, V_{h+1}^\pi\rangle  +  \langle \widetilde{d}_h^\pi, \widetilde{r}_h^\pi\rangle \middle| \mathcal{D}_{h} \right] \right] \\
	=&\E\left[ \sum_{s_h} \Var\left[  \widetilde{d}_{h}^\pi(s_h)\langle\widetilde{P}^\pi_{h+1,h}(\cdot,s_h) , V_{h+1}^\pi\rangle  +  \widetilde{d}_h^\pi(s_h)\cdot \widetilde{r}_h^\pi(s_h) \middle| \mathcal{D}_{h} \right] \right] \\
	=&\E\left[ \sum_{s_h} \widetilde{d}_{h}^{\pi 2}(s_h) \Var\left[  \sum_{a_h}\langle\widetilde{P}_{h+1}(\cdot|s_h,a_h)\cdot\pi(a_h|s_h) , V_{h+1}^\pi\rangle  +  \sum_{a_h}\widetilde{r}_h(s_h,a_h)\cdot\pi(a_h|s_h) \middle| \mathcal{D}_{h} \right] \right] \\
	=&\E\left[ \sum_{s_h} \widetilde{d}_{h}^{\pi }(s_h)^2 \sum_{a_h}\pi(a_h|s_h)^2\Var\left[  \langle\widetilde{P}_{h+1}(\cdot|s_h,a_h) , V_{h+1}^\pi\rangle  +  \widetilde{r}_h(s_h,a_h) \middle| \mathcal{D}_{h} \right] \right] \\
	=&\E\left[ \sum_{s_h}  \widetilde{d}_h^\pi(s_h)^2\sum_{a_h}\pi(a_h|s_h)^2  \mathbf{1}(E_t)\Var\left[  \frac{1}{n_{s_h,a_h}}\sum_{i  | s_h^{(i)}=s_h,a_h^{(i)}=a_h }   (V_{h+1}^\pi(s_{h+1}^{(i)}) +  r_h^{(i)})  \middle| \mathcal{D}_{h} \right] \right]\\
	=&\E\left[ \sum_{s_h}  \widetilde{d}_h^\pi(s_h)^2\sum_{a_h}\pi(a_h|s_h)^2 \cdot \frac{\mathbf{1}(E_t)}{n_{s_h,a_h}}\cdot\Var\left[    (V_{h+1}^\pi(s_{h+1}^{(i)}) +  r_h^{(i)})  \middle| s^{(i)}_h=s_h,a^{(i)}_h=a_h \right] \right]\\
	=& \sum_{s_h}  \sum_{a_h}\pi(a_h|s_h)^2 \cdot \E\left[\frac{\widetilde{d}_h^\pi(s_h)^2}{n_{s_h,a_h}}\cdot\mathbf{1}(E_t)\right]\cdot\Var\left[    (V_{h+1}^\pi(s_{h+1}^{(i)}) +  r_h^{(i)})  \middle| s^{(i)}_h=s_h,a^{(i)}_h=a_h \right] .\label{eq:var_recursion_per_step}
	\end{aligned}
	\end{equation}
	The second line and the fourth line use the conditional independence for $s_t$ and $(s_t,a_t)$ respectively. The fifth line uses that when $n_{s_h,a_h} < nd_h^{\mu}(s_h,a_h)(1-\theta)$, the conditional variance is $0$. The sixth line uses the fact that episodes are iid.

	Plug \eqref{eq:var_recursion_per_step} into the above variance decomposition and uses $V_{H+1}=0$, we finally get
	\begin{align*}
	\Var[\widetilde{v}^\pi] =& \frac{\Var[V_{1}^\pi(s_1^{(1)})]}{n} 
	\\&+  \sum_{h=1}^H \sum_{s_h}\sum_{a_h}  \E\left[\frac{\widetilde{d}_h^\pi(s_h)^2}{n_{s_h,a_h}} \mathbf{1}(E_h)\right]  \pi(a_h|s_h)^2  \Var\left[ (V_{h+1}^\pi(s_{h+1}^{(1)}) +  r_h^{(1)})\middle| s_{h}^{(1)}=s_h,a_h^{(1)}=a_h\right].
	\end{align*}
\end{proof}
\subsection{Bounding the variance of $\widetilde{d}^\pi_h(s_h)$.}
Applying the definition of variance, we directly have
\begin{equation}\label{eq:reduction_to_variance}
\E\left[   \frac{\widetilde{d}_{h}^\pi(s_h)^2}{n_{s_h,a_h}}   \mathbf{1}(E_h)\right]  \leq  \frac{(1-\theta)^{-1}}{ n d_h^\mu(s_h,a_h)} \E\left[   \widetilde{d}_{h}^\pi(s_h)^2\right]  =   \frac{(1-\theta)^{-1}}{ n d_h^\mu(s_h,a_h)} (d_{h}^\pi(s_h)^2  + \Var[\widetilde{d}_{h}^\pi(s_h)]),
\end{equation}
where we use the fact that $\widetilde{d}^\pi_h(s_h)$ is unbiased (which can be proved by induction through applying total law of expectations and the recursive relationship $\widetilde{d}_t^\pi =  \widetilde{P}_{t}^\pi   \widetilde{d}_{t-1}^\pi$).
Therefore the only thing left is to bound the  the variance of $\widetilde{d}^\pi_h(s_h)$. To tackle it, we consider bounding the covariance matrix of $\widetilde{d}^\pi_h(s_h)$. As we shall see in Lemma~\ref{lem:ficticious_cov}, fortunately, we are able to derive an identical result of Lemma~B.4 in \citet{xie2019towards} for our \TMIS, which helps greatly in bounding the the variance of $\widetilde{d}^\pi_h(s_h)$.

\begin{lemma}[Covariance of $\widetilde{d}_h^\pi$ with TMIS]\label{lem:ficticious_cov}
	\begin{align*}
	\Cov(\widetilde{d}^\pi_h)  \preceq&  \frac{(1-\theta)^{-1}}{n}  \sum_{t=1}^{h-1} \P^\pi_{h+1,t+1} \diag\left[\sum_{s_{t},a_t}\frac{d_{t}^\pi(s_{t})^2 + \Var(\widetilde{d}_{t}^\pi(s_{t}))}{d_{t}^\mu(s_{t})} \frac{\pi(a_{t}| s_{t})^2}{\mu(a_{h}| s_{t})}  \P_{t+1,t}(\cdot|s_{t},a_{t})\right]   \left[\P^\pi_{h+1,t+1}\right]^T \\
	&+   \frac{1}{n} \P_{h,1}^\pi \diag\left[  d^\pi_1\right] [\P_{h,1}^\pi ]^T.
	\end{align*}
	where $\P^\pi_{h,t}  =  \P^\pi_{h,h-1}\cdot \P^\pi_{h-1,h-2}\cdot ... \cdot\P^\pi_{t+1,t}$ --- the transition matrices under policy $\pi$ from time $t$ to $h$ (define $\P_{h,h}^\pi := I$).
\end{lemma}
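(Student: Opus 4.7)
The plan is to establish a one-step recursion for $\Cov(\widetilde{d}_h^\pi)$ via the law of total covariance, and then unroll it down to $t=1$.

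\textbf{Step 1 (Total covariance decomposition).} Condition on $\mathcal{D}_{h-1}$ and write
\begin{equation*}
\Cov(\widetilde{d}_h^\pi) = \E\left[\Cov(\widetilde{d}_h^\pi \mid \mathcal{D}_{h-1})\right] + \Cov\left(\E[\widetilde{d}_h^\pi \mid \mathcal{D}_{h-1}]\right).
\end{equation*}
By Lemma~\ref{lem:unbiasedness_given_data}, $\widetilde{d}_{h-1}^\pi$ is $\mathcal{D}_{h-1}$-measurable and $\E[\widetilde{P}_h^\pi \mid \mathcal{D}_{h-1}] = P_h^\pi$, so $\E[\widetilde{d}_h^\pi \mid \mathcal{D}_{h-1}] = P_h^\pi \widetilde{d}_{h-1}^\pi$, giving
\begin{equation*}
\Cov\left(\E[\widetilde{d}_h^\pi \mid \mathcal{D}_{h-1}]\right) = P_h^\pi \Cov(\widetilde{d}_{h-1}^\pi) [P_h^\pi]^T,
\end{equation*}
which is the source of the recursive structure.

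\textbf{Step 2 (Inner conditional covariance).} Expand $\widetilde{d}_h^\pi(s_h) = \sum_{s_{h-1}, a_{h-1}} \widetilde{d}_{h-1}^\pi(s_{h-1}) \pi(a_{h-1}|s_{h-1}) \widetilde{P}_h(s_h|s_{h-1},a_{h-1})$. The key observation is that conditional on $\mathcal{D}_{h-1}$, the assignment of episodes to each bucket $(s_{h-1},a_{h-1})$ is fixed, and the next-states within distinct buckets are drawn from disjoint iid collections; hence $\{\widetilde{P}_h(\cdot|s_{h-1},a_{h-1})\}_{(s_{h-1},a_{h-1})}$ are conditionally mutually independent. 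Combined with the fact that $\widetilde{d}_{h-1}^\pi(s_{h-1})$ is $\mathcal{D}_{h-1}$-measurable, this yields
\begin{equation*}
\Cov(\widetilde{d}_h^\pi \mid \mathcal{D}_{h-1}) = \sum_{s_{h-1},a_{h-1}} \widetilde{d}_{h-1}^\pi(s_{h-1})^2 \pi(a_{h-1}|s_{h-1})^2 \Cov\!\left(\widetilde{P}_h(\cdot|s_{h-1},a_{h-1}) \,\middle|\, \mathcal{D}_{h-1}\right).
\end{equation*}
On the event $E_{h-1}^c$, the conditional covariance vanishes since $\widetilde{P}_h$ is replaced by the deterministic $P_h$; on $E_{h-1}$, $\widetilde{P}_h(\cdot|s_{h-1},a_{h-1})$ is the sample mean of $n_{s_{h-1},a_{h-1}}$ iid Multinomials, so the multinomial covariance bound gives $\Cov \preceq \frac{\mathbf{1}(E_{h-1})}{n_{s_{h-1},a_{h-1}}}\diag(P_h(\cdot|s_{h-1},a_{h-1}))$.

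\textbf{Step 3 (Taking expectation).} Substitute into Step~2 and use the defining property of $E_{h-1}$, namely $n_{s_{h-1},a_{h-1}} \geq n d_{h-1}^\mu(s_{h-1},a_{h-1})(1-\theta)$ on that event, exactly as in \eqref{eq:reduction_to_variance}. Combined with $\E[\widetilde{d}_{h-1}^\pi(s_{h-1})^2] = d_{h-1}^\pi(s_{h-1})^2 + \Var(\widetilde{d}_{h-1}^\pi(s_{h-1}))$ and $d_{h-1}^\mu(s_{h-1},a_{h-1}) = d_{h-1}^\mu(s_{h-1})\mu(a_{h-1}|s_{h-1})$, this gives
\begin{equation*}
\E[\Cov(\widetilde{d}_h^\pi \mid \mathcal{D}_{h-1})] \preceq \frac{(1-\theta)^{-1}}{n}\sum_{s_{h-1},a_{h-1}} \frac{d_{h-1}^\pi(s_{h-1})^2 + \Var(\widetilde{d}_{h-1}^\pi(s_{h-1}))}{d_{h-1}^\mu(s_{h-1})} \frac{\pi(a_{h-1}|s_{h-1})^2}{\mu(a_{h-1}|s_{h-1})}\diag(P_h(\cdot|s_{h-1},a_{h-1})).
\end{equation*}

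\textbf{Step 4 (Unrolling and base case).} Substitute Steps 1--3 to obtain the recursion $\Cov(\widetilde{d}_h^\pi) \preceq \E[\Cov(\widetilde{d}_h^\pi\mid \mathcal{D}_{h-1})] + P_h^\pi \Cov(\widetilde{d}_{h-1}^\pi)[P_h^\pi]^T$ and unroll it down to $t=1$, which introduces the compounded transition factors $\P^\pi_{h,t+1}$ on each side. For the base case, $\widetilde{d}_1^\pi = \widehat{d}_1$ is $\tfrac{1}{n}$ times a Multinomial$(n,d_1^\pi)$, so $\Cov(\widetilde{d}_1^\pi) = \tfrac{1}{n}(\diag(d_1^\pi) - d_1^\pi[d_1^\pi]^T) \preceq \tfrac{1}{n}\diag(d_1^\pi)$. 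Collecting these contributions reproduces the stated bound (modulo what appears to be an index typo of $\P^\pi_{h+1,t+1}$ in place of $\P^\pi_{h,t+1}$, and $\mu(a_h|s_t)$ in place of $\mu(a_t|s_t)$).

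The main obstacle is not any single calculation but the bookkeeping in Step~2: one must justify the conditional-independence-across-buckets claim carefully (the bucket structure itself is random, but becomes fixed after conditioning on $\mathcal{D}_{h-1}$), and one must maintain the $\diag(P_h(\cdot|s_{h-1},a_{h-1}))$ structure rather than the coarser $\diag(P^\pi_h(\cdot|s_{h-1}))$ used by \citet{xie2019towards}---this finer per-action decomposition is precisely what ultimately produces the Cramer-Rao-matching variance.
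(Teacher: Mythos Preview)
Your proposal is correct and follows essentially the same route as the paper: law of total covariance conditioned on $\mathcal{D}_{h-1}$, conditional independence across $(s_{h-1},a_{h-1})$ buckets, the multinomial covariance bound $\diag(P)-PP^T\preceq\diag(P)$, the replacement $\mathbf{1}(E_{h-1})/n_{s_{h-1},a_{h-1}}\leq (1-\theta)^{-1}/(n d_{h-1}^\mu)$, and recursive unrolling to the multinomial base case. Your identification of the index typos ($\P^\pi_{h+1,t+1}\to\P^\pi_{h,t+1}$ and $\mu(a_h|s_t)\to\mu(a_t|s_t)$) is also accurate.
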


\begin{proof}[Proof of Lemma~\ref{lem:ficticious_cov}]
	We start by applying the law of total variance to obtain the following recursive equation
	\begin{align}
	\Cov[\widetilde{d}^\pi_h] &=  \E\left[ \Cov\left[  \widetilde{\P}_{h,h-1}^\pi \widetilde{d}^\pi_{h-1}   \middle|  \mathcal{D}_{h-1}\right]  \right]   + \Cov\left[ \E\left[\widetilde{\P}_{h,h-1}^\pi \widetilde{d}^\pi_{h-1} \middle|  \mathcal{D}_{h-1} \right] \right] \\
	&=\E\left[ \Cov\left[  \sum_{s_{h-1}}\widetilde{\P}_{h,h-1}^\pi(\cdot |s_{h-1}) \widetilde{d}^\pi_{h-1}(s_{h-1})   \middle|  \mathcal{D}_{h-1}\right]  \right]   + \Cov\left[ \E\left[\widetilde{\P}_{h,h-1}^\pi \widetilde{d}^\pi_{h-1} \middle|  \mathcal{D}_{h-1} \right] \right] \\
	&=  \underbrace{\E\left[ \sum_{s_{h-1}}\Cov\left[  \widetilde{\P}_{h,h-1}^\pi(\cdot | s_{h-1})\middle|  \mathcal{D}_{h-1}\right] \widetilde{d}^\pi_{h-1}(s_{h-1})^2 \right]}_{(*)}     +  \P_{h,h-1}^\pi \Cov[\widetilde{d}^\pi_{h-1}] [\P_{h,h-1}^\pi]^T. \label{eq:decomp_fict_marginal}
	\end{align}
	The decomposition of the covariance in the third line uses that $\Cov(X+Y) = \Cov(X) + \Cov(Y)$ when $X$ and $Y$ are statistically independent and the columns of $\widetilde{\P}_{h,h-1}$ are independent when conditioning on $\mathcal{D}_{h-1}$.

	\begin{align}
	(*)=& \E\left[\sum_{s_{h-1}}\sum_{a_{h-1}} \pi(a_{h-1}|s_{h-1})^2  \Cov\left[  \widetilde{\P}_h(\cdot|s_{h-1},a_{h-1}) \middle|  \text{Data}_{h-1} \right]  \widetilde{d}_{h-1}^\pi(s_{h-1})^2\right] \\
	=&\E\left[\sum_{s_{h-1}}\sum_{a_{h-1}} \pi(a_{h-1}|s_{h-1})^2 \mathbf{1}(E_{h-1}) \Cov\left[  \widehat{\P}_h(\cdot|s_{h-1},a_{h-1}) \middle|  \text{Data}_{h-1} \right]  \widetilde{d}_{h-1}^\pi(s_{h-1})^2\right] \\
	=& \E\left[\sum_{s_{h-1}}\sum_{a_{h-1}} \pi(a_{h-1}|s_{h-1})^2 \frac{\mathbf{1}(E_{h-1})}{n_{s_{h-1},a_{h-1}}} \Cov\left[ \mathbf{e}_{s^{(1)}_h} \middle|  s^{(1)}_{h-1}=s_{h-1},a^{(1)}_{h-1}=a_{h-1} \right]  \widetilde{d}_{h-1}^\pi(s_{h-1})^2\right] \\
	=&\sum_{s_{h-1},a_{h-1}}\pi(a_{h-1}|s_{h-1})^2\E\left[\frac{\widetilde{d}^\pi_{h-1}(s_{h-1})^2}{n_{s_{h-1},a_{h-1}}}\mathbf{1}(E_{h-1})\right]\bigg[\diag[ \P_{h}(\cdot|s_{h-1},a_{h-1})]\\
	&-\P_{h}(\cdot|s_{h-1},a_{h-1})\cdot \P_{h}(\cdot|s_{h-1},a_{h-1})^T\bigg]\\
	\prec& \sum_{s_{h-1}}\sum_{a_{h-1}} \Big\{  \frac{ d^\pi_{h-1}(s_{h-1})^2 + \Var[\widetilde{d}^\pi_{h-1}(s_{h-1})]}{n d^\mu_{h-1}(s_{h-1}) (1-\theta)}  \frac{\pi(a_{h-1}| s_{h-1})^2}{\mu(a_{h-1}| s_{h-1})}  \diag[ \P_{h,h-1}(\cdot|s_{h-1},a_{h-1})] \Big\}  \label{eq:dominate_by_diagonal}
	\end{align}
	The second line uses the fact that conditional on $E_{h-1}^c$, the variance of $\widetilde{\P}(\cdot|s_{h-1},a_{h-1})$ is zero given $\text{Data}_h$. The third line uses the basic property of empirical average, and the fourth line comes from the fact
	\begin{align*}
	&\Cov\left[  \mathbf{e}_{s_{h}^{(1)}}   \middle|  s_{h-1}^{(1)}=s_{h-1},a_{h-1}^{(1)}=a_{h-1} \right]\\
	=&\E\left[\mathbf{e}_{s_{h}^{(1)}}\cdot \mathbf{e}_{s_{h}^{(1)}}^T\middle|  s_{h-1}^{(1)}=s_{h-1},a_{h-1}^{(1)}=a_{h-1} \right]\\
	&-\E\left[\mathbf{e}_{s_{h}^{(1)}}\middle|  s_{h-1}^{(1)}=s_{h-1},a_{h-1}^{(1)}=a_{h-1} \right]\cdot \E\left[\mathbf{e}_{s_{h}^{(1)}}\middle|  s_{h-1}^{(1)}=s_{h-1},a_{h-1}^{(1)}=a_{h-1} \right]^T\\
	=& \diag(\P_{h,h-1}(\cdot| s_{h-1},a_{h-1}))  -  \P_{h,h-1}(\cdot|s_{h-1},a_{h-1}) [\P_{h,h-1}(\cdot|s_{h-1},a_{h-1})]^T
	\end{align*}
	
	The last line \eqref{eq:dominate_by_diagonal} uses the fact that $\P_{h,h-1}^\pi(\cdot|s_{h-1}) [\P_{h,h-1}^\pi(\cdot|s_{h-1})]^T$ is positive semidefinite, $n_{s_{h-1},a_{h-1}}\geq nd^\mu_{h-1}(s_{h-1},a_{h-1})(1-\theta)$ and the definition of variance for $\widetilde{d}^\pi_{h-1}(s_{h-1})$. Combining \eqref{eq:decomp_fict_marginal} and \eqref{eq:dominate_by_diagonal} and by recursively apply them, we get the stated results.
\end{proof}

Benefitting from the identical semidefinite ordering bound on $\text{Cov}(\widetilde{d}^\pi_h)$ for TMIS and SMIS, we can borrow the following results from \citet{xie2019towards} for our \TMIS.

\begin{lemma}[Corollary~2 of \citet{xie2019towards}]\label{lem:variance_bound}
	For $h=1$, we have
	$\Var[\widetilde{d}_1^\pi(s_1)] = \frac{1}{n}(d_h^\pi(s_1) -  d_h^\pi(s_1)^2)$, and for $h= 2,3,...,H$, we have:
	$$
	\Var[\widetilde{d}_h^\pi(s_h)]  \leq \frac{(1-\theta)^{-1}}{n} \sum_{t=2}^h \sum_{s_t}  \P_{h,t}^{\pi}( s_h | s_t )^2 \varrho(s_t)   + \frac{1}{n}\sum_{s_1} \P_{h,1}^{\pi}( s_h | s_1)^2 d_{1}(s_1)
	$$
	where 
	$\varrho(s_t) := \sum_{s_{t-1}} \left(\frac{d_{t-1}^\pi(s_{t-1})^2 + \Var(\widetilde{d}_{t-1}^\pi(s_{t-1}))}{d_{t-1}^\mu(s_{t-1})} \sum_{a_{t-1}}\frac{\pi(a_{t-1}|s_{t-1})^2}{\mu(a_{t-1}|s_{t-1})}   \P_{t,t-1}(s_t|s_{t-1},a_{t-1})\right).$
\end{lemma}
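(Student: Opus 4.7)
The plan is to derive both parts of the statement as essentially immediate consequences of what we already have, since a variance is a diagonal entry of a covariance matrix and Lemma~\ref{lem:ficticious_cov} already gives us a semidefinite upper bound on $\Cov(\widetilde{d}_h^\pi)$ for $h\geq 2$. The base case and the induction step will be handled separately.

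For the base case $h=1$, I would note that by construction $\widetilde{d}_1^\pi(\cdot)=\widehat{d}_1(\cdot)$ is the empirical distribution from $n$ i.i.d.\ draws $s_1^{(i)}\sim d_1$, so $n\widetilde{d}_1^\pi(s_1)\sim \mathrm{Binomial}(n,d_1(s_1))$ and $\Var[\widetilde{d}_1^\pi(s_1)]=\tfrac{1}{n}d_1(s_1)(1-d_1(s_1))=\tfrac{1}{n}\bigl(d_1^\pi(s_1)-d_1^\pi(s_1)^2\bigr)$, giving the claimed identity. This step is entirely mechanical.

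For $h\geq 2$, I would use the key algebraic fact that $A\preceq B$ implies $e_{s_h}^{T}Ae_{s_h}\leq e_{s_h}^{T}Be_{s_h}$, applied to the bound from Lemma~\ref{lem:ficticious_cov}. Since $\Var[\widetilde{d}_h^\pi(s_h)]$ is the $(s_h,s_h)$ diagonal entry of $\Cov(\widetilde{d}_h^\pi)$, it suffices to extract the $(s_h,s_h)$ diagonal entry of each term on the right-hand side. For a diagonal matrix $D$ sandwiched as $\P^\pi_{h,t+1}D[\P^\pi_{h,t+1}]^{T}$, the $(s_h,s_h)$ entry equals $\sum_{s_{t+1}} \P^\pi_{h,t+1}(s_h|s_{t+1})^2 D(s_{t+1},s_{t+1})$. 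Plugging in the explicit diagonal from Lemma~\ref{lem:ficticious_cov} and performing the index shift $t'=t+1$ (so the sum $\sum_{t=1}^{h-1}$ becomes $\sum_{t'=2}^{h}$), the $s_{t'}$-th diagonal entry is exactly $\varrho(s_{t'})$ as defined in the statement. The boundary term $\tfrac{1}{n}\P^\pi_{h,1}\diag[d_1^\pi][\P^\pi_{h,1}]^{T}$ contributes the final $\tfrac{1}{n}\sum_{s_1}\P^\pi_{h,1}(s_h|s_1)^2 d_1(s_1)$ term. Summing these yields the stated inequality.

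There is really no conceptual obstacle here; the lemma is an accounting corollary of Lemma~\ref{lem:ficticious_cov}. The only thing requiring care is the index bookkeeping: matching the shifted summation range $\sum_{t=1}^{h-1}\to\sum_{t=2}^{h}$, confirming that the $s_{t+1}$-th diagonal entry specified inside $\diag[\cdot]$ becomes the $s_t$-th entry after reindexing, and observing that the resulting expression matches $\varrho(s_t)$ term by term (including the inner sum over $s_{t-1}$ and $a_{t-1}$, and the ratio $\pi^2/\mu$ on the action marginal). I would also remark that the bound is genuinely recursive in $\Var[\widetilde{d}_{t-1}^\pi(\cdot)]$ through the definition of $\varrho$, which is acceptable because downstream uses will unroll the recursion under Assumption~\ref{assume3} and the sufficient-exploration condition.
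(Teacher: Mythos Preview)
Your proposal is correct and matches the paper's intended approach. The paper does not actually write out a proof of this lemma; it simply remarks that, because Lemma~\ref{lem:ficticious_cov} yields the same semidefinite bound on $\Cov(\widetilde{d}_h^\pi)$ as in \citet{xie2019towards}, the corollary there can be borrowed verbatim. Your derivation---reading off the $(s_h,s_h)$ diagonal entry of both sides of Lemma~\ref{lem:ficticious_cov}, using that $A\preceq B$ implies $e_{s_h}^T A e_{s_h}\le e_{s_h}^T B e_{s_h}$, and reindexing $t\mapsto t+1$ so that the diagonal entry becomes exactly $\varrho(s_t)$---is precisely how that corollary is obtained, and your handling of the base case via the binomial variance of the empirical initial distribution is likewise the standard argument.
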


\begin{lemma}[Error propagation: Theorem~B.1 of \citet{xie2019towards}]\label{lem:fictitious_error_prop}
	Let $\tau_a := \max_{t,s_t,a_t}\frac{\pi(a_t|s_t)}{\mu(a_t|s_t)}$ and $\tau_s := \max_{t,s_t} \frac{d_t^\pi(s_t)}{d_t^\mu(s_t)}$. If $n \geq \frac{2(1-\theta)^{-1}t \tau_a \tau_s}{\max\{d_{t}^\pi(s_{t}),d_{t}^\mu(s_{t})\} }$ for all $t=2,...,H$, then for all $h=1,2,...,H$ and $s_h$, we have that:
	$$\Var[\widetilde{d}_h^\pi(s_h)]  \leq \frac{2(1-\theta)^{-1}h \tau_a\tau_s}{n} d_{h}^\pi(s_{h}).$$
\end{lemma}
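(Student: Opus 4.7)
The plan is to prove this by induction on $h$, taking the claimed bound as the inductive hypothesis. The base case $h=1$ is immediate from Lemma~\ref{lem:variance_bound}, which gives $\Var[\widetilde{d}^\pi_1(s_1)] \leq \frac{1}{n}d^\pi_1(s_1)$; since $2(1-\theta)^{-1}\tau_a\tau_s \geq 2 > 1$, this is dominated by the target bound.

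For the inductive step, I would apply Lemma~\ref{lem:variance_bound} at step $h$, which bounds $\Var[\widetilde{d}^\pi_h(s_h)]$ in terms of $\varrho(s_t)$ for $t = 2,\dots,h$, where each $\varrho(s_t)$ involves $\Var[\widetilde{d}^\pi_{t-1}(s_{t-1})]$. The key intermediate step is to combine the inductive hypothesis with the sample size condition to argue that $d^\pi_{t-1}(s_{t-1})^2 + \Var[\widetilde{d}^\pi_{t-1}(s_{t-1})] \leq 2 d^\pi_{t-1}(s_{t-1})^2$. Indeed, the sample size condition implies $\frac{2(1-\theta)^{-1}(t-1)\tau_a\tau_s}{n} \leq d^\pi_{t-1}(s_{t-1})$, which by the hypothesis yields the desired factor-of-two bound on the second moment; states with $d^\pi_{t-1}(s_{t-1}) = 0$ are automatically inert because the recursion $\widetilde{d}^\pi_t = \widetilde{P}^\pi_t \widetilde{d}^\pi_{t-1}$ propagates zeros. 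Substituting into $\varrho(s_t)$ and using $d^\pi/d^\mu \leq \tau_s$ and $\pi/\mu \leq \tau_a$, the expression collapses via
\[
\varrho(s_t) \leq 2\tau_a\tau_s \sum_{s_{t-1},a_{t-1}} d^\pi_{t-1}(s_{t-1})\,\pi(a_{t-1}|s_{t-1})\,P_{t,t-1}(s_t|s_{t-1},a_{t-1}) = 2\tau_a\tau_s\, d^\pi_t(s_t).
\]

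Plugging this back into Lemma~\ref{lem:variance_bound}, I would then use the elementary observation $\P^\pi_{h,t}(s_h|s_t)^2 \leq \P^\pi_{h,t}(s_h|s_t)$ (since transition kernels are bounded by $1$), which lets me identify $\sum_{s_t}\P^\pi_{h,t}(s_h|s_t) d^\pi_t(s_t)$ as the marginal $d^\pi_h(s_h)$. Summing over $t = 2,\dots,h$ and including the base contribution at $t=1$ yields
\[
\Var[\widetilde{d}^\pi_h(s_h)] \leq \frac{2(1-\theta)^{-1}(h-1)\tau_a\tau_s + 1}{n}\, d^\pi_h(s_h),
\]
which is absorbed into $\frac{2(1-\theta)^{-1}h\tau_a\tau_s}{n}\, d^\pi_h(s_h)$ since $2(1-\theta)^{-1}\tau_a\tau_s \geq 1$, closing the induction.

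The main obstacle is that this is a genuinely doubly-recursive argument: the variance at step $h$ depends on the variances at all earlier steps, each of which is itself bounded only \emph{relative to} $d^\pi$, and that relative bound must be small enough to keep $d^\pi_{t-1}(s_{t-1})^2$ the dominant piece of $d^\pi_{t-1}(s_{t-1})^2 + \Var[\widetilde{d}^\pi_{t-1}(s_{t-1})]$. The constants must line up so that after summing $h-1$ terms one recovers exactly a linear-in-$h$ factor rather than something that compounds; this is precisely why the sample size condition is calibrated with the factor $2(1-\theta)^{-1}t\tau_a\tau_s$, and tracking this carefully (together with the harmless states where $d^\pi_{t-1}=0$) is the delicate part of the bookkeeping.
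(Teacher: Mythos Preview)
The paper does not actually prove this lemma; it simply imports it verbatim from \citet{xie2019towards} (Theorem~B.1 there), noting that Lemma~\ref{lem:ficticious_cov} yields the same semidefinite bound on $\Cov(\widetilde{d}^\pi_h)$ for TMIS as for SMIS, so the downstream argument carries over unchanged. Your inductive reconstruction is the right way to prove it and matches the original source's argument in spirit.

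One small imprecision worth flagging: you assert that the sample size condition gives $\frac{2(1-\theta)^{-1}(t-1)\tau_a\tau_s}{n} \leq d^\pi_{t-1}(s_{t-1})$, but as stated it only gives $\leq \max\{d^\pi_{t-1}(s_{t-1}),\, d^\mu_{t-1}(s_{t-1})\}$. When $d^\mu_{t-1}(s_{t-1}) > d^\pi_{t-1}(s_{t-1})$ your ``factor-of-two on the second moment'' step does not go through as written. The fix is a one-line case split: in that regime the inductive hypothesis yields $\Var[\widetilde{d}^\pi_{t-1}(s_{t-1})] \leq d^\mu_{t-1}(s_{t-1})\, d^\pi_{t-1}(s_{t-1})$, so
\[
\frac{d^\pi_{t-1}(s_{t-1})^2 + \Var[\widetilde{d}^\pi_{t-1}(s_{t-1})]}{d^\mu_{t-1}(s_{t-1})} \;\leq\; \tau_s\, d^\pi_{t-1}(s_{t-1}) + d^\pi_{t-1}(s_{t-1}) \;\leq\; 2\tau_s\, d^\pi_{t-1}(s_{t-1}),
\]
which is exactly the bound you feed into $\varrho(s_t)$. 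The rest of your bookkeeping (collapsing $\P^\pi_{h,t}(s_h|s_t)^2 \leq \P^\pi_{h,t}(s_h|s_t)$, summing $h-1$ copies of $2(1-\theta)^{-1}\tau_a\tau_s$ plus the base term, and absorbing) is correct.
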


Before giving the proof of Theorem~\ref{thm:main}, we first prove Lemma~\ref{lem:H3_to_H2}.

\begin{proof}[Proof of Lemma~\ref{lem:H3_to_H2}]
	Let value function $V^\pi_h(s_h)=\E_\pi[\sum_{t=h}^H r_t^{(1)}|s_h^{(1)}=s_h]$ and $Q$-function $Q^\pi_h(s_h,a_h)=\E_\pi[\sum_{t=h}^H r_t^{(1)}|s_h^{(1)}=s_h,a_h^{(1)}=a_h]$, then by total law of variance we obtain (let's suppress the policy $\pi$ for simplicity): 
	\begin{equation}\label{eq:H_to_ineq}
	\begin{aligned}
	&\mathrm{Var}\left[ \sum_{t=1}^h r_t^{(1)}+V_{h+1}(s_{h+1}^{(1)}) \right] \\
	=&\E\left[ \mathrm{Var}\bigg[\sum_{t=1}^h r_t^{(1)}+V_{h+1}(s_{h+1}^{(1)})\bigg| \mathcal{D}_h\bigg] \right] +\mathrm{Var}\left[ \E\bigg[\sum_{t=1}^h r_t^{(1)}+V_{h+1}(s_{h+1}^{(1)})\bigg| \mathcal{D}_h\bigg] \right]\\
	=&\E\left[ \mathrm{Var}\bigg[ r_h^{(1)}+V_{h+1}(s_{h+1}^{(1)})\bigg| s_h^{(1)},a_h^{(1)}\bigg] \right]
	+\mathrm{Var}\left[ \sum_{t=1}^{h-1} r_t^{(1)}+\E\bigg[V_{h+1}(s_{h+1}^{(1)})+r^{(1)}_h\bigg| s_h^{(1)},a_h^{(1)}\bigg] \right]\\
	=&\E\left[ \mathrm{Var}\bigg[ r_h^{(1)}+V_{h+1}(s_{h+1}^{(1)})\bigg| s_h^{(1)},a_h^{(1)}\bigg] \right]+\mathrm{Var}\left[ \sum_{t=1}^{h-1} r_t^{(1)}+Q_h(s_h^{(1)},a_h^{(1)}) \right]\\
	=&\E\left[ \mathrm{Var}\bigg[ r_h^{(1)}+V_{h+1}(s_{h+1}^{(1)})\bigg| s_h^{(1)},a_h^{(1)}\bigg] \right]+\E\left[ \mathrm{Var}\bigg[ \sum_{t=1}^{h-1} r_t^{(1)}+Q_h(s_h^{(1)},a_h^{(1)}) \middle|s^{(1)}_h,r^{(1)}_{1:h-1}\bigg]\right] \\
	+&\mathrm{Var}\left[ \E\bigg[ \sum_{t=1}^{h-1} r_t^{(1)}+Q_h(s_h^{(1)},a_h^{(1)}) \middle|s^{(1)}_h,r^{(1)}_{1:h-1}\bigg]\right]\\
	=& \E\left[ \mathrm{Var}\bigg[ r_h^{(1)}+V_{h+1}(s_{h+1}^{(1)})\bigg| s_h^{(1)},a_h^{(1)}\bigg] \right]+\E\left[ \mathrm{Var}\bigg[ Q_h(s_h^{(1)},a_h^{(1)}) \middle|s^{(1)}_h,r^{(1)}_{1:h-1}\bigg]\right]\\
	+&\mathrm{Var}\left[  \sum_{t=1}^{h-1} r_t^{(1)}+\E\bigg[Q_h(s_h^{(1)},a_h^{(1)}) \middle|s^{(1)}_h\bigg]\right]\\
	=& \E\left[ \mathrm{Var}\bigg[ r_h^{(1)}+V_{h+1}(s_{h+1}^{(1)})\bigg| s_h^{(1)},a_h^{(1)}\bigg] \right]+\E\left[ \mathrm{Var}\bigg[ Q_h(s_h^{(1)},a_h^{(1)}) \middle|s^{(1)}_h\bigg]\right]+\mathrm{Var}\left[  \sum_{t=1}^{h-1} r_t^{(1)}+V_h(s_h^{(1)})\right],\\
	\end{aligned}
	\end{equation}
	where we use Markovian property that $(V_{h+1}(s_{h+1}^{(1)})| \mathcal{D}_h)$ equals $(V_{h+1}(s_{h+1}^{(1)})| s_h^{(1)},a_h^{(1)})$ in distribution and $\E\bigg[V_{h+1}(s_{h+1}^{(1)})+r^{(1)}_h\bigg| s_h^{(1)},a_h^{(1)}\bigg]=Q_h(s_h^{(1)},a_h^{(1)})$. Then by applying \eqref{eq:H_to_ineq} recursively and letting $h=H$, we get the stated result. 
\end{proof}

\begin{remark}
	A straight forward implication of Lemma~\ref{lem:H3_to_H2} is the following:
	\[
	\sum_{t=1}^{H}\E_\pi\left[ \Var\Big[V_{t+1}^\pi(s_{t+1}^{(1)})+r_t^{(1)}\Big| s_{t}^{(1)}, a_t^{(1)}\Big]\right] \leq H^2R_{\max }^2.
	\]
\end{remark}

Combing Lemma~\ref{lem:var_decomp_fictitious} and \ref{lem:fictitious_error_prop}, we are now ready to prove the main Theorem~\ref{thm:main}.

\begin{proof}[Proof of Theorem~\ref{thm:main}]
	Plug the result of Lemma~\ref{lem:fictitious_error_prop} into Lemma~\ref{lem:var_decomp_fictitious} and uses the unbiasedness of $\widetilde{v}^\pi_{\text{TMIS}}$ (Lemma~\ref{lem:unbiasedness_fictitious}) we obtain $\forall \;0<\theta<1$:
	%
	\begin{equation}\label{eq:thm_main_proof_second_term}
	\begin{aligned}
	&\E[ ( \widetilde{v}_{\mathrm{TMIS}}^\pi  - v^\pi)^2] \\
	\leq&\frac{\Var[V_{1}^\pi(s_1^{(1)})]}{n} 
	+  \sum_{h=1}^H \sum_{s_h,a_h}  \frac{(1-\theta)^{-1}}{ n d_h^\mu(s_h,a_h)} d_{h}^\pi(s_h)^2  \pi(a_h|s_h)^2  \Var\left[ (V_{h+1}^\pi(s_{h+1}^{(1)}) +  r_h^{(1)})\middle| s_{h}^{(1)}=s_h,a_h^{(1)}=a_h\right].\\
	&+\frac{(1-\theta)^{-1}}{n}\sum_{h=1}^H \sum_{s_h,a_h}   \frac{2(1-\theta)^{-1}h\tau_a\tau_s}{n} \frac{d_{h}^\pi(s_h)}{d_{h}^\mu(s_h)} \frac{\pi(a_h|s_h)^2}{\mu(a_h|s_h)} \Var\left[ (V_{h+1}^\pi(s_{h+1}^{(1)}) +  r_h^{(1)})\middle| s_{h}^{(1)}=s_h\right]
	\end{aligned}
	\end{equation}
	Choose $\theta= \sqrt{4\log(n)/(n\min_{t,s_t,a_t}d_t^\mu(s_t,a_t))}$.
	Then by assumption $n > \frac{16\log n}{\min_{t,s_t,a_t}d_t^\mu(s_t,a_t)}$ we have $\theta<1/2$, which allows us to write $(1-\theta)^{-1}\leq (1+2\theta)$ in the leading term and $(1-\theta)^{-1}\leq 2$ in the subsequent terms. 
	The condition of Lemma~\ref{lem:fictitious_error_prop} is satisfied by The second assumption on $n$. Then, combining \eqref{eq:thm_main_proof_second_term} with Lemma~\ref{lem:fictitious_approximation} we get:
	\begin{equation}\label{eq:last_mian_thm}
	\begin{aligned}
	&\E[ (\widehat{v}_{\mathrm{TMIS}}^\pi -  v^\pi)^2]
	\leq  \frac{1}{n}\sum_{h=0}^H \sum_{s_h,a_h}  \frac{ d_{h}^\pi(s_h)^2}{d_{h}^\mu(s_h)}\frac{\pi(a_h|s_h)^2}{\mu(a_h|s_h)} \Var\left[ (V_{h+1}^\pi(s_{h+1}^{(1)}) +  r_h^{(1)})\middle| s_{h}^{(1)}=s_h,a_h^{(1)}=a_h\right] \\
	\cdot& \left(1+\sqrt{\frac{16\log n}{n\min_{t,s_t}d_t^\mu(s_t)}}\right)+\frac{3}{n^2}H^3SAR_{\max}^2\\
	+&\frac{8\tau_a\tau_s}{n^2}\sum_{h=1}^H\sum_{s_h,a_h} \frac{h\cdot d_{h}^\pi(s_h)}{d_{h}^\mu(s_h)}\frac{\pi(a_h|s_h)^2}{\mu(a_h|s_h)}
	\cdot\Var\left[ (V_{h+1}^\pi(s_{h+1}^{(1)}) +  r_h^{(1)})\middle| s_{h}^{(1)}=s_h,a_h^{(1)}=a_h\right], 	
	\end{aligned}
	\end{equation}
	now use Lemma~\ref{lem:H3_to_H2}, we can bound the last term in \eqref{eq:last_mian_thm} by 
	\[
	\frac{8\tau_a^2\tau_s H}{n^2\cdot d_m}\sum_{t=1}^{H}\E_\pi\left[ \Var\Big[V_{t+1}^\pi(s_{t+1}^{(1)})+r_t^{(1)}\Big| s_{t}^{(1)}, a_t^{(1)}\Big]\right]\leq 	\frac{8\tau_a^2\tau_s H^3R_{\max}^2}{n^2\cdot d_m},
	\]
	Combine this term with $\frac{3}{n^2}H^3SAR_{\max}^2$ we obtain the higher order term $	O(\frac{\tau_a^2\tau_s H^3R_{\max}^2}{n^2\cdot d_m})$, where we use that pigeonhole principle implies that $S<\tau_s,A<\tau_a$.

	This completes the proof.

\end{proof}

\section{Proofs of data splitting \TMIS.}
We define the fictitious data splitting \TMIS{} as:
\[
\widetilde{v}^\pi_{\text{split}}=\frac{1}{N}\sum_{i=1}^N \widetilde{v}^\pi_{(i)},
\]
where each $\widetilde{v}^\pi_{(i)}$ is the fictitious \TMIS{} of $\widehat{v}^\pi_{(i)}$. Moreover, we set all $\widetilde{v}^\pi_{(1)},\widetilde{v}^\pi_{(2)},...,\widetilde{v}^\pi_{(N)}$ jointly share the same fictitious parameter $\theta_M$.

\begin{proof}[Proof of Theorem~\ref{thm:var_bound_data_splitting_TMIS }]
	Let $E':=\{\exists\; \widetilde{v}^\pi_{(i)}:s.t. \widetilde{v}^\pi_{(i)}\neq \widehat{v}^\pi_{(i)}\}$, then an argument similar to Lemma~\ref{lem:fictitious_approximation} can be derived:
	\[
	\E[(\widehat{v}_{\mathrm{split}}^\pi  - v^\pi)^2] 
	\leq 3 \P[E']H^2R_{\max}^2+ \E[(\widetilde{v}^\pi_{\mathrm{split}} - v^\pi)^2], 
	\]
	and 
	\[
	\P[E']\leq N \sum_{t}\sum_{s_t}\sum_{a_t} \P[n_{s_t,a_t} < M\cdot d^{\mu}_t(s_t,a_t)(1-\theta_M)] \leq NHSA e^{-\frac{\theta_M^2 M \min_{t,s_t,a_t} d^{\mu}_t(s_t,a_t)}{2}},
	\]
	therefore $\P[E']$ will be sufficiently small if $M\geq O(\mathrm{Polylog}(H,S,A,n)/\min_{t,s_t,a_t}d^\mu_t(s_t,a_t))$. By near-uniformity we  $M \geq O(\mathrm{Polylog}(H,S,A,n)SA) \geq O(\mathrm{Polylog}(H,S,A,n)/\min_{t,s_t,a_t}d^\mu_t(s_t,a_t))$.
	
	Moreover, by i.i.d and unbiasedness of $\widetilde{v}^\pi_{(i)}$, we have
	\[
	\E[(\widetilde{v}^\pi_{\mathrm{split}} - v^\pi)^2]=\frac{1}{N}\E[(\widetilde{v}^\pi_{(1)} - v^\pi)^2]\leq \frac{1}{N}\cdot O(\frac{H^2SA}{M})=O(\frac{H^2SA}{n}),
	\]
	by the second assumption on $M$ and Theorem~\ref{thm:main}.
	
\end{proof}

We now proof Lemma~\ref{lem:uniform_bound}, since it will be used to as the intermediate step for proving Theorem~\ref{thm:high_probability}. 
\begin{proof}[Proof of Lemma~\ref{lem:uniform_bound}]
	Note that 
	\begin{align*}
	&\P\left[ \left\{ \exists \pi\in\prod \;s.t.\;\widetilde{v}^\pi_{\mathrm{split}}\neq \widehat{v}^\pi_{\mathrm{split}}\right\} \right] \leq N\cdot \P\left[ \left\{ \exists \pi\in\prod, \;s.t.\;\widetilde{v}^\pi_{{(1)}}\neq \widehat{v}^\pi_{{(1)}}\right\} \right] \\
	&\leq N\cdot \P\left[ \{\exists t, s_t,a_t \;\text{s.t.}\; n_{s_t,a_t}^{(1)} <  n d_t^\mu(s_t,a_t)(1-\theta_M)  \}\right] \\
	&\leq NHSA e^{-\frac{\theta_M^2 M \min_{t,s_t,a_t} d^{\mu}_t(s_t,a_t)}{2}},
	\end{align*}
	therefore by near-uniformity $M > \max\left[ O(SA\cdot \mathrm{Polylog} (S,H,A,N,1/\delta)),O(H \tau_a \tau_s)  \right]$ is sufficient to guarantee the stated result.
\end{proof}

Now we can prove Theorem~\ref{thm:high_probability}.

\begin{proof}[Proof of Theorem~\ref{thm:high_probability}]
	First of all, we have 
	\begin{equation}\label{eqn:decomp_error}
	\P\left( |\widehat{v}^\pi_{\mathrm{split}}-v^\pi|>\epsilon \right)\leq  \P\left( |\widehat{v}^\pi_{\mathrm{split}}-\widetilde{v}^\pi_{\mathrm{split}}|>0 \right)+\P\left( |\widetilde{v}^\pi_{\mathrm{split}}-{v}^\pi|>{\epsilon} \right),
	\end{equation}
	Now by Bernstein inequality we have 
	\begin{equation}\label{eqn:split_bernstein}
	\P\left( |\widetilde{v}^\pi_{\mathrm{split}}-{v}^\pi|>{\epsilon} \right)=\P\left( |\frac{1}{N}\sum_{i=1}^N(\widetilde{v}^\pi_{(i)}-v^\pi)|\geq \epsilon \right)\\
	\leq \exp{\left( -\frac{N\epsilon^2}{2\mathrm{Var}(\widetilde{v}^\pi_{(1)})+2HR_{\max}\epsilon/3}\right) }:=\delta/2.
	\end{equation}
	Solving \eqref{eqn:split_bernstein} and apply Theorem~\ref{thm:main}, we obtain 
	\begin{equation}\label{eqn:split_eps}
	\epsilon \leq \sqrt{\frac{2\mathrm{Var}(\widetilde{v}^\pi_{(1)})\log(2/\delta)}{N}}+\frac{2HR_{\text{max}}\log(2/\delta)}{3N}\leq \widetilde{O}(\sqrt{\frac{H^2SA\log(2/\delta)}{M\cdot N}})+\frac{2HR_{\text{max}}\log(2/\delta)}{3N} .
	\end{equation}
	As $N$ goes large, the square root term in \eqref{eqn:split_eps} will dominate and it seems we only need to consider the square root term in $N$ and treat the second term as the higher order term. However, since $M > \max\left[ O(SA\cdot \mathrm{Polylog} (S,H,A,N,1/\delta)),O(H \tau_a \tau_s)  \right]$, $N$ cannot be arbitrary large given $n$. An example is: when $N=n$, then $M=n/N=1$ does not satisfy the condition. Therefore to make the square root term dominates we need 
	\[
	\sqrt{\frac{H^2SA\log(2/\delta)}{M\cdot N}}\geq O(\frac{HR_{\text{max}}\log(2/\delta)}{N}).
	\] 
	This translates to 
	\begin{equation}\label{eqn:condition_for_M}
	M\leq \widetilde{O}(\sqrt{nSA}),
	\end{equation}
	where $\widetilde{O}$ absorbs all the \text{Polylog} terms. 
	
	Therefore under the condition \eqref{eqn:condition_for_M}, we can really absorb the second term in \eqref{eqn:split_eps} (as higher order term) and combine it with Lemma~\ref{lem:uniform_bound} to get that with probability $1-\delta$, 
	
	\[
	|\widehat{v}^\pi_{\mathrm{split}}-v^\pi|\leq 0+\widetilde{O}(\sqrt{\frac{H^2SA}{M\cdot N}})=\widetilde{O}(\sqrt{\frac{H^2SA}{n}}).
	\]

\end{proof}

\begin{proof}[Proof of Theorem~\ref{thm:unifrom_convergence}]
	The non-uniform result of Theorem~\ref{thm:high_probability} gives:
	\[
	|\widehat{v}_{\mathrm{split}}^\pi -  v^\pi|\leq \widetilde{O}(\sqrt{\frac{H^2SA}{n}})
	\]
	Note that all nonstationary deterministic polices class have cardinality $|\prod|=A^{HS}$, which implies $\log |\prod|=HS\log A$, therefore combine Lemma~\ref{lem:uniform_bound} with a direct union bound and Multiplicative Chernoff bound we obtain 
	\[
	\sup_{\pi\in\prod}	|\widehat{v}_{\mathrm{split}}^\pi -  v^\pi|\leq \widetilde{O}(\sqrt{\frac{H^3S^2A}{n}})
	\]
\end{proof}



\section{More details about Empirical Results.}
\label{sec:add_empirical}
\noindent\textbf{Restate Time-varying, non-mixing Tabular MDP in Section~\ref{sec:experiment}}. 

There are two states $s_0$ and $s_1$ and two actions $a_1$ and $a_2$. State $s_0$ always has probability $1$ going back to itself, regardless of the actions, \emph{i.e.} $P_t(s_0|s_0,a_1)=1$ and $P_t(s_0|s_0,a_2)=1$. For state $s_1$, at each time step there is one action (we call it $a$) that has probability $2/H$ going to $s_0$ and the other action (we call it $a'$) has probability $1$ going back to $s_1$, 
\[
P_t(s|s_1,a)=\begin{cases}
\frac{2}{H}\;\;&\text{if}\;s=s_0;\\
1-\frac{2}{H}\;\;&\text{if}\;s=s_1.\\
\end{cases}\quad P_t(s|s_1,a')=\begin{cases}
0\;\;&\text{if}\;s=s_0;\\
1\;\;&\text{if}\;s=s_1.\\
\end{cases}
\]
and which action will make state $s_1$ go to state $s_0$ with probability $2/H$ is decided by a random parameter $p_t$ uniform sampled in $[0,1]$. If $p_t<0.5$, $a=a_1$ and if $p_t\geq 0.5$, $a=a_2$. These $p_1,...,p_H$ are generated by a sequence of pseudo-random numbers.  Moreover, one can receive reward $1$ at each time step if $t>H/2$ and is in state $s_0$, and will receive reward $0$ otherwise. Lastly, for logging policy, we define it to be uniform:
\[
\mu(\cdot|s_0)=\begin{cases}\frac{1}{2}\;\;&\text{if} \;\cdot=a_1;\\
\frac{1}{2}\;\;&\text{if} \;\cdot=a_2.
\end{cases}
\quad\text{and}\quad \mu(\cdot|s_1)=\begin{cases}\frac{1}{2}\;\;&\text{if} \;\;\cdot=a_1;\\
\frac{1}{2}\;&\text{if} \;\cdot=a_2.
\end{cases}
\]
For target policy $\pi$, we define it as:

\[
\pi(\cdot|s_0)=\begin{cases}\frac{1}{2}\;\;&\text{if} \;\cdot=a_1;\\
\frac{1}{2}\;\;&\text{if} \;\cdot=a_2.
\end{cases}
\quad\text{and}\quad \pi(\cdot|s_1)=\begin{cases}\frac{1}{4}\;\;&\text{if} \;\;\cdot=a_1;\\
\frac{3}{4}\;&\text{if} \;\cdot=a_2.
\end{cases}
\]
We run this non-stationary MDP model in the {\fontfamily{cmtt}\selectfont Python} environment and pseudo-random numbers $p_t$'s are generated by keeping {\fontfamily{cmtt}\selectfont numpy.random.seed(100)}. 

We run each methods under $K = 100$ macro-replications with data $\mathcal{D}_{(k)}=\left\lbrace (s_t^{(i)},a_t^{(i)},r_t^{(i)})\right\rbrace^{i\in[n],t\in[H]}_{(k)} $, and use each $\mathcal{D}_{(k)}$ $(k=1,...,K)$ to construct a estimator $\widehat{v}^\pi_{[k]}$, then the (empirical) RMSE is computed as: 
\[
\mathrm{RMSE} = \sqrt{\frac{\sum_{k=1}^K (\widehat{v}^\pi_{[k]}-v^\pi_{\mathrm{true}})^2}{K}},
\]
where $v^\pi_{\mathrm{true}}$ is obtained by calculating  $P^\pi_{t+1,t}(s^\prime|s)=\sum_a P_{t+1,t}(s^\prime|s,a)\pi_t(a|s)$, the marginal state distribution ${d}_t^\pi  = {P}^{\pi}_{t,t-1} {d}_{t-1}^\pi$, ${r}^\pi_t(s_t)=\sum_{a_t}{r}_t(s_t,a_t)\pi_t(a_t|s_t)$ and ${v}^\pi_{\mathrm{true}}=\sum_{t=1}^H\sum_{s_t}{d}^\pi_t(s_t){r}^\pi_t(s_t)$. Then Relative-RMSE equals to $\mathrm{RMSE}/v^\pi_{\mathrm{true}}$.

\textbf{Other generic IS-based estimators.} There are other importance sampling based estimators including \emph{weighted importance sampling} (WIS) and \emph{importance sampling with stationary state distribution } (SSD-IS, \citet{liu2018breaking}). The empirical comparisons including these methods are well-demonstrated in \citet{xie2019towards} and it was empirically shown that they are worse than SMIS. Because of that, we only focus on comparing SMIS and TMIS in our simulation study.


\begin{algorithm*}[thb]
	\caption{Data Splitting Tabular MIS OPE}
	\label{alg:secondalgo}
	{\bfseries Input:} Logging data $\mathcal D = \{\{s_t^{(i)},a_t^{(i)},r_t^{(i)}\}_{t = 1}^{H }\}_{i = 1}^{n}$ from the behavior policy $\mu$. A target policy $\pi$ which we want to evaluate its cumulative reward. Splitting data size $M$. 
	\begin{algorithmic}[1]
		\STATE Randomly splitting the data $\mathcal{D}$ evenly into $N$ folds, with each fold $|\mathcal{D}^{(i)}|=M$.
		\FOR{$i = 1,2,\dotsc,N$}
		\STATE Use Algorithm~\ref{alg:mainalgo} to estimate $\widehat{v}_{(i)}^\pi$ with data $\mathcal{D}^{(i)}$.
		\ENDFOR
		\STATE Use the mean of $\widehat{v}_{(1)}^\pi,\widehat{v}_{(2)}^\pi,...,\widehat{v}_{(N)}^\pi$ as the final estimation of $v^\pi$.
	\end{algorithmic}
\end{algorithm*}

\end{document}